\newcommand*{\matr}[1]{\mathbf{#1}}
\newcommand*{\vect}[1]{\bm{#1}}
\DeclareMathOperator*{\argmax}{arg\,max}
\let\llncsproof\proof
\let\endllncsproof\endproof
\renewenvironment{proof}[1][\proofname]{%
  \ifx#1\proofname
    \llncsproof% No optional argument passed
  \else
    \llncsproof[#1]% Optional argument passed
  \fi
}{%
  \qed%
  \endllncsproof%
}
\begin{document}
%
% \title{Bayesian Optimization of Multiple Objectives with Different Latencies}
\title{Knowledge Gradient for Multi-Objective Bayesian Optimization with Decoupled Evaluations}%\title{A Knowledge Gradient Policy for Multi-Objective Bayesian Optimization with Decoupled Evaluations}
\titlerunning{Knowledge Gradient for Decoupled Objectives}
% If the paper title is too long for the running head, you can set
% an abbreviated paper title here
%
% TODO: Add orcid's
\author{Jack M. Buckingham\inst{1}\and
Sebastian Rojas Gonzalez\inst{2} \and
Juergen Branke\inst{1}}
\authorrunning{J. M. Buckingham et al.}
% First names are abbreviated in the running head.
% If there are more than two authors, 'et al.' is used.
%
\institute{University of Warwick, Coventry, UK \and
Surrogate Modeling Lab, Gent University, Belgium} 
%Data Science Institute, Hasselt University, Belgium \and
%Warwick Business School, University of Warwick, Coventry, UK}
%
\maketitle              % typeset the header of the contribution
\begin{abstract}
Multi-objective Bayesian optimization aims to find the Pareto front of  trade-offs between a set of expensive objectives while collecting as few samples as possible. In some cases, it is possible to evaluate the objectives separately, and a different latency or evaluation cost can be associated with each objective. This decoupling of the objectives presents an opportunity to learn the Pareto front faster by avoiding unnecessary, expensive evaluations. We propose a  scalarization based knowledge gradient acquisition function which accounts for the different evaluation costs of the objectives. 
We prove asymptotic consistency of the estimator of the optimum for an arbitrary, D-dimensional, real compact search space and show empirically that the algorithm performs comparably with the state of the art and significantly outperforms versions which always evaluate both objectives.

\keywords{Bayesian optimization \and Decoupled objectives.}
\end{abstract}
\section{Introduction}
\label{sec:introduction}
Bayesian optimization (BO) is a technique for finding the global maximum of an expensive-to-evaluate objective function (e.g., high computational or financial costs) while taking as few samples as possible \cite{couckuyt2022bayesian,garnett2023bayesopt}. Data is collected either sequentially or in batches, and a Bayesian model, usually a Gaussian process (GP), is used to estimate both the objective function and the uncertainty in that estimate based on the samples collected so far. An acquisition function is then used to trade off between exploring regions of the input space with high uncertainty and exploiting regions which are already known to contain good values. In real-world problems, it is common to have multiple competing objectives. In such problems there is typically no single best solution and instead the aim is to find a set of optimal trade-offs, where improving  one objective necessarily means regressing  another. This set of optimal trade-offs in design space is referred to as the \emph{Pareto set} while its image in objective space is the \emph{Pareto front}.

The use of random scalarizations is a popular technique for solving multi-objective Bayesian optimization problems \cite{knowles2006parego,paria2020scalarizations}. At each step, a randomly chosen scalarization vector is used to convert the multi-objective problem into a single objective problem. By varying the scalarization weights throughout the optimization,  different points on the Pareto front can be eventually discovered \cite{miettinen1999moo}. Instead of using random scalarizations, a more sophisticated approach takes expectation over possible scalarizations, meaning all scalarizations are considered at each step and no time is wasted on scalarizations corresponding to parts of the Pareto front which are already well known \cite{astudillo2017makg}.

Traditionally, multi-objective BO assumes that all objectives will be evaluated at each design vector sampled - so-called \emph{coupled evaluations}. However, in practice this might not always make sense. For example, in turbine design, objectives such as efficiency are calculated using computational fluid dynamics, while mechanical stresses are calculated using finite element analysis \cite{braembussche2008turbomachinery}. Similarly, computing the efficiency of the turbine design may be more expensive than computing the cost of a given configuration. In these problems, where we can make \emph{decoupled evaluations}, there are two reasons we may want to evaluate some objectives more often than others. Firstly, some objectives may be cheaper or faster-to-calculate than others and so it may make sense to evaluate those objectives more frequently. Secondly, some objectives may be harder to learn than other objectives, and consequently these objectives require more evaluations for finding the Pareto front. 

%Two features which affect the number of samples required to learn an objective are the characteristic length scales and the level of observation noise.
%\paragraph{Problem Statement}
%A similar picture holds in the design of cars where efficiency is measured using CFD while a crashworthiness assessment requires FEA \cite{liao2008crash}.

In this paper, we tackle multi-objective optimization problems where (1) objectives can be evaluated separately; (2) objectives share resources (known as \emph{competitive decoupling}); (3) all objectives are expensive to some degree; and (4) some objectives are harder-to-learn than others. Our contributions are as follows: 
\begin{enumerate}
    \item We provide a natural extension of the \emph{multi-attribute knowledge-gradient} acquisition function \cite{astudillo2017makg} to the case where objectives are evaluated separately and have known, constant latencies / costs.
    \item We provide a theoretical guarantee that the algorithm will converge to the global optimum for a decision maker with any linear utility function.
    \item We present experiments in two classes of bi-objective problems, one where the length scales of the objectives differ and one where the variance of the observation noise differs. Our algorithm is competitive with the state-of-the-art, and we demonstrate for all algorithms that exploiting the decoupling of the objectives improves convergence rate.
    \item We demonstrate the importance of taking expectation over scalarizations instead of using a sequence of random scalarizations.
\end{enumerate}

\paragraph{Related Work}
The optimization of multiple, decoupled objectives with different latencies (also referred to as heterogeneous objectives) has received most of its attention from the evolutionary algorithms (EAs) community \cite{allmendinger2015nonuniform,chugh2018hkrvea,blank2022hetero,mamun2022hetero,wang2022hetero}. A recent review is given in \cite{allmendinger2021hetero}.
While EAs are efficient black-box optimizers, they tend to be very data-hungry. On the other hand, BO offers data-efficiency on top of black-box optimization. For instance, Loka et al. \cite{loka2022chvei} directly incorporate evaluation of the cheap objectives into a pair of hypervolume-based acquisition functions for BO. Consequently, they are evaluated many times while the acquisition function is optimized. This is only applicable when the cheap objective is extremely cheap, as is the case for an objective with a known analytical formula.

%Several BO algorithms have been developed to exploit decoupled objectives.
It is common to consider the case of decoupled objectives in information based acquisition functions \cite{hernandezlobato2016pesmo,hernandezlobato2016pesc,suzuki2020pfes}. These algorithms attempt to maximize mutual information between the Pareto front or Pareto set and the proposed observation. Another acquisition function known as the hypervolume knowledge gradient (HVKG) \cite{daulton2023hvkg} chooses the next sample to maximize the increase in the hypervolume associated with the posterior mean of the GP surrogate model. Both knowledge gradient and information based acquisition functions can naturally be extended to partial observations such as the case of decoupled objectives.

%Another variant on the theme of objectives with different latencies is the single or multi-objective optimization problem where the latency of the objective varies over the input space. In these problems the latency is usually learned alongside the objective function. A common approach is to divide a traditional acquisition function by the predicted latency for that input location \cite{snoek2012practical}, although this is known to bias the search towards cheaper objectives. This issue has been overcome in recent work by considering non-myopic acquisition functions \cite{astudillo2021multistep,lee2021nonmyopic}.
%compared with BO. %which spends more computational budget deciding where to take the next sample.
%Consequently, BO is better suited to problems where the objectives take minutes or hours to evaluate.

\section{Problem definition}
\label{sec:background}
Let \(\vect{f}^* : \mathcal{X} \to \mathbb{R}^M\) be an unknown multi-objective function on a space \(\mathcal{X} \subset \mathbb{R}^D\), whose components can be evaluated separately.\footnote{We will use \(\vect{f}^*\) to denote the unknown, true function and \(\vect{f}\) to denote the Gaussian process (GP) model.}
Let \(\{u_{\vect{\lambda}}: \vect{\lambda} \in \Lambda\}\) be a set of monotonic utility functions \(u_{\vect{\lambda}} : \mathbb{R}^M \to \mathbb{R}\), each representing a possible decision maker.
Since the utility functions are monotonic, the maximizers will be Pareto optimal solutions.
We aim to find a finite subset \(S \subset \mathcal{X}\) of Pareto optimal solutions such that the expected utility of a random decision maker choosing from \(S\) is maximized. That is, we solve
\begin{equation} \label{eq:problem-definition}
    S^* \in \argmax_{S \subset \mathcal{X},\; |S| \leq N_S} \mathbb{E}_{\vect{\lambda} \sim p(\vect{\lambda})} \left[ \max_{\vect{x} \in S} u_{\vect{\lambda}}(\vect{f}^*(\vect{x})) \right]
\end{equation}
for some \(N_S \in \mathbb{N}\).

Suppose we can make observations of \(\vect{f}^*\) according to the model
\begin{equation} \label{eq:obs-model-decoupled}
    y_m = f_m^*(\vect{x}) + \varepsilon_m, \quad \varepsilon_m \sim \mathcal{N}(0, \sigma_m^2)
\end{equation}
where \(m \in \{1, \dots, M\}\) is the component of \(\vect{f}^*\) being evaluated, \(\vect{x} \in \mathcal{X}\) is the input and \(\varepsilon_m\) represents observation noise. Suppose further that the costs, \(c_1, \dots, c_M\), of evaluating each objective are constant and known.

The problem is to iteratively choose sample locations \(\vect{x}_1, \vect{x}_2, \dots \in \mathcal{X}\) and objectives \(m_1, m_2, \dots \in \{1, \dots, M\}\) to be evaluated, from which approximate solutions \(S_1, S_2, \dots\) to Equation~\eqref{eq:problem-definition} can be derived, which converge quickly to the optimal expected utility in terms of the total sampling cost, \(C_N = \sum_{n = 1}^N c_{m_n}\).

% In this paper, without loss of generality we assume the objectives are to be maximized. For problems where some objectives should be minimized, these objectives can first be multiplied by \(-1\).
\section{Background}

\paragraph{Bayesian Optimization}
Bayesian optimization uses two ingredients: a probabilistic model of the objective and an acquisition function which is maximized to give the next sample location.

For the probabilistic model, a multi-output GP prior distribution \cite{alvarez2011multioutputgp} is placed on the set of possible objectives, and is typically paired with an observation model which assumes additive Gaussian noise
\begin{equation} \label{eq:obs-model-coupled}
    \vect{y} = \vect{f}(\vect{x}) + \vect{\varepsilon}, \quad \vect{\varepsilon} \sim \mathcal{N}(0, \Sigma).
\end{equation}
Typically the covariance matrix, \(\Sigma\), is assumed to be diagonal, meaning the noise added to the different objectives is independent.

The multi-output GP, \(\vect{f} \sim \mathcal{GP}(\vect{\mu}, \matr{K})\) is fully characterized by its vector-valued \emph{mean function} \(\vect{\mu} : \mathcal{X} \to \mathbb{R}^M\) and positive semi-definite matrix-valued \emph{covariance function} or \emph{kernel} \(\matr{K}: \mathcal{X} \times \mathcal{X} \to \mathbb{R}^{M \times M}\).
In this paper, we take \(\matr{K}\) to be diagonal-valued which, paired with a diagonal \(\Sigma\), is equivalent to modeling the objectives with independent, single-output Gaussian processes, \(f_m \sim \mathcal{GP}(\mu_m, k_{m,m})\).

When conditioned on \(N\) observations \(y_1, \dots, y_N \in \mathbb{R}\) at inputs \(\vect{x}_1, \dots, \vect{x}_N \in \mathcal{X}\) with additive Gaussian noise of variance \(\sigma^2\), the posterior distribution of a single-output GP, \(f \sim \mathcal{GP}(\mu, k)\), is another Gaussian process with mean function \(\mu_N : \mathcal{X} \to \mathbb{R}\) and covariance function \(k_N: \mathcal{X} \times \mathcal{X} \to \mathbb{R}\) given by \cite{rasmussenwilliams2006gps}
\begin{subequations} \label{eq:gp-posterior}
\begin{align}
    \mu_N(\vect{x}) &= \mu(\vect{x}) + k(\vect{x}, \matr{X}) \bigl( k(\matr{X}, \matr{X}) + \sigma^2 \bigr)^{-1} \bigl(\vect{y} - \mu(\matr{X})\bigr), \label{eq:gp-posterior-mean}\\
    k_N(\vect{x}, \vect{x}') &= k(\vect{x}, \vect{x}') - k(\vect{x}, \matr{X}) \bigl( k(\matr{X}, \matr{X}) + \sigma^2 \bigr)^{-1} k(\matr{X}, \vect{x}'). \label{eq:gp-posterior-covar}
\end{align}
\end{subequations}
Here we have written \(\matr{X}\) for the matrix formed by stacking the observation locations \(\vect{x}_1, \dots, \vect{x}_N\) and used the \(\mu(\matr{X}) \in \mathbb{R}^N\) and \(k(\matr{X}, \matr{X}) \in \mathbb{R}^{N \times N}\) to denote batch evaluations. For a full introduction to BO, see \cite{shahriari2016bayesopt,frazier2018botutorial,garnett2023bayesopt}.

\begin{comment}
The full Bayesian optimization algorithm is shown in \cref{alg:bayesopt}.
A small initial data sample is collected according to some space filling design, and then future observations are chosen by maximizing the acquisition function at each step.

\begin{algorithm}[tb]
  \caption{Bayesian optimization}
  \label{alg:bayesopt}
\begin{algorithmic}
  \INPUT{Surrogate model, \(f\); acquisition function, \(\alpha\).}
  \STATE{Evaluate \(f^*\) at a small number of locations chosen according to a space-filling design.}
  \WHILE{Budget remains}
  \STATE{Fit the hyper-parameters of \(f\) to the observed data.}
  \STATE{Set \(\vect{x}_\mathrm{next} = \argmax_{\vect{x}\in\mathcal{X}} \alpha(\vect{x})\)}
  \STATE{Observe \(y = f^*(\vect{x}_\mathrm{next}) + \varepsilon\)}
  \ENDWHILE
  
  \OUTPUT The `best' \(\vect{x}\) according to the acquisition function. For knowledge gradient, we should return the \(\vect{x} \in \mathcal{X}\) which maximizes the posterior mean of \(f\).
\end{algorithmic}
\end{algorithm}
\end{comment}

\paragraph{Multi-Attribute Knowledge Gradient (maKG)}
The second ingredient of Bayesian optimization is the acquisition function.
The principle underlying the multi-attribute knowledge gradient acquisition function introduced in \cite{astudillo2017makg} is that, once the evaluation budget has been reached, for a given \(\vect{\lambda} \in \Lambda\) we will recommend the input \(\vect{x}^* \in \mathcal{X}\) which maximizes the posterior mean of the utility of the GP model.

More formally, let \(\mu_N(\vect{x}';\vect{\lambda}) = \mathbb{E}_N[u_{\vect{\lambda}}(\vect{f}(\vect{x}'))]\) denote the posterior mean of the utility of \(\vect{f}(\vect{x}')\) under utility function \(u_{\vect{\lambda}}\) conditioned on the value of \(\vect{f}\) at \(\vect{x}_1, \dots, \vect{x}_N\).
During the optimization, let \(\vect{x} \in \mathcal{X}\) denote a potential next sample location, and let \(\vect{y}^{\vect{x}} \in \mathbb{R}^M\) be the random variable for the observation we might make at this location under the model \eqref{eq:obs-model-coupled}.
Importantly at this point, we are assuming all \(M\) objectives are observed.
After applying a utility function \(u_{\vect{\lambda}}\), denote the posterior mean at some other location \(\vect{x}' \in \mathcal{X}\) as \(\mu_{N+}(\vect{x}'; \vect{x}, \vect{\lambda}) = \mathbb{E}_N\left[ u_{\vect{\lambda}}(\vect{f}(\vect{x}')) \,\middle|\, \vect{y}^{\vect{x}} \right]\). Subtracting the maximum of the current posterior mean of \(\vect{f}\) and taking expectation over \(\vect{y}^{\vect{x}}\) gives the \emph{knowledge gradient} for utility \(u_{\vect{\lambda}}\),
\begin{equation} \label{eq:scalarized-knowledge-gradient}
    \alpha_\mathrm{KG}(\vect{x}; \vect{\lambda}) = \mathbb{E}_N\left[ \max_{\vect{x}' \in \mathcal{X}} \mu_{N+}(\vect{x}'; \vect{x}, \vect{\lambda}) \right] - \max_{\vect{x}' \in \mathcal{X}} \mu_N(\vect{x}'; \vect{\lambda}).
\end{equation}

\begin{comment}
The knowledge gradient has two main advantages over more commonly used acquisition functions such as expected improvement. Firstly, it naturally handles noisy problems. Secondly, the solution recommended at the end of the optimization does not need to have been previously evaluated.
%These come at the cost of greater computational expense.
As we shall see in \cref{sec:mokg}, the latter advantage makes knowledge gradient significantly easier to extend to the case of separately evaluated objectives than expected improvement.
\end{comment}

One simple BO approach is to randomly select a new parameter \(\vect{\lambda} \in \Lambda\) for the utility function \(u_{\vect{\lambda}}\), at each iteration and optimize  \(\alpha_\mathrm{KG}(\cdot,\, \vect{\lambda})\). An alternative is to instead take expectation over scalarization weights of an acquisition function applied to the scalarized objective. This approach was taken in \cite{astudillo2017makg} for the multi-attribute knowledge gradient.  In formulae,
\begin{equation} \label{eq:multi-attribute-knowledge-gradient}
    \alpha_\mathrm{maKG}(\vect{x}) = \mathbb{E}_{\vect{\lambda} \sim p(\vect{\lambda})} \bigl[ \alpha_\mathrm{KG}(\vect{x}; \vect{\lambda}) \bigr].
\end{equation}
The authors restrict to linear utility functions, \(u_{\vect{\lambda}}(\vect{f}(\vect{x}')) = \vect{\lambda} \cdot \vect{f}(\vect{x}')\). This has the advantage that the linear utility function commutes with expectation. 

Both when using random scalarizations and expectation over scalarizations, there is a choice to be made for the distribution \(p(\vect{\lambda})\) of \(\vect{\lambda}\). Paria et al. \cite{paria2020scalarizations} use this to encode the preferences of the decision maker for learning about different areas of the Pareto front. Here we concentrate on the case of linear scalarizations where \(p(\vect{\lambda})\) is a uniform distribution over the standard simplex in \(\mathbb{R}^M\).
%This is the approach taken in the famous ParEGO algorithm \cite{knowles2006parego} which uses Chebyshev scalarizations.

%Generally in the scalarization literature, there is also a choice between applying the scalarization to the data or to the multi-objective model \cite{chugh2022multiobjective}. In the ParEGO algorithm \cite{knowles2006parego}, the scalarization is applied to the vector-valued observations. The resulting scalar values are then modeled with a GP and a single-objective acquisition function is used to select the next sample location. Chugh \cite{chugh2022multiobjective} refers to this as a mono-surrogate approach. Conversely, Paria et al. \cite{paria2020scalarizations} model each objective separately with a GP and then apply the scalarization to the model. The resulting one-dimensional stochastic process can then be plugged into any single-objective acquisition function. Chugh \cite{chugh2022multiobjective} calls this the multi-surrogate approach and recommends it over the mono-surrogate approach. This is the approach taken in \eqref{eq:scalarized-knowledge-gradient}.

\section{Cost Weighted Multi-Objective Knowledge Gradient}
\label{sec:mokg}

%In this section we introduce a novel acquisition function for performing multi-objective BO when objectives may be evaluated separately and have different associated costs. 

%It is important to justify why we chose knowledge gradient (KG) \cite{frazier2008kg} over a computationally less expensive acquisition function such as \emph{expected improvement} (EI) \cite{jones1998ego}. One reason is that KG naturally handles observation noise and in noisy problems there is an incentive to sample the noisier objectives more frequently than the less noisy ones. The main motivation however, is that EI requires comparing a potential evaluation candidate to the previously evaluated solutions. If we have not observed all objectives at each of the previous sample locations then we will not be able to scalarize those observations. Neither can we scalarize the potential next observation of just one objective and certainly then cannot compare the two.

%The algorithm is a natural extension of the multi-attribute knowledge gradient \eqref{eq:multi-attribute-knowledge-gradient} which uses an expectation over linear scalarizations.
%Our research shows that taking expectation over scalarizations is of particular benefit when objectives may be evaluated separately, and so we also define a version based on random scalarizations in order to make this comparison.

%\subsection{Sampling Strategy and Acquisition Function}
In order to extend maKG to multiple objectives, we need simply observe that the multi-output GP can be conditioned on observations of just one coordinate of \(\vect{f}\) at a time. 
In the decoupled setting, we have observations \(y_{1,m_1}, \dots, y_{N,m_N}\) with \(y_{n,m_n} = f_{m_n}(\vect{x}_n) + \varepsilon_{n,m_n}\) for each \(n\) as in Equation~\eqref{eq:obs-model-decoupled}.

For a given \(\vect{\lambda} \in \Lambda\), write \(\mu_N(\vect{x}'; \vect{\lambda}) = \mathbb{E}_N[u_{\vect{\lambda}}(\vect{f}(\vect{x}'))]\) for the posterior mean of \(\vect{f}\) conditional on the \(N\) observations.
Let \(\vect{x} \in \mathcal{X}\) denote a potential next sample location, let \(m \in \{1, \dots, M\}\) be a potential next objective and let \(y_m^{\vect{x}} = f_m(\vect{x}) + \varepsilon_m\) be the random variable for the corresponding potential observation.
Write \(\mu_{N+}(\vect{x}'; \vect{x}, m, \vect{\lambda}) = \mathbb{E}_N[ u_{\vect{\lambda}}(\vect{f}(\vect{x}')) \,|\, y_m^{\vect{x}} ]\) for the posterior mean conditional on \(y_m^{\vect{x}}\) and the \(N\) real observations.
Subtracting the maxima of these two functions and taking expectation over \(y_m^{\vect{x}}\) gives the \emph{multi-objective knowledge gradient},
\begin{equation} \label{eq:mokg}
    \alpha_\mathrm{MOKG}(\vect{x}, m; \vect{\lambda}) = \mathbb{E}_N\left[ \max_{\vect{x}' \in \mathcal{X}} \mu_{N+}(\vect{x}'; \vect{x}, m, \vect{\lambda}) \right] - \max_{\vect{x}' \in \mathcal{X}} \mu_N(\vect{x}'; \vect{\lambda}).
\end{equation}
This is the result for a single utility \(u_{\vect{\lambda}}\). Taking expectation over \(\vect{\lambda} \sim p(\vect{\lambda})\) gives
\begin{equation}\label{eq:mokg-e}
    \overline{\alpha}_\mathrm{MOKG}(\vect{x}, m) = \mathbb{E}_{\vect{\lambda} \sim p(\vect{\lambda})}\left[ \alpha_\mathrm{MOKG}(\vect{x}, m;\, \vect{\lambda}) \right].
\end{equation}

In the context of objectives with different evaluation costs, we divide the MOKG by the cost to evaluate the proposed objective to give a value-per-unit-cost. This is the approach taken by Snoek et al. \cite{snoek2012practical} in the single objective case. We obtain our main contribution in this work, the \emph{cost-weighted multi-objective knowledge gradient (C-MOKG)} acquisition function,
\begin{subequations} \label{eq:mokg-costs}
\begin{align}
    \alpha_\mathrm{C\text{-}MOKG}(\vect{x}, m; \vect{\lambda}, \vect{c}) = \frac{1}{c_m} \alpha_\mathrm{MOKG}(\vect{x}, m; \vect{\lambda}), \label{eq:mokg-costs-r} \\
    \overline{\alpha}_\mathrm{C\text{-}MOKG}(\vect{x}, m; \vect{c}) = \frac{1}{c_m} \overline{\alpha}_\mathrm{MOKG}(\vect{x}, m), \label{eq:mokg-costs-e}
\end{align}
\end{subequations}
where \(\vect{c}\) is the vector of costs associated with each of the \(M\) objectives.

% To distinguish between the two versions of the algorithm, we will write C-MOKG-R for the version using random scalarizations and C-MOKG-E for the version taking expectation over scalarizations. The version which takes expectation is our main proposed algorithm.

\subsection{Efficient Calculation and Optimization} \label{subsec:calculation}

We will first present a method for calculating and optimizing C-MOKG in the case of random scalarizations using a discrete approximation. We will then use this method alongside a quasi-Monte-Carlo approximation to calculate and optimize C-MOKG when taking expectation over scalarizations.

As in \cite{astudillo2017makg}, we now restrict to linear utility functions, \(u_{\vect{\lambda}}(\vect{f}(\vect{x}')) = \vect{\lambda} \cdot \vect{f}(\vect{x}')\) in order to exploit the fact that expectation commutes with linear functions.

\paragraph{Discrete Approximation} Early work which introduced the knowledge gradient \cite{frazier2008kg,frazier2009kg} focused on discrete search spaces. This has inspired a common computational strategy for knowledge gradient in low input dimensions \cite{wu2016paralleldiscretekg,pearce2017robust,pearce2018multitask}, where a discrete approximation for the input space, \(\mathcal{X}\), is used in the inner optimization, while retaining the full continuous space for the proposed next sample location. Concretely, let \(\mathcal{X}_\mathrm{disc}\) be a finite subset approximating \(\mathcal{X}\). Then
\begin{multline} \label{eq:mokg-discrete}
    \alpha_\mathrm{MOKG}(\vect{x}, m; \vect{\lambda}) \approx \hat{\alpha}_\mathrm{MOKG}(\vect{x}, m; \vect{\lambda}) = \\
    \mathbb{E}_N\left[ \max_{\vect{x}' \in \mathcal{X}_\mathrm{disc}} \mu_{N+}(\vect{x'}; \vect{x}, m, \vect{\lambda}) \right]
    - \max_{\vect{x}' \in \mathcal{X}_\mathrm{disc}} \mu_N(\vect{x}'; \vect{\lambda}).
\end{multline}

Since expectation commutes with linear operators, we can use Equation~\eqref{eq:gp-posterior-mean} to write \(\mu_{N+}(\vect{x}'; \vect{x}, m, \vect{\lambda})\) as an affine function of the hypothesized observation \(y_m^{\vect{x}}\),
\begin{equation} \label{eq:mokg-discrete-2}
    \mu_{N+}(\vect{x}'; \vect{x}, m, \vect{\lambda}) = 
    \vect{\lambda} \cdot \tilde{\vect{\mu}}(\vect{x}')
    + \vect{\lambda} \cdot \tilde{\matr{K}}_{:,m}(\vect{x}', \vect{x})\frac{y_m^{\vect{x}} - \tilde{\mu}_m(\vect{x})}{\tilde{k}_{m,m}(\vect{x}, \vect{x}) + \sigma_m^2}.
\end{equation}
Here, \(\tilde{\vect{\mu}}\) and \(\tilde{\matr{K}}\) are the posterior mean and covariance functions of \(\vect{f}\) conditional on the \(N\) observations so far.
It is therefore possible to efficiently calculate \(\hat{\alpha}_\mathrm{MOKG}(\vect{x}, m;\, \vect{\lambda})\) using Algorithm~2 in \cite{frazier2009kg}.
Furthermore, the resulting analytical expression is deterministic and its derivatives with respect to the candidate input \(\vect{x}\) can be found with automatic differentiation. Therefore, we can use a deterministic gradient based optimizer such as multistart L-BFGS-B to find the global maximum.

We can then optimize C-MOKG over both \(m\) and \(\vect{x}\) by optimizing the result for every \(m\) and choosing the largest. Indeed, writing \([M] = \{1, \dots, M\}\),
\begin{equation} \label{eq:cmokg-discrete}
    \max_{\substack{\vect{x} \in \mathcal{X} \\ m \in [M]}} \hat{\alpha}_\mathrm{C\text{-}MOKG}(\vect{x}, m; \vect{\lambda}, \vect{c})
    = \max_{m \in [M]} \frac{1}{c_m} \max_{\vect{x} \in \mathcal{X}} \hat{\alpha}_\mathrm{MOKG}(\vect{x}, m; \vect{\lambda}).
\end{equation}

When optimizing knowledge gradient acquisition functions, it is common to neglect the second term in Equation~\eqref{eq:mokg-discrete} which is constant with respect to \(\vect{x}\). However, it is important that we do not neglect this term when performing the outer maximization in Equation~\eqref{eq:cmokg-discrete}, since the factor \(1/c_m\) means that it is no longer constant with respect to \(m\).

\paragraph{Quasi-Monte-Carlo}
The discrete approximation from the previous section is sufficient to optimize C-MOKG in the case of a single scalarization. We can extend this to a way to optimize C-MOKG with expectation over scalarizations using a (quasi-)Monte-Carlo approximation (qMC). Indeed,
\begin{equation} \label{eq:cmokg-e-discrete}
    \overline{\alpha}_{\text{C-MOKG}}(\vect{x}, m)
    \approx \frac{1}{Q} \sum_{j=1}^{Q} \hat{\alpha}_{\text{C-MOKG}}(\vect{x}, m; \vect{\lambda}^{(j)}).
\end{equation}
Here \(\vect{\lambda}^{(1)}, \dots, \vect{\lambda}^{(Q)}\) is a qMC sample of size \(Q\).
Each term in the sum can be calculated using the discretization technique from the previous section and the average can be optimized with multistart L-BFGS-B.
%Gradients for this are easily calculated using automatic differentiation.
% The form of quasi-Monte-Carlo used in the experiments in this paper is to take successive samples from an \(Q\)-dimensional scrambled Sobol' sequence \cite{sobol1967sobolseq}.

\subsection{Theoretical Results}
Let \(\Lambda \subset \mathbb{R}^M\) denote the standard simplex.
The following results hold for C-MOKG defined using linear utility functions, \(u_{\vect{\lambda}}(\vect{f}(\vect{x}')) = \vect{\lambda} \cdot \vect{f}(\vect{x}')\) for \(\vect{\lambda} \in \Lambda\).

Our first result establishes that the cost-aware multi-objective knowledge gradient is everywhere non-negative. This is a standard result for knowledge-gradient acquisition functions and is the reason that we do not need to take the positive part inside the expectation as is necessary with expected improvement.
\begin{lemma} \label{thm:cmokg-nonneg}
    Both forms of the cost-aware multi-objective knowledge gradient are non-negative. That is, for all \(\vect{x} \in \mathcal{X}\), \(m \in \{1, \dots, M\}\) and all \(\vect{\lambda} \in \Lambda\),
    \begin{equation*}
        \alpha_\mathrm{C\text{-}MOKG}(\vect{x}, m;\, \vect{\lambda}) \geq 0
        \quad\text{and}\quad
        \overline{\alpha}_\mathrm{C\text{-}MOKG}(\vect{x}, m) \geq 0,
    \end{equation*}
    almost surely.
\end{lemma}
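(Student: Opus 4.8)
The plan is to reduce both claims to a single inequality: that the uncost-weighted multi-objective knowledge gradient $\alpha_\mathrm{MOKG}(\vect{x}, m; \vect{\lambda})$ is non-negative for every fixed $\vect{x}$, $m$ and $\vect{\lambda}$. Once this is established, the two statements in the lemma follow immediately. The costs $c_m$ are assumed positive, so dividing by $c_m$ preserves the sign in \eqref{eq:mokg-costs-r}; and $\overline{\alpha}_\mathrm{MOKG}(\vect{x}, m)$ is, by \eqref{eq:mokg-e}, an expectation over $\vect{\lambda} \sim p(\vect{\lambda})$ of the non-negative quantities $\alpha_\mathrm{MOKG}(\vect{x}, m; \vect{\lambda})$, hence itself non-negative, after which \eqref{eq:mokg-costs-e} again divides only by the positive constant $c_m$.

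To prove $\alpha_\mathrm{MOKG}(\vect{x}, m; \vect{\lambda}) \geq 0$ I would use the standard knowledge-gradient argument, being careful \emph{not} to attempt the (generally false) direct interchange of $\max$ and $\mathbb{E}_N$. First fix an arbitrary reference point $\vect{z} \in \mathcal{X}$. For every realization of the hypothetical observation $y_m^{\vect{x}}$ we have the pointwise, deterministic bound $\max_{\vect{x}' \in \mathcal{X}} \mu_{N+}(\vect{x}'; \vect{x}, m, \vect{\lambda}) \geq \mu_{N+}(\vect{z}; \vect{x}, m, \vect{\lambda})$, simply because a maximum dominates the value at a fixed point. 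Taking $\mathbb{E}_N$ of both sides and using monotonicity of expectation gives
\[
    \mathbb{E}_N\!\left[ \max_{\vect{x}' \in \mathcal{X}} \mu_{N+}(\vect{x}'; \vect{x}, m, \vect{\lambda}) \right] \geq \mathbb{E}_N\!\left[ \mu_{N+}(\vect{z}; \vect{x}, m, \vect{\lambda}) \right].
\]

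The next step is to identify the right-hand side with $\mu_N(\vect{z}; \vect{\lambda})$. By definition $\mu_{N+}(\vect{z}; \vect{x}, m, \vect{\lambda}) = \mathbb{E}_N[\vect{\lambda} \cdot \vect{f}(\vect{z}) \mid y_m^{\vect{x}}]$, so the tower property of conditional expectation yields $\mathbb{E}_N[\mu_{N+}(\vect{z}; \vect{x}, m, \vect{\lambda})] = \mathbb{E}_N[\vect{\lambda} \cdot \vect{f}(\vect{z})] = \mu_N(\vect{z}; \vect{\lambda})$. Concretely, this can be read off the affine expression \eqref{eq:mokg-discrete-2}: under $\mathbb{E}_N$ the variable $y_m^{\vect{x}}$ is centred at $\tilde{\mu}_m(\vect{x})$, so the correction term has zero mean and only $\vect{\lambda} \cdot \tilde{\vect{\mu}}(\vect{z}) = \mu_N(\vect{z}; \vect{\lambda})$ survives. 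Combining the two displays gives $\mathbb{E}_N[\max_{\vect{x}'} \mu_{N+}(\vect{x}'; \vect{x}, m, \vect{\lambda})] \geq \mu_N(\vect{z}; \vect{\lambda})$ for \emph{every} $\vect{z} \in \mathcal{X}$; taking the supremum over $\vect{z}$ on the right then produces exactly the second term of \eqref{eq:mokg}, so $\alpha_\mathrm{MOKG}(\vect{x}, m; \vect{\lambda}) \geq 0$.

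The argument is essentially routine, and the only delicate points are measure-theoretic rather than conceptual. First, the maxima in \eqref{eq:mokg} should strictly be read as suprema unless $\mathcal{X}$ is compact and the posterior means continuous (which holds for the usual stationary kernels); the reference-point argument is robust to this, since it uses only that a supremum dominates a fixed value. Second, the almost-sure qualifier reflects that $\mu_N$, $\mu_{N+}$ and the reparameterized observation are all functions of the random data already collected, so the inequality is asserted for almost every realization of that data, with the tower-property step carried out on the conditional probability space given the $N$ observations. I would regard pinning down that tower-property identity cleanly as the only real obstacle, and the explicit affine form \eqref{eq:mokg-discrete-2} dispatches it directly.
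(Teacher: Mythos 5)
Your proof is correct and takes essentially the same route as the paper's own proof (\cref{thm:app-cmokg-nonneg}): a fixed reference point, the pointwise bound \(\max_{\vect{x}'} \mu_{N+}(\vect{x}'; \vect{x}, m, \vect{\lambda}) \geq \mu_{N+}(\vect{z}; \vect{x}, m, \vect{\lambda})\), monotonicity of conditional expectation, the tower property to identify \(\mathbb{E}_N[\mu_{N+}(\vect{z}; \vect{x}, m, \vect{\lambda})] = \mu_N(\vect{z}; \vect{\lambda})\), a supremum over reference points, and finally integration over \(\vect{\lambda} \sim p(\vect{\lambda})\) for the averaged form. The only cosmetic difference is that you establish non-negativity of the unweighted \(\alpha_\mathrm{MOKG}\) and divide by the positive cost \(c_m\) afterwards, whereas the paper carries the \(1/c_m\) factor throughout.
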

This is a consequence of the maximum of the expectation of a stochastic process being at most the expectation of the maximum of that process, and is proved in \cref{sec:proofs}.

Our main theoretical contribution ensures that when choosing samples with C-MOKG using either expectation over scalarizations, or random scalarizations, the scalarized objective values associated with the recommendations of the algorithm will converge to the optimal value.
We assume no model mismatch by dropping the distinction between \(\vect{f}\) and \(\vect{f}^*\).
In particular, since we are considering \(\vect{f}^*\) to be a GP here rather than a function, the following result should be interpreted as a statement about all possible \(\vect{f}^*\) together rather than for any individual sample.

For each \(N \in \mathbb{N}_0\) and each preference vector \(\vect{\lambda} \in \Lambda\), let
\begin{equation} \label{eq:kg-recommendations}
    \vect{x}_{N, \vect{\lambda}}^* \in \argmax_{\vect{x} \in \mathcal{X}} \mathbb{E}_N[\vect{\lambda} \cdot \vect{f}(\vect{x})]
\end{equation}
be a random variable which maximizes the posterior mean of the scalarized objective at stage \(N\).
Thus, \(\vect{\lambda} \cdot \vect{f}(\vect{x}_{1, \vect{\lambda}}^*),\, \vect{\lambda} \cdot \vect{f}(\vect{x}_{2, \vect{\lambda}}^*), \dots\) is the sequence of (noiseless) scalarized objective values we would obtain if we were to use the recommended point at each stage of the optimization. The following theorem tells us that this sequence converges to the true maximum of the scalarized, hidden objective function, \(\vect{\lambda} \cdot \vect{f}\).
\begin{theorem}[Consistency of C-MOKG] \label{thm:consistency}
    Suppose \(\mathcal{X} \subset \mathbb{R}^D\) is compact and define the \(\vect{x}_{n, \vect{\lambda}}^*\) as in Equation~\eqref{eq:kg-recommendations}.
    When using C-MOKG with either random scalarizations, or expectation over scalarizations, we have
    \[\forall \vect{\lambda} \in \Lambda,\quad \vect{\lambda} \cdot \vect{f}(\vect{x}_{N, \vect{\lambda}}^*) \to \max_{\vect{x} \in \mathcal{X}} \vect{\lambda} \cdot \vect{f}(\vect{x}) \quad\text{as}\quad N \to \infty\]
    almost surely and in mean.
\end{theorem}

The proof of this result is based on the work by Bect et al. \cite{bect2009sur}. It proceeds by showing that \(\alpha_{\text{C-MOKG}}\) converges to zero for every choice of \(\vect{\lambda}\) and uses this to prove that the posterior mean converges to the true objective function (possibly up to a constant). \cref{thm:consistency} then follows easily.
We refer the reader to \cref{sec:proofs} for the proofs of these results.

\section{Experiments}
\label{sec:experiments}
The two main reasons for evaluating one objective more frequently than  another are because it is relatively cheap, and because it is harder to learn. Thus, the largest improvement is seen when the expensive objectives are easier to learn. The experiments on synthetic bi-objective problems in this section demonstrate this in two cases, where the cheaper objective is made harder-to-learn using a shorter length scale and the presence of observation noise, respectively.

\paragraph{Synthetic Problems}

\begin{comment}
    \begin{figure*}[t]
    \centering
    \begin{subfigure}[b]{0.66\textwidth}
        \centering
        \includegraphics[width=\textwidth]{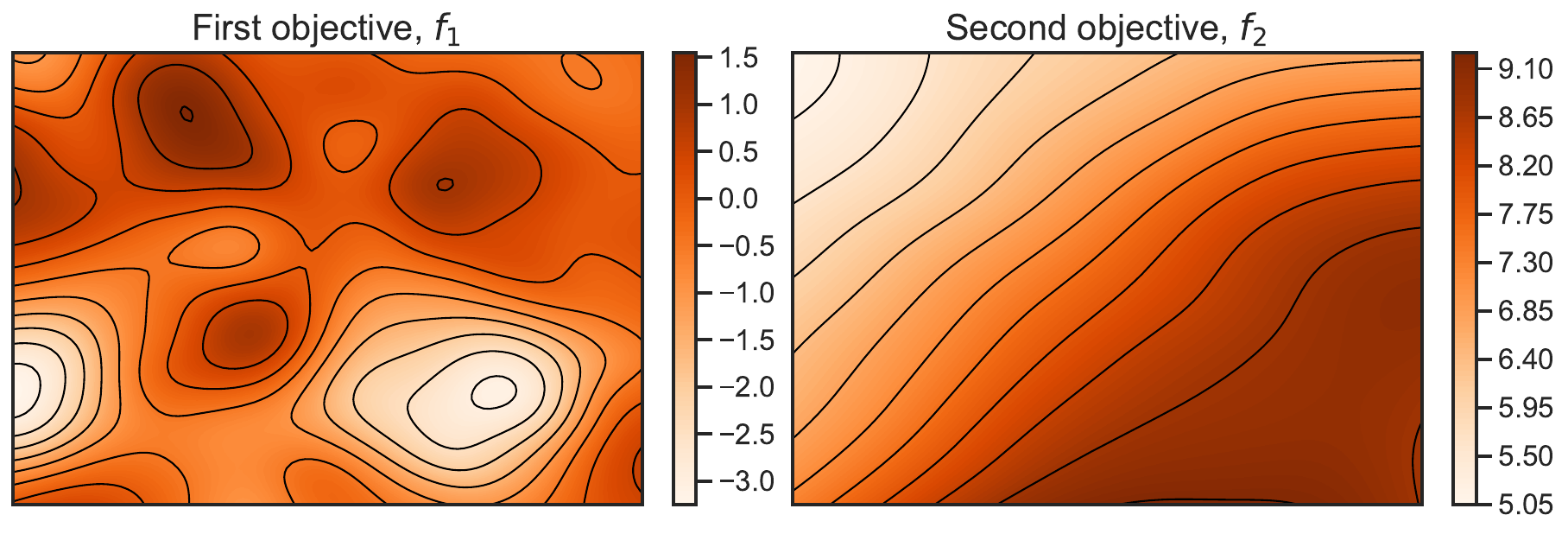}
        \caption{Contours}
        \label{fig:test-problem-lengthscales-contour}
    \end{subfigure}
    \begin{subfigure}[b]{0.33\textwidth}
        \centering
        \includegraphics[width=\textwidth]{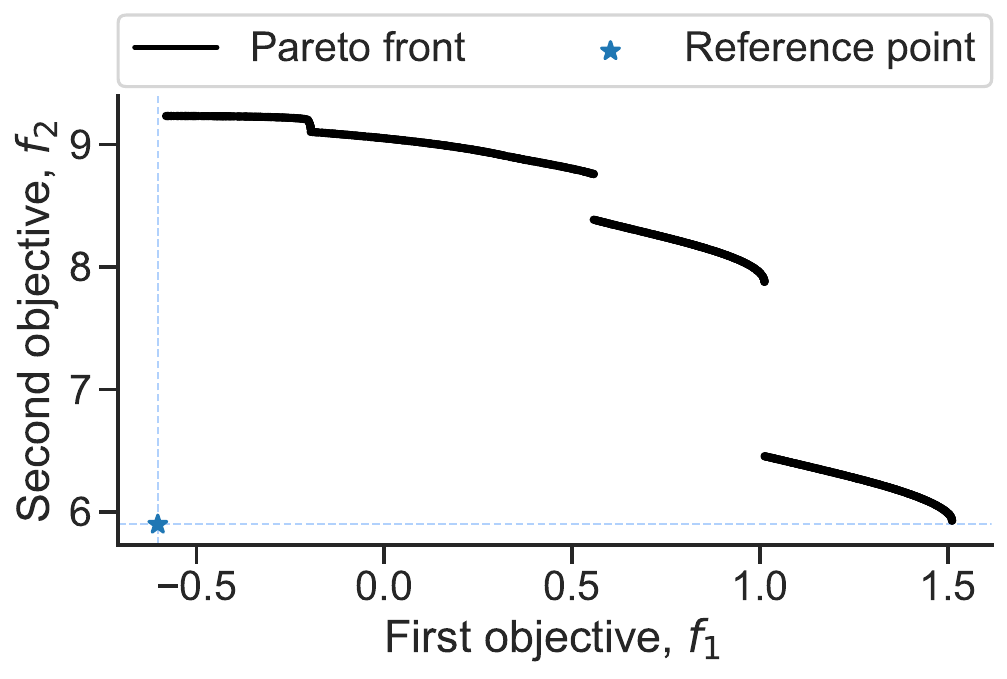}
        \caption{Pareto front}
        \label{fig:test-problem-lengthscales-pfront}
    \end{subfigure}
    \caption{An example from the first set of test problems generated as samples of a GP. Panel~(\subref*{fig:test-problem-lengthscales-contour}) shows how the first objective is made harder to learn by giving the GP a shorter length scale.
    Panel~(\subref*{fig:test-problem-lengthscales-pfront}) shows the Pareto front of the test problem. 
    %Also shown is a tangent to the Pareto front corresponding to a particular preference vector, \(\vect{\lambda}\) (dashed orange). A decision maker whose preferences can be modelled using linear scalarizations will only choose points which intersect with the convex hull of the Pareto front, shown by the thicker blue line.
    }
    \label{fig:test-problem-lengthscales}
\end{figure*}
\end{comment}
We test the algorithm on two families of 100 test problems, each with two input dimensions and two objectives. In order to avoid model mismatch, the objectives are generated independently as samples from different Gaussian processes using a Mat\'ern-\(5/2\) kernel. In the first family of test problems, the first objective has a length scale of 0.2 while the second has a length scale of 1.8. This difference makes the first objective much harder to learn. In this family, no observation noise is added when sampling the problem. In the second family of test problems, both objectives have a length scale of 0.4 and the first is instead made harder to learn by the inclusion of observation noise. The noise added has a standard deviation of 1 which is reasonably large compared with the output scale which is also 1. The second objective is noise free. In both cases, we pretend that the cost or latency of the first objective is 1, while that of the second objective is 10. A full description of the hyper-parameters used to generate these test problems can be found in \cref{sec:experiment-details}.

\paragraph{Bayesian Regret Performance Metric}
In light of the problem definition in Equation~\eqref{eq:problem-definition}, the natural metric is a variant of the often used R2 performance metric for multi-objective optimization \cite{hansen1994evaluating,tu2023r2optim}.
This metric assumes a parameterized utility function with a known distribution of the parameter. We assume $u_{\vect{\lambda}}(\vect{f}^*(\vect{x})) = \vect{\lambda} \cdot \vect{f}^*(\vect{x})$ with $\vect{\lambda}$ uniformly distributed on the standard simplex. The quality of a solution set $S$ is then the expected utility, i.e., 
\begin{equation}
R2(S)=\mathbb{E}_{\vect{\lambda}\sim p(\vect{\lambda})} \left[\max_{\vect{x}\in S}\vect{\lambda}\cdot \vect{f}^*(\vect{x}) \right].
\end{equation}
Our algorithm returns a posterior mean prediction for each objective, \(\vect{f}\) as an estimate of the true function \(\vect{f}^*\).
From this, an approximation \(S_N \subset \mathcal{X}\) of 1000 points in the predicted Pareto set is derived using NSGA-II \cite{deb2002nsgaii}.
A decision maker with a particular utility function defined by $\vect{\lambda}$ would select a solution \(\vect{x}^*_{N,\vect{\lambda}} \in \argmax_{\vect{x} \in S_N} \mathbb{E}_N[\vect{\lambda} \cdot \vect{f}(\vect{x})]\).
However, as the resulting Pareto front is based on \emph{predicted} values, the selected solution from this Pareto front doesn't necessarily obtain the utility they hoped for. This can be accounted for by only recording the true utility of the selected solution, \(\vect{\lambda} \cdot \vect{f}^*(\vect{x}^*_{N,\vect{\lambda}})\).
%The utility of this solution is \(\vect{\lambda} \cdot \vect{f}^*(\vect{x}_{N,\vect{\lambda}}^*)\).
Subtracting this from the maximum possible utility for any solution \(\vect{x} \in \mathcal{X}\) and taking expectation over \(\vect{\lambda}\) gives the \emph{Bayesian regret},
\begin{equation} \label{eq:bayes-regret}
    R_N = \mathbb{E}_{\vect{\lambda} \sim p(\vect{\lambda})}\left[ \max_{\vect{x} \in \mathcal{X}} \vect{\lambda} \cdot \vect{f}^*(\vect{x}) - \vect{\lambda} \cdot \vect{f}^*(\vect{x}_{N,\vect{\lambda}}^*) \right].
\end{equation}
This is similar to the construction of Bayes regret in \cite{paria2020scalarizations}, but allows for a mismatch between the model and true function.

In practice, we estimate this expectation using a qMC (Sobol' \cite{sobol1967sobolseq}) sample \((\vect{\lambda}^{(j)})_{j=1}^{N_{\vect{\lambda}}}\) of size \(N_{\vect{\lambda}} = 1024\).
The maximization over \(\vect{x} \in \mathcal{X}\) is computed by first generating an approximation \(\hat{\mathcal{X}}_\text{Pareto}^*\) of 1000 points in the Pareto set for \(\vect{f}^*\) using NSGA-II, then selecting the largest.
In summary, we compute
\begin{equation}
    R_N \approx \hat{R}_N = 
    \frac{1}{N_{\vect{\lambda}}} \sum_{j=1}^{N_{\vect{\lambda}}} \left( \max_{\vect{x} \in \mathcal{X}_\text{Pareto}^*} \vect{\lambda}^{(j)} \cdot \vect{f}^*(\vect{x}) - \vect{\lambda}^{(j)} \cdot \vect{f}^*(\vect{x}_{N,\vect{\lambda}}^*) \right).
\end{equation}

\paragraph{Hypervolume performance metric}
The Bayesian regret metric defined using linear utility functions only measures performance on the intersection of the Pareto front with its convex hull.
This could be addressed by using a Pareto compliant utility function \(u_{\vect{\lambda}}\), but a common alternative is to measure the hypervolume enclosed between a reference point and the image of the estimated Pareto set, \(\vect{f}^*(S_N)\).
The hypervolume is calculated using the Dominated Partitioning implementation in BoTorch \cite{balandat2020botorch,lacour2017boxdecomp}.
For the reference point, we use the minimum in each dimension of 1000 points on the Pareto front generated with NSGA-II, minus 1\% of the range in the Pareto front in each dimension.

\subsection{Experimental Details}
For each of the two families of test problem, we run the BO 100 times and present the mean of the Bayesian regret. Each repeat uses a different, independently sampled instance of the test problem and a different initial sample of six points generated from a scrambled Sobol' sequence \cite{sobol1967sobolseq}. The initial points are all evaluated on both objectives.

We compare our algorithm against the hypervolume knowledge gradient (HVKG) \cite{daulton2023hvkg} and a modification of the lower bound approximation to the joint entropy search (JES-LB) for decoupled objectives \cite{tu2022jes}. HVKG is state-of-the-art for problems of decoupled objectives. While the authors of JES-LB did not consider the decoupled case, it has been historically considered in entropy-based acquisition functions \cite{hernandezlobato2016pesmo,hernandezlobato2016pesc,suzuki2020pfes} and JES-LB is state-of-the-art among these.

We further compare to a benchmark algorithm which uses the multi-attribute knowledge gradient (maKG) \cite{astudillo2017makg}. This always evaluates both objectives but is otherwise identical to our algorithm.
For both our acquisition function and the maKG benchmark, we use a uniform \(11 \times 11\) grid for the discretization \(\mathcal{X}_\mathrm{disc}\).
The expectation over scalarizations in \eqref{eq:cmokg-e-discrete} is calculated using a qMC estimate using a scrambled Sobol' sequence of \(Q = 16\) points of dimension \(D-1\), which is randomly regenerated at each BO iteration.

For the surrogate model, we use a Mat\'ern-\(5/2\) kernel like the test problem, however, its hyper-parameters are fitted to the observed data using maximum a posteriori estimates. For the first family of test problems, suggestive priors are placed on the length scales to hint to the model that the first objective has a shorter length scale. As observed previously, the algorithm works best when the cheaper objectives are harder to learn. If an engineer knows that an objective has a shorter length scale or noisier observations, they can incorporate this using the prior distribution on the relevant hyper-parameters. Full details of the prior distributions used are included in \cref{sec:experiment-details}.
The same hyper-priors are used for all acquisition functions compared.

\subsection{Results}
\cref{fig:results-bayesregret} shows the evolution of the mean Bayesian regret over the 100 repeats of the experiment in the two families of test problem. The shaded area shows a 95\% marginal confidence interval in the expected value across the family of test problems, calculated as two standard errors in the mean.
In both cases, the decoupled algorithms outperform the coupled ones because they can save time by skipping samples of the slower, easier-to-learn objective. C-MOKG outperforms HVKG and JES-LB because it is tailored to decision makers with linear utility functions.
However, the significant levels of noise used in the second family of problems means that convergence is much slower for both algorithms than for the first family.

\begin{figure*}[htb]
    \centering
    \begin{subfigure}[b]{0.49\textwidth}
        \centering
        \includegraphics[width=\textwidth]{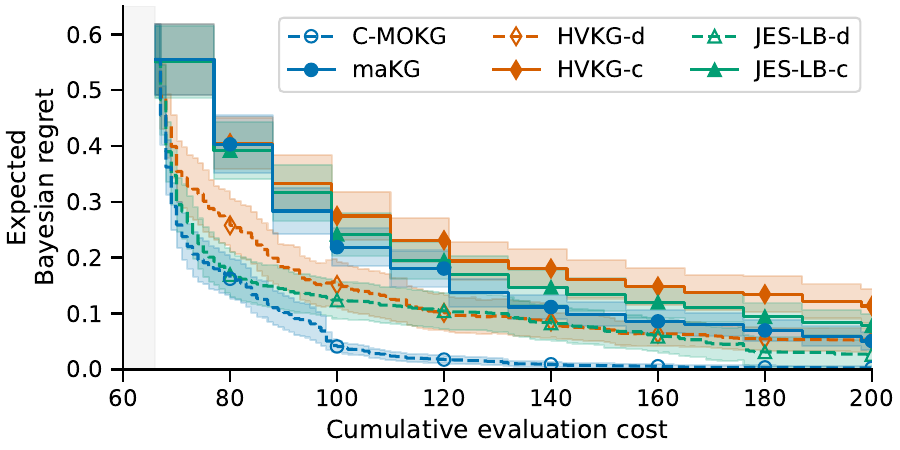}
        \caption{Different length scales}
    \end{subfigure}
    \begin{subfigure}[b]{0.49\textwidth}
        \centering
        \includegraphics[width=\textwidth]{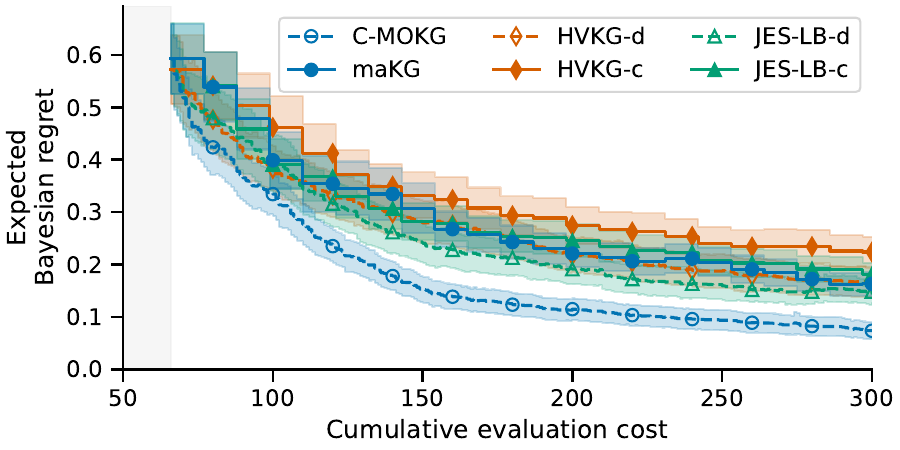}
        \caption{Different observation noise}
    \end{subfigure}
    \caption{A comparison of the evolution of the expected Bayesian regret between C-MOKG/maKG (blue circles), HVKG (red diamonds) and JES-LB (green triangles). For HVKG and JES-LB, a suffix of `-c' or `-d' is used to distinguish the coupled and decoupled algorithms, respectively. The coupled versions are solid while the decoupled versions are dashed. The shaded areas show a 95\% confidence interval in the expected value across the family of test problems (two standard errors around the mean).}
    \label{fig:results-bayesregret}
\end{figure*}

We used linear utility functions throughout this work. However, it is informative to observe that C-MOKG remains competitive when viewed using the  hypervolume regret metric, as shown in \cref{fig:results-hypervol}. This is of particular note because the test problems do not in general have convex Pareto fronts. It is well-known that linear scalarizations are not Pareto compliant for non-convex Pareto fronts. However, the data collected using C-MOKG can be used to inform all parts of the Pareto front, not just the intersection with the convex hull, and we believe this to be the explanation for the algorithm's strong performance. In Figure \ref{fig:pareto-front-convergence-example-3} we show an example of the C-MOKG algorithm converging to the Pareto front.

\begin{figure*}[htb]
    \centering
    \begin{subfigure}[b]{0.49\textwidth}
        \centering
        \includegraphics[width=\textwidth]{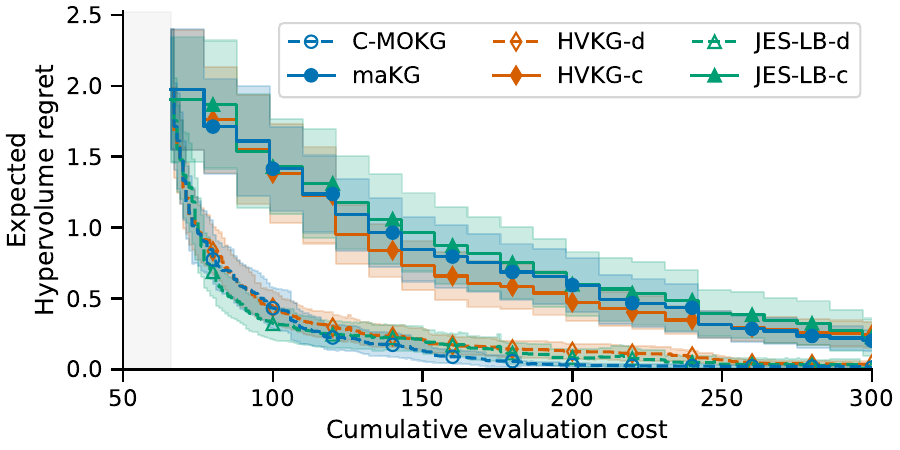}
        \caption{Different length scales}
    \end{subfigure}
    \begin{subfigure}[b]{0.49\textwidth}
        \centering
        \includegraphics[width=\textwidth]{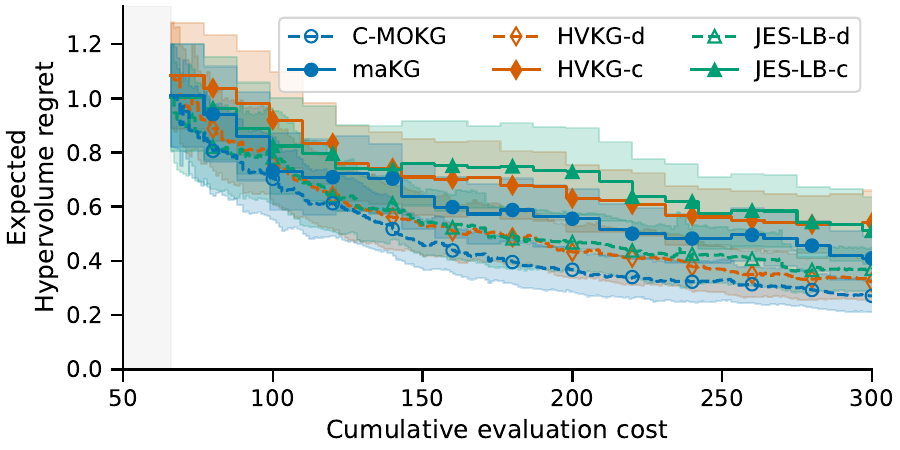}
        \caption{Different observation noise}
    \end{subfigure}
    \caption{Results from the experiments in \cref{fig:results-bayesregret}, using hypervolume regret.}
    \label{fig:results-hypervol}
\end{figure*}

\begin{figure}[h!]
    \centering
    \begin{subfigure}[b]{0.32\linewidth}
        \centering
        \includegraphics[width=\linewidth]{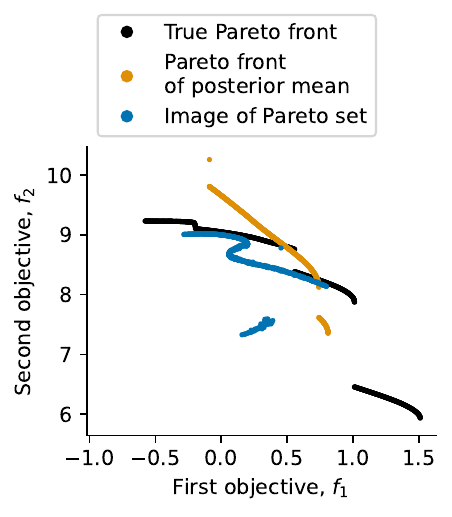}
        \caption{Initial samples only}
        \label{fig:pareto-front-convergence-example-0}
    \end{subfigure}
    \begin{subfigure}[b]{0.32\linewidth}
        \centering
        \includegraphics[width=\linewidth]{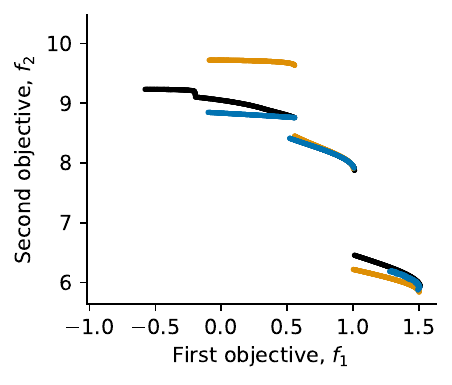}
        \caption{30 BO samples}
        \label{fig:pareto-front-convergence-example-2}
    \end{subfigure}
    \begin{subfigure}[b]{0.32\linewidth}
        \centering
        \includegraphics[width=\linewidth]{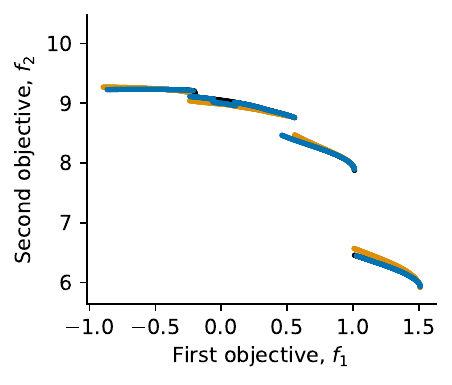}
        \caption{70 BO samples}
        \label{fig:pareto-front-convergence-example-3}
    \end{subfigure}
    \caption{Convergence of the GP surrogate as samples are collected with C-MOKG.
    %Example of the C-MOKG algorithm converging on the Pareto front in objective space. Shown in orange is the prediction of the Pareto front using the posterior mean of the surrogate model. The image of the predicted Pareto set under the true problem is shown in blue, and represents what outcome the decision maker will get if they choose a point on the predicted Pareto front. The true Pareto front is shown in black.
    %Example of the C-MOKG algorithm converging on the Pareto front in objective space. %Initially the surrogate model is very poor and so the predicted Pareto front and image of the predicted Pareto set are very different. As more samples are collected, both predictions converge on the true Pareto front.
    }
    \label{fig:pareto-front-convergence-example}
\end{figure}

%\subsection{Convergence to the Pareto Front}
%At every step of the optimization, the algorithm has a prediction of the Pareto front and Pareto set, determined by the posterior mean of the GP surrogate model. To provide intuition on the algorithm, \cref{fig:pareto-front-convergence-example} shows this predicted Pareto front (orange) at three points during the optimization in one of the runs for the first family of test problems. The plots also show the image of the predicted Pareto set (blue) which is used to calculate the Bayesian regret metric. We can see that initially the Gaussian process model is a poor surrogate for the true objective function and there is a large discrepancy between the predicted Pareto front and the image of the Pareto set. However, as samples are collected, the model becomes a better representation of the underlying objective function and consequently the predictions of the Pareto front and Pareto set converge on their true values.
% There is also significant irregularity in the Pareto set where the model is predicting that input points will have equal objective values when they do not.

\subsection{Comparison to Random Scalarizations}
We have claimed throughout that taking expectation over scalarizations is particularly beneficial when objectives are evaluated separately.
In this section, we compare C-MOKG taking expectation over scalarizations (as in \cite{astudillo2017makg}), with the version which uses random scalarizations (as in ParEGO \cite{knowles2006parego}). To generate the random scalarization weights, at each iteration we take the next element from a scrambled Sobol' sequence, which ensures that weights are well spread over the simplex.

The results in \cref{fig:results-expectation-vs-random} show a big difference between using expectation over scalarizations (blue circles) and using random scalarizations (orange triangles) in the decoupled case (dashed). The difference is not that large in the coupled case when objectives are evaluated together (solid).
The reason is probably that some of the random scalarizations will overly favor the slow, easy-to-learn objective. Therefore, the algorithm wastes time taking samples in order to learn about a part of the Pareto front which is already well known. Conversely, the algorithm which takes expectation over scalarizations can look at potential improvement across the whole Pareto front and sample the faster, harder-to-learn objective more often.

\begin{figure}[t]
    \centering
    \includegraphics[width=0.5\linewidth]{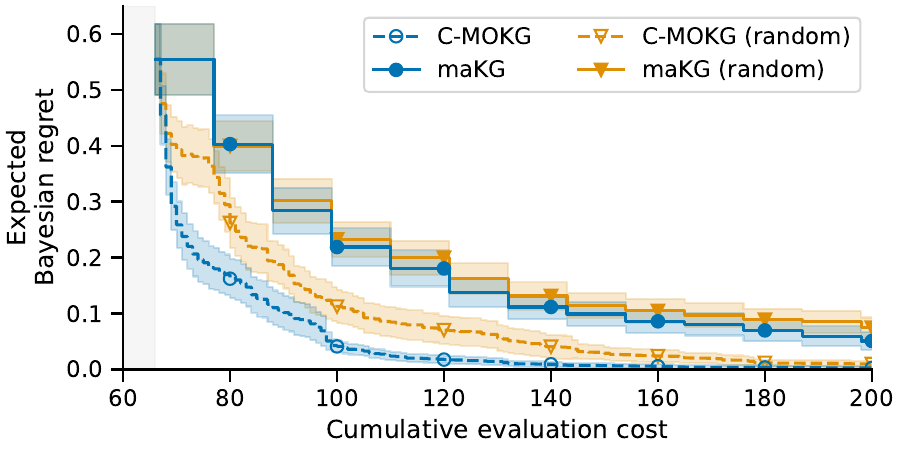}
    \caption{A comparison of the expected Bayesian regret when using random scalarizations (orange triangles) and expectation over scalarizations (blue circles), for the first family of test problems. The decoupled C-MOKG (dashed), significantly benefits from using expectation over scalarizations.}
    \label{fig:results-expectation-vs-random}
\end{figure}

\section{Conclusion}
\label{sec:discussion}

In this work we have presented an extension of the maKG acquisition function to the case of decoupled objectives. We have provided a theoretical guarantee for the convergence of the optimization and have experimentally demonstrated that it performs as well as other state-of-the-art acquisition functions such as HVKG and JES-LB. This is surprising given it relies on linear scalarizations - something which we use to keep the numerical approximation computationally feasible, and which simplifies the proof of convergence.

For future work, we are planning to compare the algorithm with a version which uses other utility functions such as (augmented) Chebyshev scalarizations to quantify any improvement in convergence obtained by using a Pareto compliant utility function.
It would also be interesting to extend this work to the case where the objective costs must be learned, and to combine our method with those which tackle variation of the cost over the input space \cite{snoek2012practical,astudillo2021multistep,lee2021nonmyopic}.

% \section*{Software and Data}
% %\todo[inline]{Submit as supplementary material for initial submission (to preserve anonymity). Include URL to GitHub repository for final version.}
% The main libraries used were \texttt{botorch} \cite{balandat2020botorch} and \texttt{gpytorch} \cite{gardner2018gpytorch} for the Gaussian processes, and \texttt{pygmo} \cite{biscani2020pagmo} for an implementation of NSGA-II. The implementation of \mbox{L-BFGS-B} used to optimize the acquisition functions and GP hyper-parameters is from \texttt{scipy} \cite{2020SciPy-NMeth}.
% %We will make our code publically available after acceptance.

\begin{credits}
\subsubsection{\ackname} The first author was supported by the Engineering and Physical Sciences Research Council through the Mathematics of Systems II Centre for Doctoral Training at the University of Warwick (reference EP/S022244/1). The second author was supported by FWO (Belgium) grant number 1216021N and the Belgian Flanders AI Research Program (VCCM Core Lab).

\subsubsection{\discintname}
The authors have no competing interests to declare that are relevant to the content of this article.
\end{credits}

% ---- Bibliography ----
\bibliographystyle{splncs04}
\bibliography{references}

% ---- Appendix ----
\newpage
\appendix
% \newgeometry{left=1cm,right=1cm}
% We wish to be able to restate theorems from the main text. We do this by temporarily changing the theorem counter value. We create this tmpcounter to temporariliy store the old counter value so that we can reset it later.
\newcounter{tmpcounter}

\section{Theoretical Results}
\label{sec:proofs}
In this appendix we prove theoretical results associated with the cost-weighted multi-objective knowledge gradient (C-MOKG). The appendix culminates with a proof that, for any scalarization weights, \(\vect{\lambda}\), the estimator of the optimum is asymptotically consistent.
The structure of the proof mostly follows that of Bect et al. \cite{bect2009sur}, without as much of the complex machinery. Unfortunately, we cannot directly apply the results from \cite{bect2009sur} because they do not cover multi-objective GP models or the case of random scalarizations.

Since conditional expectations are only defined up to a null event, we qualify the results in this section with `almost surely' (a.s.). The reader should interpret most equalities and inequalities in the proofs of these results to hold almost surely.

\subsection{Statistical Model}
For the benefit of the reader, we recall the statistical model used.
Let \(\vect{f} \sim \mathcal{GP}(\vect{\mu}, \matr{K})\) be a multi-output Gaussian process with compact index set \(\mathcal{X} \subset \mathbb{R}^D\) and defined on a probability space \((\Omega, \mathcal{F}, \mathbb{P})\). here \(\vect{\mu} : \mathcal{X} \to \mathbb{R}^M\) is the mean function and \(\matr{K}: \mathcal{X} \times \mathcal{X} \to \mathbb{R}^{M \times M}\) is the covariance function. We assume that \(\vect{\mu}\) and \(\matr{K}\) are continuous, and further we use the version of \(\vect{f}\) with continuous sample paths.
Importantly, we do not make a distinction between the GP \(\vect{f}\) and the `true' function \(\vect{f}^*\).

Assume that \(\vect{f}\) can be observed according to a model
\begin{equation} \label{eq:app-obs-model}
    y_m = f_m(\vect{x}) + \varepsilon_m
\end{equation}
where \(m \in [M] = \{1, \dots, M\}\) indexes the component of \(\vect{f}\) being evaluated, \(\vect{x} \in \mathcal{X}\) is the input location and \(\varepsilon_m \sim \mathcal{N}(0, \sigma_m^2)\) denotes the observation noise.
Let \(\vect{c} \in (0, \infty)^M\) be a vector of strictly positive evaluation costs associated with each objective.
We consider sequentially selected (data-dependent) design points and indices \(\vect{x}_1, m_1, \vect{x}_2, m_2, \dots\) and denote the corresponding observations by \(y_{1, m_1}, y_{2, m_2}, \dots\). We write \(\varepsilon_{1,m_1}, \varepsilon_{2, m_2}, \dots\) for the independent noise terms added for each observation.

Let \(\Lambda = \{\vect{\lambda} \in [0, \infty)^M : \sum_{j=1}^M \lambda_j = 1\}\) denote the standard simplex in \(\mathbb{R}^M\).
We will prove results for both the case of expectation over scalarizations and of random scalarizations.
In both cases, we will assume that the probability density \(p(\vect{\lambda})\) is strictly positive everywhere on \(\Lambda\).
For the algorithm which uses random scalarizations, we denote the sequence of scalarization weights by \(\vect{\lambda}_1, \vect{\lambda}_2, \dots\).
For each \(n \in \mathbb{N}_0\), denote by \(\mathcal{F}_n = \sigma(\{\vect{x}_j, m_j, y_{j, m_j}, \varepsilon_{j, m_j}\}_{j=1}^n \cup \{\vect{\lambda}_j\}_{j=1}^\infty)\), the \(\sigma\)-algebra generated by all the information available at time \(n\) along with all the full sequence of scalarization weights. Of course, while harmless, we only need to include the \(\vect{\lambda}_j\) in the case of random scalarizations.
% Observe that \(\mathcal{F}_1 \subset \mathcal{F}_2 \subset \dots\) is a filtration of \(\mathcal{F}\).
In a slight change from the notation in the main text, we then write \(\mathbb{E}_n[\,\cdot\,]\) for the expectation conditional on \(\mathcal{F}_n\). In order to cleanly handle both algorithms, when a random scalarization weight vector, \(\vect{\lambda}\), is present, we shall consider it part of the conditioned variables,
\[\mathbb{E}_n[\,\cdot\,] = \mathbb{E}[\,\cdot \,|\, \sigma(\mathcal{F}_n, \vect{\lambda})].\]
Then, in order to denote expectation over \(\vect{\lambda}\) alone, we use \(\mathbb{E}_{\vect{\lambda} \sim p(\vect{\lambda})}[\,\cdot\,]\).

Recall our two C-MOKG acquisition functions from the main text. While our definition of \(\mathbb{E}_n\) now includes the full sequence of random scalarizations instead of just those up to time \(n\), thanks to the causal structure, this does not change the formulae,
\begin{subequations} \label{eq:app-cmokg}
\begin{align}
    \alpha_{\text{C-MOKG}}^n(\vect{x}, m; \vect{\lambda}, \vect{c}) &= \frac{1}{c_m} \left( \mathbb{E}_n\!\left[ \max_{\vect{x}' \in \mathcal{X}} \mu_{n+}^s(\vect{x}'; \vect{x}, m, \vect{\lambda}) \right] - \max_{\vect{x}' \in \mathcal{X}} \mu_n^s(\vect{x}'; \vect{\lambda}) \right), \label{eq:app-cmokg-r} \\
    \overline{\alpha}_{\text{C-MOKG}}^n(\vect{x}, m; \vect{c}) &= \mathbb{E}_{\vect{\lambda} \sim p(\vect{\lambda})}\left[ \alpha_{\text{C-MOKG}}^n(\vect{x}, m; \vect{\lambda}, \vect{c}) \right], \label{eq:app-cmokg-e}
\end{align}
\end{subequations}
where \(\mu_n^s(\vect{x}'; \vect{\lambda}) = \mathbb{E}_n[\vect{\lambda} \cdot \vect{f}(\vect{x})]\), \(\mu_{n+}^s(\vect{x}'; \vect{x}, m, \vect{\lambda}) = \mathbb{E}_n[\vect{\lambda} \cdot \vect{f}(\vect{x}') \,|\, y_m^{\vect{x}} ]\) and \(y_m^{\vect{x}} = f_m(\vect{x}) + \varepsilon_m\) is the hypothesized next observation. Note the superscript \(s\) used in this appendix, to distinguish the posterior mean of the scalarized GP from the posterior mean of the unscalarized GP which will be introduced before \cref{thm:app-unif-convergence-cond-gps}.

Informally, we will assume that the design points \(\vect{x}_1, \vect{x}_2, \dots\) and objective indices \(m_1, m_2, \dots\) are chosen to maximize the relevant acquisition function at each step. However, the knowledge gradient is not continuous everywhere so it is not obvious that it attains its maximum. Further, in practice we will never perfectly maximize the acquisition function. For these reasons, we assume that the design points are chosen only to be approximate maximizers of the acquisition functions. That is, we assume that there exists a sequence \(\eta = (\eta_n)_{n=0}^\infty\) of small, non-negative real numbers satisfying \(\eta_n \to 0\) as \(n \to \infty\) and such that one of
\begin{subequations}\label{eq:app-quasi-sur}
\begin{align}
    \begin{split}
        \forall n \in \mathbb{N}_0\quad &\alpha_{\text{C-MOKG}}^n(\vect{x}_{n+1}, m_{n+1}\,;\; \vect{\lambda}_{n+1}, \vect{c})\\
        &\qquad > \sup_{\substack{\vect{x} \in \mathcal{X} \\ m \in [M]}} \alpha_{\text{C-MOKG}}^n(\vect{x}, m\,;\; \vect{\lambda}_{n+1}, \vect{c}) - \eta_n, 
    \end{split}
    \label{eq:app-quasi-sur-1}
    \\
    \begin{split}
        \forall n \in \mathbb{N}_0\quad
        &\overline{\alpha}_{\text{C-MOKG}}^n(\vect{x}_{n+1}, m_{n+1}; \vect{c})\\
        &\qquad > \sup_{\substack{\vect{x} \in \mathcal{X} \\ m \in [M]}} \overline{\alpha}_{\text{C-MOKG}}^n(\vect{x}, m; \vect{c}) - \eta_n,
    \end{split}
    \label{eq:app-quasi-sur-2}
\end{align}
\end{subequations}
depending on whether we are using random scalarizations or expectation over scalarization weights.
This is referred to as an \emph{\(\eta\)-quasi-SUR} sequential design by Bect et al. \cite{bect2009sur}\footnote{Note that in \cite{bect2009sur}, the character \(\varepsilon\) is used in place of \(\eta\)}.

\subsection{\texorpdfstring{Convergence of \(\alpha_{\text{C-MOKG}}^n\) to Zero}{Convergence of C-MOKG to zero}}
\begin{definition}\label{defn:app-resid-uncert}
    By analogy with \cite{bect2009sur}, let us set up some notation for the \emph{residual uncertainty} associated with each acquisition function at step \(n \in \mathbb{N}_0\),
    \begin{subequations}
    \begin{align}
        \forall \vect{\lambda} \in \Lambda\quad H_n(\vect{\lambda}) &= \mathbb{E}_n\left[\max_{\vect{x}' \in \mathcal{X}} \vect{\lambda} \cdot \vect{f}(\vect{x}') \right] - \max_{\vect{x}' \in \mathcal{X}} \mathbb{E}_n[ \vect{\lambda} \cdot \vect{f}(\vect{x}')], \\
        \overline{H}_n &= \mathbb{E}_{\vect{\lambda} \sim p(\vect{\lambda})}[H_n(\vect{\lambda})]
    \end{align}
    \end{subequations}
\end{definition}

\begin{lemma}
    The residual uncertainty in \cref{defn:app-resid-uncert} is well defined. That is,
    \[
        \forall n \in \mathbb{N}_0\, \forall \vect{\lambda} \in \Lambda\quad H_n(\vect{\lambda}) < \infty \quad\text{a.s.}
        \qquad\text{and}\qquad
        \forall n \in \mathbb{N}_0 \quad \overline{H}_n < \infty \quad\text{a.s..}
    \]
\end{lemma}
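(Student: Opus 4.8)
The plan is to reduce both finiteness assertions to a single analytic fact: the supremum of the Gaussian process \(\vect{f}\) over the compact index set \(\mathcal{X}\) is integrable. With this in hand, everything else is bookkeeping exploiting the \(\ell_1\)--\(\ell_\infty\) duality afforded by \(\vect{\lambda}\) lying in the simplex. Concretely, I would introduce
\[
    S = \max_{m \in [M]} \sup_{\vect{x}' \in \mathcal{X}} \lvert f_m(\vect{x}') \rvert,
\]
which is almost surely finite since \(\vect{f}\) has continuous sample paths and \(\mathcal{X}\) is compact. Because every \(\vect{\lambda} \in \Lambda\) satisfies \(\sum_m \lambda_m = 1\) with \(\lambda_m \ge 0\), I obtain the pointwise estimate \(\lvert \vect{\lambda} \cdot \vect{f}(\vect{x}') \rvert \le \max_m \lvert f_m(\vect{x}') \rvert \le S\) for every \(\vect{x}'\). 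Monotonicity of conditional expectation then bounds both constituents of \(H_n(\vect{\lambda})\): the first term obeys \(\lvert \mathbb{E}_n[ \max_{\vect{x}'} \vect{\lambda} \cdot \vect{f}(\vect{x}') ] \rvert \le \mathbb{E}_n[S]\), and for the second, \(\lvert \mathbb{E}_n[\vect{\lambda} \cdot \vect{f}(\vect{x}')] \rvert \le \mathbb{E}_n[S]\) uniformly in \(\vect{x}'\), whence \(\lvert \max_{\vect{x}'} \mathbb{E}_n[\vect{\lambda} \cdot \vect{f}(\vect{x}')] \rvert \le \mathbb{E}_n[S]\). Together these give \(\lvert H_n(\vect{\lambda}) \rvert \le 2\,\mathbb{E}_n[S]\), a bound that is uniform in \(\vect{\lambda}\), so the whole lemma rests on showing \(\mathbb{E}_n[S] < \infty\) almost surely.

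To establish \(\mathbb{E}_n[S] < \infty\) I would sidestep any subtlety arising from the adaptively chosen conditioning \(\mathcal{F}_n\) by invoking the tower property: it suffices to prove the \emph{unconditional} bound \(\mathbb{E}[S] < \infty\), because then \(\mathbb{E}[\mathbb{E}_n[S]] = \mathbb{E}[S] < \infty\) forces \(\mathbb{E}_n[S] < \infty\) a.s. This is the step where the Gaussian structure is genuinely used. Decomposing \(f_m = \mu_m + g_m\) with \(g_m\) a centred continuous Gaussian process, I have \(\sup_{\vect{x}'} \lvert \mu_m(\vect{x}') \rvert < \infty\) by continuity on a compact set, while \(\mathbb{E}[\sup_{\vect{x}'} \lvert g_m(\vect{x}') \rvert] < \infty\) by Fernique's theorem (equivalently the Borell--TIS inequality for the supremum of a centred Gaussian process with bounded sample paths). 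Summing over the finitely many objectives gives \(\mathbb{E}[S] \le \sum_{m=1}^M \mathbb{E}[\sup_{\vect{x}'} \lvert f_m(\vect{x}') \rvert] < \infty\), which closes the argument and yields \(H_n(\vect{\lambda}) < \infty\) a.s.\ for every fixed \(\vect{\lambda}\) and \(n\).

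Finally, for \(\overline{H}_n\) I would exploit that the bound \(\lvert H_n(\vect{\lambda}) \rvert \le 2\,\mathbb{E}_n[S]\) is independent of \(\vect{\lambda}\): after checking that \(\vect{\lambda} \mapsto H_n(\vect{\lambda})\) is measurable (which follows from continuity of the posterior mean in \(\vect{\lambda}\)), integration against the density \(p(\vect{\lambda})\) over the simplex gives \(\lvert \overline{H}_n \rvert \le 2\,\mathbb{E}_n[S] < \infty\) a.s. The non-negativity \(\overline{H}_n \ge 0\) is an incidental bonus, coming from the conditional Jensen inequality in the same way as \cref{thm:cmokg-nonneg}. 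I expect the only non-routine ingredient to be the integrability of the Gaussian supremum via Fernique/Borell--TIS; the duality estimate, the tower-property reduction, and the uniform-in-\(\vect{\lambda}\) bound are all elementary once that is secured.
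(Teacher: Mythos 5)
Your proof is correct, and its skeleton matches the paper's: bound \(H_n(\vect{\lambda})\) by twice the conditional expectation of a supremum of the process, reduce everything to integrability of the supremum of a continuous Gaussian process on a compact set, and handle \(\overline{H}_n\) by integrating a bound that is uniform in \(\vect{\lambda}\). Two details differ. First, your H\"older estimate is the transpose of the paper's: you pair \(\|\vect{\lambda}\|_1 = 1\) with \(\max_m |f_m(\vect{x}')|\), while the paper pairs \(\lambda_m \le 1\) with \(\|\vect{f}(\vect{x}')\|_1\) and then sums over the \(M\) coordinates; this is immaterial (your bound is in fact the tighter of the two). Second, and more substantively, the paper establishes \(\mathbb{E}_n[\max_{\vect{x}'} |f_m(\vect{x}')|] < \infty\) by applying a Gaussian-supremum theorem (Theorem 2.9 of \cite{azais2009levelsets}) directly under the conditional expectation \(\mathbb{E}_n\), which tacitly uses that the posterior law of \(\vect{f}\) given the adaptively chosen data is again a continuous Gaussian process. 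Your route instead proves the unconditional bound \(\mathbb{E}[S] < \infty\) via Fernique/Borell--TIS applied to the prior (after splitting off the continuous mean), and then observes that a conditional expectation of an integrable random variable is almost surely finite. This tower-property reduction buys you independence from any structure of the conditioning --- it works verbatim for arbitrary \(\sigma\)-algebras, Gaussian posterior or not --- and is arguably the cleaner way to dispatch the adaptive sampling; the paper's version keeps the estimate in terms of posterior quantities, which is the form reused in its later lemmas. Your side remark that \(\vect{\lambda} \mapsto H_n(\vect{\lambda})\) must be measurable for \(\overline{H}_n\) to make sense flags a point the paper glosses over at this stage; it follows, for instance, from the Lipschitz property proved in \cref{thm:app-cmokg-and-h-lipschitz}.
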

\begin{proof}
    We will consider the first form first. Let \(\vect{\lambda} \in \Lambda\).
    Then, since all components of \(\vect{\lambda}\) lie between 0 and 1, we have \(|\vect{\lambda} \cdot \vect{f}(\vect{x}')| \leq \| \vect{f}(\vect{x}') \|_1\) for all \(\vect{x}' \in \mathcal{X}\).
    Therefore,
    \begin{align*}
        H_n(\vect{\lambda}) &\leq \mathbb{E}_n\left[ \max_{\vect{x}' \in \mathcal{X}} | \vect{\lambda} \cdot \vect{f}(\vect{x}') | \right] + \max_{\vect{x}' \in \mathcal{X}} \mathbb{E}_n |\vect{\lambda} \cdot \vect{f}(\vect{x}') | \\
        &\leq 2 \mathbb{E}_n\left[ \max_{\vect{x}' \in \mathcal{X}} | \vect{\lambda} \cdot \vect{f}(\vect{x}') | \right] \\
        &\leq 2 \mathbb{E}_n\left[\max_{\vect{x}' \in \mathcal{X}} \|\vect{f}(\vect{x}')\|_1 \right] \\
        &\leq 2 \sum_{m=1}^M \mathbb{E}_n\left[\max_{\vect{x}' \in \mathcal{X}} |f_m(\vect{x}')| \right]
        < \infty.
    \end{align*}
    The final inequality here follows since each \(f_m\) has continuous sample paths and \(\mathcal{X}\) is compact. For example, use Theorem 2.9 from \cite{azais2009levelsets} and note that for any \(m\) and any \(\vect{x} \in \mathcal{X}\), we have \(\mathbb{E}_n[\max_{\vect{x}' \in \mathcal{X}} |f_m(\vect{x}')|] \leq 2 \mathbb{E}_n[\max_{\vect{x}' \in \mathcal{X}} f_m(\vect{x}')] + \mathbb{E}_n|f_m(\vect{x})|\).
    
    To show the same for \(\overline{H}_n\), we simply take expectation over \(\vect{\lambda} \sim p(\Lambda)\). That is,
    \[\overline{H}_n = \mathbb{E}_{\vect{\lambda} \sim p(\vect{\lambda})}[H_n(\vect{\lambda})] \leq \sum_{m=1}^M \mathbb{E}_n\left[\sup_{\vect{x} \in \mathcal{X}} |f_m(\vect{x})| \right] < \infty.\]
\end{proof}

\begin{remark}
    Consider the case of random scalarizations. Since \(\vect{x}_{n+1}\) and \(m_{n+1}\) are deterministic after conditioning on \(\mathcal{F}_n\),%
    \footnote{Formally, we say that \(\vect{x}_{n+1}\) and \(m_{n+1}\) are \(\mathcal{F}_n\)-measurable.}
    we can substitute them directly inside the outer expectation in \(\alpha_{\text{C-MOKG}}^n(\vect{x}_{n+1}, m_{n+1};\, \vect{\lambda}_{n+1}, \vect{c})\).
    Further, for all \(\vect{x}' \in \mathcal{X}\), \(\mu_{n+}^s(\vect{x}'; \vect{x}_{n+1}, m_{n+1}, \vect{\lambda}_{n+1}) = \mu_{n+1}^s(\vect{x}'; \vect{\lambda}_{n+1})\) and we have
    \begin{align}
        &\mkern-36mu \alpha_{\text{C-MOKG}}^n(\vect{x}_{n+1}, m_{n+1};\, \vect{\lambda}_{n+1}, \vect{c}) \nonumber \\
        &= \frac{1}{c_{m_{n+1}}} \left( \mathbb{E}_n \!\left[ \max_{\vect{x}' \in \mathcal{X}} \mu_{n+1}^s(\vect{x}'; \vect{\lambda}_{n+1}) \right] - \max_{\vect{x}' \in \mathcal{X}} \mu_n^s(\vect{x}'; \vect{\lambda}_{n+1}) \right) \nonumber \\
        &= \frac{1}{c_{m_{n+1}}} \left( \mathbb{E}_n \!\left[ \max_{\vect{x}' \in \mathcal{X}} \mathbb{E}_{n+1}\left[\vect{\lambda}_{n+1} \cdot \vect{f}(\vect{x}') \right] \right] - \max_{\vect{x}' \in \mathcal{X}} \mathbb{E}_n[\vect{\lambda}_{n+1} \cdot \vect{f}(\vect{x}')] \right) \nonumber \\
        &= \frac{1}{c_{m_{n+1}}} \Bigl( H_n(\vect{\lambda}_{n+1}) - \mathbb{E}_n[H_{n+1}(\vect{\lambda}_{n+1})] \Bigr) \quad\text{a.s.} \label{eq:app-anxn-1}
    \end{align}
    We obtain a similar result for the case of average scalarizations, giving
    \begin{align}\label{eq:app-anxn-2}
        \overline{\alpha}_{\text{C-MOKG}}^n(\vect{x}_{n+1}, m_{n+1}; \vect{c})
        &= \mathbb{E}_{\vect{\lambda} \sim p(\vect{\lambda})}\bigl[ \alpha_{\text{C-MOKG}}^n(\vect{x}_{n+1}, m_{n+1}, \vect{\lambda}; \vect{c}) \bigr] \nonumber \\
        &= \frac{1}{c_{m_{n+1}}} \Bigl(\overline{H}_n - \mathbb{E}_{n}\bigl[\,\overline{H}_{n+1}\bigr] \Bigr)\quad\text{a.s.}
    \end{align}
\end{remark}

We begin with three closely related lemmas.
% Manually set the theorem counter to 0 so that this gets labelled "Lemma 1"
\setcounter{tmpcounter}{\value{theorem}}
\setcounter{theorem}{0}
\begin{lemma}[Restated from main text] \label{thm:app-cmokg-nonneg}
    Both forms of the C-MOKG are non-negative. That is, for all \(n \in \mathbb{N}_0\), \(\vect{x} \in \mathcal{X}\), \(m \in \{1, \dots, M\}\) and all \(\vect{\lambda} \in \Lambda\),
    \begin{equation*}
        \alpha_{\text{C-MOKG}}^n(\vect{x}, m;\, \vect{\lambda}, \vect{c}) \geq 0 \quad\text{a.s.}
        \qquad\text{and}\qquad
        \overline{\alpha}_{\text{C-MOKG}}^n(\vect{x}, m; \vect{c}) \geq 0 \quad\text{a.s.}.
    \end{equation*}
\end{lemma}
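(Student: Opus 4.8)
The plan is to use the positivity of the costs to reduce both claims to the non-negativity of the bracketed (uncost-weighted) term, and then to derive that from the tower property of conditional expectation together with the elementary fact that a maximum of expectations is dominated by the expectation of the maximum. This is the standard argument for why knowledge-gradient acquisition functions are non-negative; the only extra care is that the inner maximum is taken over the continuum \(\mathcal{X}\).

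First, since each \(c_m > 0\), dividing by \(c_m\) preserves sign, so it suffices to show that for every fixed \(\vect{x} \in \mathcal{X}\), \(m \in [M]\) and \(\vect{\lambda} \in \Lambda\),
\[
    \mathbb{E}_n\!\left[\max_{\vect{x}' \in \mathcal{X}} \mu_{n+}^s(\vect{x}'; \vect{x}, m, \vect{\lambda})\right] \geq \max_{\vect{x}' \in \mathcal{X}} \mu_n^s(\vect{x}'; \vect{\lambda}) \quad\text{a.s.}
\]
The measurability and integrability of the inner maximum, needed for both sides to be well-defined, is exactly what the preceding lemma on the finiteness of the residual uncertainty supplies, via the continuity of the sample paths of \(\vect{f}\) and the compactness of \(\mathcal{X}\).

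The key step is to observe that, for each fixed \(\vect{x}' \in \mathcal{X}\), conditioning on the hypothesised observation \(y_m^{\vect{x}}\) refines \(\mathcal{F}_n\), so the tower property gives \(\mu_n^s(\vect{x}'; \vect{\lambda}) = \mathbb{E}_n\!\left[\mathbb{E}_n[\vect{\lambda} \cdot \vect{f}(\vect{x}') \mid y_m^{\vect{x}}]\right] = \mathbb{E}_n\!\left[\mu_{n+}^s(\vect{x}'; \vect{x}, m, \vect{\lambda})\right]\). Bounding the integrand pointwise by its maximum over \(\mathcal{X}\) then yields \(\mu_n^s(\vect{x}'; \vect{\lambda}) \leq \mathbb{E}_n\!\left[\max_{\vect{x}'' \in \mathcal{X}} \mu_{n+}^s(\vect{x}''; \vect{x}, m, \vect{\lambda})\right]\) for every \(\vect{x}'\). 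Since the right-hand side is independent of \(\vect{x}'\), taking the supremum over \(\vect{x}' \in \mathcal{X}\) on the left gives the displayed inequality, and hence \(\alpha_{\text{C-MOKG}}^n(\vect{x}, m; \vect{\lambda}, \vect{c}) \geq 0\) almost surely.

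Finally, the averaged form is by definition \(\overline{\alpha}_{\text{C-MOKG}}^n(\vect{x}, m; \vect{c}) = \mathbb{E}_{\vect{\lambda} \sim p(\vect{\lambda})}[\alpha_{\text{C-MOKG}}^n(\vect{x}, m; \vect{\lambda}, \vect{c})]\), an expectation of an almost surely non-negative quantity, and is therefore itself non-negative almost surely. I do not expect any genuine obstacle; the one point requiring a little care is the measurability and finiteness of the maxima over the continuum \(\mathcal{X}\), but this is already secured by the residual-uncertainty lemma, so the remainder is essentially the textbook non-negativity proof for knowledge gradient.
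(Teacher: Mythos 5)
Your proof is correct and follows essentially the same route as the paper's: both arguments combine the tower property \(\mu_n^s(\vect{x}''; \vect{\lambda}) = \mathbb{E}_n[\mu_{n+}^s(\vect{x}''; \vect{x}, m, \vect{\lambda})]\) with the pointwise bound by the maximum, take the supremum over the free location, use positivity of \(c_m\), and then integrate over \(\vect{\lambda} \sim p(\vect{\lambda})\) for the averaged form. The only differences are cosmetic (you state the tower-property equality first and then bound by the maximum, whereas the paper bounds first and then takes \(\mathbb{E}_n\)), plus your explicit remark on measurability and finiteness of the suprema, which the paper leaves implicit by citing its residual-uncertainty lemma.
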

\setcounter{theorem}{\value{tmpcounter}}
\begin{proof}
    We first prove the result for \(\alpha_{\text{C-MOKG}}^n(\,\cdot\,, \cdot\,;\, \vect{\lambda}, \vect{c})\), which takes the scalarization weights \(\vect{\lambda}\) as an argument.
    Let \(n \in \mathbb{N}_0\), \(\vect{x} \in \mathcal{X}\), \(m \in \{1, \dots, M\}\) and let \(\vect{\lambda} \in \Lambda\) be random.
    Then, for any \(\vect{x}'' \in \mathcal{X}\),
    \[
        \max_{\vect{x}' \in \mathcal{X}} \mu_{n+}^s(\vect{x}'; \vect{x}, m, \vect{\lambda})
        \geq
        \mu_{n+}^s(\vect{x}''; \vect{x}, m, \vect{\lambda}).
    \]
    Taking expectation conditional on \(\mathcal{F}_n\) and \(\vect{\lambda}\) using \(\mathbb{E}_n\) gives
    \[
        \mathbb{E}_n\!\left[ \max_{\vect{x}' \in \mathcal{X}} \mu_{n+}^s(\vect{x}'; \vect{x}, m, \vect{\lambda}) \right]
        \geq
        \mathbb{E}_n\!\bigl[ \mu_{n+}^s(\vect{x}''; \vect{x}, m, \vect{\lambda}) \bigr]
        = \mu_n^s(\vect{x}''; \vect{\lambda}).
    \]
    Finally, this holds for all \(\vect{x}'' \in \mathcal{X}\) so certainly holds for the maximum
    \begin{align*}
        & \mathbb{E}_n\!\left[ \max_{\vect{x}' \in \mathcal{X}} \mu_{n+}^s(\vect{x}'; \vect{x}, m, \vect{\lambda}) \right]
        \geq
        \max_{\vect{x}' \in \mathcal{X}} \mu_n^s(\vect{x}'; \vect{\lambda})
        \\
        \Rightarrow\quad& \alpha_{\text{C-MOKG}}^n(\vect{x}, m;\, \vect{\lambda}, \vect{c}) \\
        &\qquad\qquad = \frac{1}{c_m} \left( \mathbb{E}_n\!\left[ \max_{\vect{x}' \in \mathcal{X}} \mu_{n+}^s(\vect{x}'; \vect{x}, m, \vect{\lambda}) \right]
        - \max_{\vect{x}' \in \mathcal{X}} \mu_n^s(\vect{x}'; \vect{\lambda}) \right)
        \geq 0.
    \end{align*}
    To show the result for \(\overline{\alpha}_{\text{C-MOKG}}\), we simply take expectation to integrate out \(\vect{\lambda}\),
    \[\overline{\alpha}_{\text{C-MOKG}}^n(\vect{x}, m; \vect{c}) = \mathbb{E}_{\vect{\lambda} \sim p(\vect{\lambda})}[\alpha_{\text{C-MOKG}}^n(\vect{x}, m;\, \vect{\lambda}, \vect{c})] \geq 0.\]
\end{proof}

\begin{lemma}\label{thm:app-hn-bigger-than-cmokg}
    In both cases, the residual uncertainty is at least the C-MOKG scaled by the objective cost. That is, for all \(n \in \mathbb{N}_0\), \(\vect{x} \in \mathcal{X}\), \(m \in \{1, \dots, M\}\) and all \(\vect{\lambda} \in \Lambda\),
    \begin{equation*}
        c_m \alpha_{\text{C-MOKG}}^n(\vect{x}, m;\, \vect{\lambda}, \vect{c}) \leq H_n(\vect{\lambda})
        \qquad\text{and}\qquad
        c_m \overline{\alpha}_{\text{C-MOKG}}^n(\vect{x}, m; \vect{c}) \leq \overline{H}_n.
    \end{equation*}
\end{lemma}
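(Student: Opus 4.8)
The plan is to prove the single-scalarization bound first and then integrate out \(\vect{\lambda}\) for the averaged form. The key observation is a cancellation: using \(\max_{\vect{x}'} \mathbb{E}_n[\vect{\lambda}\cdot\vect{f}(\vect{x}')] = \max_{\vect{x}'}\mu_n^s(\vect{x}';\vect{\lambda})\), the term \(\max_{\vect{x}'}\mu_n^s\) appears both as the subtracted term in \(H_n(\vect{\lambda})\) and (scaled by \(c_m\)) in \(\alpha_{\text{C-MOKG}}^n\). Forming the difference,
\[
H_n(\vect{\lambda}) - c_m\,\alpha_{\text{C-MOKG}}^n(\vect{x}, m; \vect{\lambda}, \vect{c}) = \mathbb{E}_n\Big[\max_{\vect{x}'\in\mathcal{X}}\vect{\lambda}\cdot\vect{f}(\vect{x}')\Big] - \mathbb{E}_n\Big[\max_{\vect{x}'\in\mathcal{X}}\mu_{n+}^s(\vect{x}';\vect{x},m,\vect{\lambda})\Big],
\]
so the claim reduces to showing that this difference is non-negative, i.e. \(\mathbb{E}_n[\max_{\vect{x}'}\mu_{n+}^s] \leq \mathbb{E}_n[\max_{\vect{x}'}\vect{\lambda}\cdot\vect{f}]\).

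I would establish this inequality by conditioning on the hypothesised observation. Writing \(\mathcal{G} = \sigma(\mathcal{F}_n, \vect{\lambda}, y_m^{\vect{x}})\), recall that \(\mu_{n+}^s(\vect{x}';\vect{x},m,\vect{\lambda}) = \mathbb{E}[\vect{\lambda}\cdot\vect{f}(\vect{x}') \mid \mathcal{G}]\). For each fixed \(\vect{x}'\) we have the pointwise bound \(\mathbb{E}[\vect{\lambda}\cdot\vect{f}(\vect{x}')\mid\mathcal{G}] \leq \mathbb{E}[\max_{\vect{x}''\in\mathcal{X}}\vect{\lambda}\cdot\vect{f}(\vect{x}'')\mid\mathcal{G}]\); since the right-hand side does not depend on \(\vect{x}'\), taking the maximum over \(\vect{x}'\) on the left gives \(\max_{\vect{x}'}\mu_{n+}^s(\vect{x}';\vect{x},m,\vect{\lambda}) \leq \mathbb{E}[\max_{\vect{x}''}\vect{\lambda}\cdot\vect{f}(\vect{x}'')\mid\mathcal{G}]\). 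Because \(\mathcal{G}\supseteq\sigma(\mathcal{F}_n,\vect{\lambda})\), applying \(\mathbb{E}_n[\,\cdot\,]\) to both sides and using the tower property collapses the right-hand side to \(\mathbb{E}_n[\max_{\vect{x}''}\vect{\lambda}\cdot\vect{f}(\vect{x}'')]\), which is exactly the required bound. This is the familiar ``maximum of conditional means is at most the conditional mean of the maximum'' fact already used for \cref{thm:app-cmokg-nonneg}.

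For the averaged C-MOKG, since the cost \(c_m\) attached to the fixed index \(m\) is deterministic, I would simply take \(\mathbb{E}_{\vect{\lambda}\sim p(\vect{\lambda})}\) of the single-scalarization inequality,
\[
c_m\,\overline{\alpha}_{\text{C-MOKG}}^n(\vect{x}, m; \vect{c}) = \mathbb{E}_{\vect{\lambda}\sim p(\vect{\lambda})}\big[c_m\,\alpha_{\text{C-MOKG}}^n(\vect{x}, m; \vect{\lambda}, \vect{c})\big] \leq \mathbb{E}_{\vect{\lambda}\sim p(\vect{\lambda})}[H_n(\vect{\lambda})] = \overline{H}_n,
\]
completing both cases.

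The only delicate part is the measure-theoretic bookkeeping around the conditioning: treating \(\vect{\lambda}\) consistently as part of the conditioned information inside \(\mathbb{E}_n\), identifying \(\mu_{n+}^s\) as a conditional expectation with respect to the enlarged \(\sigma\)-algebra \(\mathcal{G}\), and invoking the tower property with the correct nesting \(\sigma(\mathcal{F}_n,\vect{\lambda})\subseteq\mathcal{G}\). The integrability needed to legitimise these manipulations is already supplied by the preceding lemma, which guarantees \(H_n(\vect{\lambda})<\infty\) and \(\overline{H}_n<\infty\) almost surely; no further estimates are required.
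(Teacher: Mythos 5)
Your proposal is correct and takes essentially the same route as the paper's proof: the same pointwise bound \(\vect{\lambda} \cdot \vect{f}(\vect{x}') \leq \max_{\vect{x}''} \vect{\lambda} \cdot \vect{f}(\vect{x}'')\), monotonicity of conditional expectation given \(y_m^{\vect{x}}\), the tower property to collapse back to \(\mathbb{E}_n\), and integration over \(\vect{\lambda}\) for the averaged form. The only cosmetic difference is that you cancel the common term \(\max_{\vect{x}'} \mu_n^s(\vect{x}'; \vect{\lambda})\) at the outset and reduce to a single inequality, whereas the paper carries that term through its chain of implications and identifies \(H_n(\vect{\lambda}) \geq c_m \alpha_{\text{C-MOKG}}^n\) only at the final step.
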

\begin{proof}
    Let \(\vect{\lambda} \in \Lambda\). By a very similar argument as was used to show that the C-MOKG was non-negative in \cref{thm:app-cmokg-nonneg},
    \begin{alignat*}{3}
        && \forall \vect{x}'' \in \mathcal{X}& \quad& \max_{\vect{x}' \in \mathcal{X}} \vect{\lambda} \cdot \vect{f}(\vect{x}') &\geq \vect{\lambda} \cdot \vect{f}(\vect{x}'') \\
        &\Rightarrow\quad & \forall n \in \mathbb{N}_0\; \forall \vect{x}'' \in \mathcal{X}& \quad& \mathbb{E}_n\left[\max_{\vect{x}' \in \mathcal{X}} \vect{\lambda} \cdot \vect{f}(\vect{x}') \,\middle|\, y_m^{\vect{x}} \right] &\geq \mathbb{E}_n\left[\vect{\lambda} \cdot \vect{f}(\vect{x}'') \,\middle|\, y_m^{\vect{x}} \right] \\
        &&&&&= \mu_{n+}^s(\vect{x}''; \vect{x}, m, \vect{\lambda}) \\
        &\Rightarrow\quad & \forall n \in \mathbb{N}_0&\quad & \mathbb{E}_n\left[\max_{\vect{x}' \in \mathcal{X}} \vect{\lambda} \cdot \vect{f}(\vect{x}') \,\middle|\, y_m^{\vect{x}} \right] &\geq \max_{\vect{x}' \in \mathcal{X}} \mu_{n+}^s(\vect{x}'; \vect{x}, m, \vect{\lambda}) \\
        &\Rightarrow\quad & \forall n \in \mathbb{N}_0&\quad & \mathbb{E}_n\!\left[\max_{\vect{x}' \in \mathcal{X}} \vect{\lambda} \cdot \vect{f}(\vect{x}')\right] &\geq \mathbb{E}_n\!\left[\max_{\vect{x}' \in \mathcal{X}} \mu_{n+}^s(\vect{x}'; \vect{x}, m, \vect{\lambda})\right] \\
        &\Rightarrow\quad & \forall n \in \mathbb{N}_0&\quad & H_n(\vect{\lambda}) &\geq c_m \alpha_{\text{C-MOKG}}^n(\vect{x}, m;\, \vect{\lambda}, \vect{c}).
    \end{alignat*}
    To establish that \(\overline{H}_n \geq c_m \overline{\alpha}_{\text{C-MOKG}}^n(\vect{x}, m; \vect{c})\), we simply take expectation over \(\vect{\lambda} \sim p(\Lambda)\).
\end{proof}

\begin{lemma}\label{thm:app-Hn-supmart}
    For all \(\vect{\lambda} \in \Lambda\), the sequence \((H_n(\vect{\lambda}))_{n \in \mathbb{N}_0}\) is a non-negative supermartingale with respect to the filtration \((\mathcal{F}_n)_{n=0}^\infty\).
    Similarly, \((\overline{H}_n)_{n \in \mathbb{N}_0}\) is a non-negative supermartingale with respect to the same filtration.
    That is,
    \begin{alignat*}{4}
        \forall \vect{\lambda} \in \Lambda\; \forall n \in \mathbb{N}_0&\quad & H_n(\vect{\lambda}) &\geq 0 \text{ a.s.} &
        \qquad&\text{and}\qquad &
        H_n(\vect{\lambda}) &\geq \mathbb{E}_n[H_{n+1}(\vect{\lambda})] \text{ a.s.} \\
        \forall n \in \mathbb{N}_0&\quad & \overline{H}_n &\geq 0 \text{ a.s.} &
        \qquad&\text{and}\qquad &
        \overline{H}_n &\geq \mathbb{E}_n[\,\overline{H}_{n+1}] \text{ a.s.}
    \end{alignat*}
\end{lemma}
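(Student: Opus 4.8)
The plan is to split $H_n(\vect{\lambda})$ into its two constituent terms and treat each as a process in $n$, exploiting the tower property of conditional expectation. Write $A_n(\vect{\lambda}) = \mathbb{E}_n[\max_{\vect{x}' \in \mathcal{X}} \vect{\lambda} \cdot \vect{f}(\vect{x}')]$ and $B_n(\vect{\lambda}) = \max_{\vect{x}' \in \mathcal{X}} \mu_n^s(\vect{x}'; \vect{\lambda})$, so that $H_n(\vect{\lambda}) = A_n(\vect{\lambda}) - B_n(\vect{\lambda})$. Non-negativity is immediate and is exactly the fact underlying \cref{thm:app-cmokg-nonneg}: for any fixed $\vect{x}'' \in \mathcal{X}$ we have $\mu_n^s(\vect{x}''; \vect{\lambda}) = \mathbb{E}_n[\vect{\lambda} \cdot \vect{f}(\vect{x}'')] \leq \mathbb{E}_n[\max_{\vect{x}' \in \mathcal{X}} \vect{\lambda} \cdot \vect{f}(\vect{x}')] = A_n(\vect{\lambda})$, and maximizing the left-hand side over $\vect{x}''$ yields $B_n(\vect{\lambda}) \leq A_n(\vect{\lambda})$, i.e. $H_n(\vect{\lambda}) \geq 0$. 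The integrability needed to make all of these conditional expectations well defined is supplied by the finiteness lemma proved just above.

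For the supermartingale inequality I would first observe that $(A_n(\vect{\lambda}))_n$ is in fact a martingale. Since the random variable $\max_{\vect{x}' \in \mathcal{X}} \vect{\lambda} \cdot \vect{f}(\vect{x}')$ does not depend on $n$ and is integrable, the tower property gives $\mathbb{E}_n[A_{n+1}(\vect{\lambda})] = \mathbb{E}_n[\mathbb{E}_{n+1}[\max_{\vect{x}'} \vect{\lambda} \cdot \vect{f}(\vect{x}')]] = \mathbb{E}_n[\max_{\vect{x}'} \vect{\lambda} \cdot \vect{f}(\vect{x}')] = A_n(\vect{\lambda})$. For the second term I would show that $(B_n(\vect{\lambda}))_n$ is a submartingale. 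For each fixed $\vect{x}'' \in \mathcal{X}$ the posterior mean is itself a martingale, $\mu_n^s(\vect{x}''; \vect{\lambda}) = \mathbb{E}_n[\mu_{n+1}^s(\vect{x}''; \vect{\lambda})]$, so bounding $\mu_{n+1}^s(\vect{x}''; \vect{\lambda}) \leq \max_{\vect{x}'} \mu_{n+1}^s(\vect{x}'; \vect{\lambda})$ inside the expectation and then maximizing over $\vect{x}''$ gives $B_n(\vect{\lambda}) \leq \mathbb{E}_n[B_{n+1}(\vect{\lambda})]$. Combining the two via linearity of $\mathbb{E}_n$, $\mathbb{E}_n[H_{n+1}(\vect{\lambda})] = A_n(\vect{\lambda}) - \mathbb{E}_n[B_{n+1}(\vect{\lambda})] \leq A_n(\vect{\lambda}) - B_n(\vect{\lambda}) = H_n(\vect{\lambda})$, which is the desired inequality.

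The statement for $\overline{H}_n$ follows by integrating the inequality for $H_n(\vect{\lambda})$ over $\vect{\lambda} \sim p(\vect{\lambda})$. Non-negativity is immediate. For the supermartingale property the only point requiring care is the interchange of the two expectations $\mathbb{E}_n$ and $\mathbb{E}_{\vect{\lambda} \sim p(\vect{\lambda})}$; because $H_{n+1}(\vect{\lambda}) \geq 0$ this is justified by Tonelli's theorem, giving $\mathbb{E}_n[\overline{H}_{n+1}] = \mathbb{E}_n[\mathbb{E}_{\vect{\lambda} \sim p(\vect{\lambda})}[H_{n+1}(\vect{\lambda})]] = \mathbb{E}_{\vect{\lambda} \sim p(\vect{\lambda})}[\mathbb{E}_n[H_{n+1}(\vect{\lambda})]] \leq \mathbb{E}_{\vect{\lambda} \sim p(\vect{\lambda})}[H_n(\vect{\lambda})] = \overline{H}_n$.

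The individual steps are short, so the main obstacles are technical rather than conceptual. The chief one is the measurability and integrability bookkeeping: one must know that $\max_{\vect{x}' \in \mathcal{X}} \vect{\lambda} \cdot \vect{f}(\vect{x}')$ and $\max_{\vect{x}'} \mu_n^s(\vect{x}'; \vect{\lambda})$ are genuine (measurable, integrable) random variables, which relies on continuity of the sample paths together with compactness of $\mathcal{X}$ as used in the finiteness lemma, and that the conditional expectations compose correctly under the tower property for the filtration $(\mathcal{F}_n)$. Note that the argument never needs to interchange $\mathbb{E}_n$ with the maximum over $\vect{x}'$: it only ever pulls the maximum outside the expectation through the bound $\mathbb{E}_n[\max] \geq \max \mathbb{E}_n$ in one place and through a pointwise bound under the expectation in the other, so no minimax subtlety arises.
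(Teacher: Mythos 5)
Your proof is correct and follows essentially the same route as the paper: the paper's displayed chain of implications is precisely your statement that $B_n(\vect{\lambda}) = \max_{\vect{x}'} \mathbb{E}_n[\vect{\lambda} \cdot \vect{f}(\vect{x}')]$ is a submartingale, combined implicitly with the tower-property martingale identity for $A_n(\vect{\lambda}) = \mathbb{E}_n[\max_{\vect{x}'} \vect{\lambda} \cdot \vect{f}(\vect{x}')]$, and the passage to $\overline{H}_n$ is the same integration over $\vect{\lambda}$. The only cosmetic differences are that you prove non-negativity directly rather than citing the two preceding lemmas (the underlying inequality is identical), and you justify the interchange of $\mathbb{E}_n$ with $\mathbb{E}_{\vect{\lambda} \sim p(\vect{\lambda})}$ via Tonelli where the paper leaves it implicit.
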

\begin{proof}
    Let \(\vect{\lambda} \in \Lambda\).
    By \cref{thm:app-cmokg-nonneg,thm:app-hn-bigger-than-cmokg}, for all \(\vect{x} \in \mathcal{X}\) and all \(m \in \{1, \dots, M\}\),
    \[
        H_n(\vect{\lambda}) \geq c_m \alpha_{\text{C-MOKG}}^n(\vect{x}, m;\, \vect{\lambda}, \vect{c}) \geq 0
        \quad\text{and}\quad
        \overline{H}_n \geq c_m \overline{\alpha}_{\text{C-MOKG}}^n(\vect{x}, m; \vect{c}) \geq 0
    \]
    almost surely.
    Further, by a similar argument used to prove \cref{thm:app-cmokg-nonneg,thm:app-hn-bigger-than-cmokg}, for all \(n \in \mathbb{N}_0\),
    \begin{alignat*}{3}
        && \forall \vect{x}'' \in \mathcal{X}&\quad & \max_{\vect{x}'} \mathbb{E}_{n+1}[\vect{\lambda} \cdot \vect{f}(\vect{x}')] &\geq \mathbb{E}_{n+1}[\vect{\lambda} \cdot \vect{f}(\vect{x}'')] \\
        &\Rightarrow\quad & \forall \vect{x}'' \in \mathcal{X}&\quad & \mathbb{E}_n \left[ \max_{\vect{x}' \in \mathcal{X}} \mathbb{E}_{n+1}[\vect{\lambda} \cdot \vect{f}(\vect{x}')]\right] &\geq \mathbb{E}_n[\vect{\lambda} \cdot \vect{f}(\vect{x}'')] \\
        &\Rightarrow\quad &&& \mathbb{E}_n \left[ \max_{\vect{x}' \in \mathcal{X}} \mathbb{E}_{n+1}[\vect{\lambda} \cdot \vect{f}(\vect{x}')]\right] &\geq \max_{\vect{x}' \in \mathcal{X}} \mathbb{E}_n[\vect{\lambda} \cdot \vect{f}(\vect{x}')]\\
        &\Rightarrow\quad &&& \mathbb{E}_n[H_{n+1}(\vect{\lambda})] &\leq H_n(\vect{\lambda}).
    \end{alignat*}
    
    Taking expectation over \(\vect{\lambda} \sim p(\vect{\lambda})\) gives the result for \(\overline{H}_n\).
\end{proof}

Next, we establish a Lipschitz property of the C-MOKG and residual uncertainty.
\begin{lemma} \label{thm:app-cmokg-and-h-lipschitz}
    The C-MOKG and residual uncertainty exhibit the following Lipschitz style properties in \(\vect{\lambda} \in \Lambda\), which hold almost surely. Let \(\vect{\lambda}, \vect{\lambda}' \in \Lambda\) and \(n \in \mathbb{N}_0\). Then
    \begin{enumerate}
        \item for all \(\vect{x} \in \mathcal{X}\) and \(m \in \{1, \dots, M\}\),
        \begin{multline*}
            \Bigl|\alpha_{\text{C-MOKG}}^n(\vect{x}, m; \vect{\lambda}, \vect{c}) - \alpha_{\text{C-MOKG}}^n(\vect{x}, m; \vect{\lambda}', \vect{c})\Bigr| \\
            \leq
            \frac{2}{c_m} \|\vect{\lambda} - \vect{\lambda}'\|_2 \; \mathbb{E}_n\left[ \max_{\vect{x}' \in \mathcal{X}} \|\vect{f}(\vect{x}')\|_2 \right];
        \end{multline*}
        \item \begin{equation*}
            |H_n(\vect{\lambda}) - H_n(\vect{\lambda}')|
            \leq
            2 \|\vect{\lambda} - \vect{\lambda}'\|_2 \; \mathbb{E}_n\left[ \max_{\vect{x}' \in \mathcal{X}} \|\vect{f}(\vect{x}')\|_2 \right].
        \end{equation*}
    \end{enumerate}
\end{lemma}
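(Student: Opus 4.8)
The plan is to exploit the single most important structural fact: the scalarized posterior means are \emph{linear} in the weight vector. Since $\mu_n^s(\vect{x}'; \vect{\lambda}) = \vect{\lambda} \cdot \mathbb{E}_n[\vect{f}(\vect{x}')]$ and $\mu_{n+}^s(\vect{x}'; \vect{x}, m, \vect{\lambda}) = \vect{\lambda} \cdot \mathbb{E}_n[\vect{f}(\vect{x}') \,|\, y_m^{\vect{x}}]$, replacing $\vect{\lambda}$ by $\vect{\lambda}'$ only alters the weight dotted against a fixed (possibly random) vector-valued mean. The whole argument then reduces to repeatedly combining three elementary inequalities: the contraction property of the maximum, $\bigl|\max_{\vect{x}'} g(\vect{x}') - \max_{\vect{x}'} h(\vect{x}')\bigr| \leq \max_{\vect{x}'}|g(\vect{x}') - h(\vect{x}')|$; the Cauchy--Schwarz bound $|(\vect{\lambda} - \vect{\lambda}') \cdot \vect{v}| \leq \|\vect{\lambda} - \vect{\lambda}'\|_2\,\|\vect{v}\|_2$; and Jensen's inequality to pass the Euclidean norm through conditional expectations.

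I would prove part~(2) first, as it provides the template. Writing $H_n(\vect{\lambda}) - H_n(\vect{\lambda}')$ as a difference of two bracketed terms, I bound each separately. For the first bracket, $\mathbb{E}_n[\max_{\vect{x}'} \vect{\lambda} \cdot \vect{f}(\vect{x}')] - \mathbb{E}_n[\max_{\vect{x}'} \vect{\lambda}' \cdot \vect{f}(\vect{x}')]$, pull the difference inside the expectation and apply the max-contraction followed by Cauchy--Schwarz to obtain $\|\vect{\lambda} - \vect{\lambda}'\|_2\, \mathbb{E}_n[\max_{\vect{x}'} \|\vect{f}(\vect{x}')\|_2]$. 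For the second bracket, $\max_{\vect{x}'} \mathbb{E}_n[\vect{\lambda} \cdot \vect{f}(\vect{x}')] - \max_{\vect{x}'} \mathbb{E}_n[\vect{\lambda}' \cdot \vect{f}(\vect{x}')]$, the same two inequalities give $\|\vect{\lambda} - \vect{\lambda}'\|_2\, \max_{\vect{x}'} \|\mathbb{E}_n[\vect{f}(\vect{x}')]\|_2$, which is bounded above by $\|\vect{\lambda} - \vect{\lambda}'\|_2\, \mathbb{E}_n[\max_{\vect{x}'} \|\vect{f}(\vect{x}')\|_2]$ using Jensen to move the norm inside $\mathbb{E}_n$ and then swapping $\max$ and $\mathbb{E}_n$. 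Adding the two bounds via the triangle inequality produces the factor of $2$.

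Part~(1) follows the identical two-bracket decomposition applied to $c_m\, \alpha_{\text{C-MOKG}}^n(\vect{x}, m; \vect{\lambda}, \vect{c}) = \mathbb{E}_n[\max_{\vect{x}'} \mu_{n+}^s(\vect{x}'; \vect{x}, m, \vect{\lambda})] - \max_{\vect{x}'} \mu_n^s(\vect{x}'; \vect{\lambda})$. The second bracket is handled verbatim as in part~(2). The first bracket, involving $\mu_{n+}^s(\vect{x}'; \vect{x}, m, \vect{\lambda}) = \vect{\lambda} \cdot \mathbb{E}_n[\vect{f}(\vect{x}') \,|\, y_m^{\vect{x}}]$, is where the only genuine subtlety appears: after the max-contraction and Cauchy--Schwarz step, the task reduces to controlling $\mathbb{E}_n\bigl[\max_{\vect{x}'} \|\mathbb{E}_n[\vect{f}(\vect{x}') \,|\, y_m^{\vect{x}}]\|_2\bigr]$ by $\mathbb{E}_n[\max_{\vect{x}'} \|\vect{f}(\vect{x}')\|_2]$. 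Dividing the resulting bound by $c_m > 0$ at the end gives the stated inequality.

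The main obstacle is precisely this bound on the ``lookahead'' mean. I would handle it in three moves: apply Jensen to the inner conditional expectation, $\|\mathbb{E}_n[\vect{f}(\vect{x}') \,|\, y_m^{\vect{x}}]\|_2 \leq \mathbb{E}_n[\|\vect{f}(\vect{x}')\|_2 \,|\, y_m^{\vect{x}}]$; then exchange the maximum with the conditional expectation, $\max_{\vect{x}'} \mathbb{E}_n[\,\cdot \mid y_m^{\vect{x}}] \leq \mathbb{E}_n[\max_{\vect{x}'}\,\cdot \mid y_m^{\vect{x}}]$; and finally invoke the tower property, so that the outer $\mathbb{E}_n$ of the inner conditional expectation collapses back to $\mathbb{E}_n[\max_{\vect{x}'} \|\vect{f}(\vect{x}')\|_2]$. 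Throughout, the finiteness of $\mathbb{E}_n[\max_{\vect{x}'} \|\vect{f}(\vect{x}')\|_2]$ needed to justify these manipulations follows from the continuity of the sample paths and compactness of $\mathcal{X}$, exactly as in the preceding lemma establishing that $H_n(\vect{\lambda})$ is almost surely finite.
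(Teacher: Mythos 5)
Your proposal is correct and follows essentially the same route as the paper's proof: the same two-bracket decomposition, the max-contraction plus Cauchy--Schwarz bounds on each bracket, and the tower property to collapse the nested conditional expectation in the lookahead term, with the factor of $2$ arising from summing the two bounds. The only cosmetic differences are that you apply Cauchy--Schwarz to the conditional mean and then vector-valued Jensen, whereas the paper moves the inner product inside the conditional expectation by linearity and applies Cauchy--Schwarz pointwise, and that you take absolute values bracket-by-bracket while the paper bounds the signed difference and invokes symmetry in $\vect{\lambda}$ and $\vect{\lambda}'$.
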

\begin{proof}
    We will begin with the C-MOKG. Let \(\vect{\lambda}, \vect{\lambda}' \in \Lambda\). Then for all \(\vect{x}, \vect{x}' \in \mathcal{X}\), \(m \in \{1, \dots, M\}\) and \(n \in \mathbb{N}_0\),
    \begin{align*}
        \mu_{n+}^s(\vect{x}'; \vect{x}, m, \vect{\lambda}) - \mu_{n+}^s(\vect{x}'; \vect{x}, m, \vect{\lambda}')
        &= \mathbb{E}_n\left[ (\vect{\lambda} - \vect{\lambda}') \cdot \vect{f}(\vect{x}') \,\middle|\, y_m^{\vect{x}} \right] \\
        &\leq \mathbb{E}_n\left[ \|\vect{\lambda} - \vect{\lambda}'\|_2 \; \| \vect{f}(\vect{x}') \|_2  \,\Big|\, y_m^{\vect{x}} \right] \\
        &\leq \|\vect{\lambda} - \vect{\lambda}'\|_2 \; \mathbb{E}_n\Bigl[\;\max_{\vect{x}'' \in \mathcal{X}} \|\vect{f}(\vect{x}'')\|_2 \,\Big|\, y_m^{\vect{x}}\Bigr]
    \end{align*}
    where the second line follows from the Cauchy-Schwarz inequality.
    We can do similarly for \(\mu_n^s(\vect{x}'; \vect{\lambda})\), giving
    \begin{equation*}
        \mu_n^s(\vect{x}'; \vect{\lambda}') - \mu_n^s(\vect{x}'; \vect{\lambda})
        \leq \|\vect{\lambda} - \vect{\lambda}'\|_2 \; \mathbb{E}_n\left[ \max_{\vect{x}'' \in \mathcal{X}} \|\vect{f}(\vect{x}'')\|_2 \right].
    \end{equation*}
    Hence,
    \begin{align*}
        c_m \Bigl( \alpha_{\text{C-MOKG}}^n(\vect{x}, m;\, &{}\vect{\lambda}, \vect{c}) - \alpha_{\text{C-MOKG}}^n(\vect{x}, m; \vect{\lambda}', \vect{c}) \Bigr) \\
        &= \mathbb{E}_n\left[ \max_{\vect{x}' \in \mathcal{X}} \mu_{n+}^s(\vect{x}'; \vect{x}, m, \vect{\lambda}) \right] - \max_{\vect{x}' \in \mathcal{X}} \mu_n^s(\vect{x}'; \vect{\lambda}) \\
        &\quad - \mathbb{E}_n\left[ \max_{\vect{x}' \in \mathcal{X}} \mu_{n+}^s(\vect{x}'; \vect{x}, m, \vect{\lambda}') \right] + \max_{\vect{x}' \in \mathcal{X}} \mu_n^s(\vect{x}'; \vect{\lambda}') \\
        &\leq \mathbb{E}_n\left[ \max_{\vect{x}' \in \mathcal{X}} 
\mu_{n+}^s(\vect{x}'; \vect{x}, m, \vect{\lambda}) - \mu_{n+}^s(\vect{x}'; \vect{x}, m, \vect{\lambda}') \right] \\
        &\quad + \max_{\vect{x}' \in \mathcal{X}} \Bigl( \mu_n^s(\vect{x}'; \vect{\lambda}') - \mu_n^s(\vect{x}'; \vect{\lambda}) \Bigr) \\
        &\leq \mathbb{E}_n\left[ \|\vect{\lambda} - \vect{\lambda}'\|_2 \; \mathbb{E}_n\Bigl[\;\max_{\vect{x}' \in \mathcal{X}} \|\vect{f}(\vect{x}')\|_2 \,\Big|\, y_m^{\vect{x}}\Bigr] \right] \\
        &\quad + \|\vect{\lambda} - \vect{\lambda}'\|_2 \; \mathbb{E}_n\Bigl[ \;\max_{\vect{x}' \in \mathcal{X}} \|\vect{f}(\vect{x}')\|_2 \Bigr] \\
        &= 2 \|\vect{\lambda} - \vect{\lambda}'\|_2 \; \mathbb{E}_n\left[ \max_{\vect{x}' \in \mathcal{X}} \|\vect{f}(\vect{x}')\|_2 \right].
    \end{align*}
    Since this holds with \(\vect{\lambda}\) and \(\vect{\lambda}'\) interchanged, we have established the inequality for \(\alpha_{\text{C-MOKG}}^n\).
    
    The proof for the residual uncertainty is similar. Again, let \(\vect{\lambda}, \vect{\lambda}' \in \Lambda\). Then for all \(n \in \mathbb{N}_0\),
    \begin{align*}
        H_n(\vect{\lambda}) -& H_n(\vect{\lambda}') \\
        &= \mathbb{E}_n\left[ \max_{\vect{x}' \in \mathcal{X}} \vect{\lambda} \cdot \vect{f}(\vect{x}') \right] - \max_{\vect{x}' \in \mathcal{X}} \mathbb{E}_n[\vect{\lambda} \cdot \vect{f}(\vect{x}')] \\
        &\quad - \mathbb{E}_n\left[ \max_{\vect{x}' \in \mathcal{X}} \vect{\lambda}' \cdot \vect{f}(\vect{x}') \right] + \max_{\vect{x}' \in \mathcal{X}} \mathbb{E}_n[\vect{\lambda}' \cdot \vect{f}(\vect{x}')] \\
        &\leq \mathbb{E}_n \left[ \max_{\vect{x}' \in \mathcal{X}} (\vect{\lambda} - \vect{\lambda}') \cdot \vect{f}(\vect{x}') \right]
        + \max_{\vect{x}' \in \mathcal{X}} \mathbb{E}_n \left[ (\vect{\lambda}' - \vect{\lambda}) \cdot \vect{f}(\vect{x}') \right] \\
        &\leq \mathbb{E}_n\left[ \max_{\vect{x}' \in \mathcal{X}} \|\vect{\lambda} - \vect{\lambda}'\|_2 \, \|\vect{f}(\vect{x}')\|_2 \right]
        + \max_{\vect{x}' \in \mathcal{X}} \mathbb{E}_n \bigl[ \|\vect{\lambda} - \vect{\lambda}'\|_2 \, \|\vect{f}(\vect{x}')\|_2 \bigr] \\
        &\leq 2 \|\vect{\lambda} - \vect{\lambda}'\|_2 \mathbb{E}_n\left[ \max_{\vect{x}' \in \mathcal{X}} \|\vect{f}(\vect{x}')\|_2 \right].
    \end{align*}
    Again, this holds with \(\vect{\lambda}\) and \(\vect{\lambda}'\) interchanged, which establishes the inequality for \(H_n\).
\end{proof}

We are now in a position to state and prove the first two main results, which show convergence of \(\alpha_{\text{C-MOKG}}^n(\vect{x}, m; \vect{\lambda}, \vect{c})\) to zero in a certain sense, both when using random scalarizations and expectation over scalarizations.

\begin{theorem}\label{thm:app-cmokg-r-unif-zero-liminf}
    Suppose we select samples using C-MOKG with random scalarization weights \(\vect{\lambda}_1, \vect{\lambda}_2, \dots\) chosen independently according to distribution \(p(\vect{\lambda})\).
    Then,
    \[
        \forall \vect{\lambda} \in \Lambda\quad
        \mathbb{P}\left( \liminf_{n \to \infty} \sup_{\vect{x} \in \mathcal{X},\, m \in [M]} \alpha_{\text{C-MOKG}}^n(\vect{x},m;\, \vect{\lambda}, \vect{c}) = 0 \right) = 1.
    \]
\end{theorem}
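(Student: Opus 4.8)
The plan is to fix an arbitrary \(\vect{\lambda} \in \Lambda\) and show that, almost surely, the supremum \(S_n := \sup_{\vect{x} \in \mathcal{X},\, m \in [M]} \alpha_{\text{C-MOKG}}^n(\vect{x}, m;\, \vect{\lambda}, \vect{c})\) returns arbitrarily close to zero infinitely often. Since \(\alpha_{\text{C-MOKG}}^n \ge 0\) by \cref{thm:app-cmokg-nonneg}, we have \(S_n \ge 0\), so it suffices to exhibit a (random) subsequence along which \(S_n \to 0\).

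The first step is to drive the acquisition value \emph{at the chosen point} to zero for our fixed \(\vect{\lambda}\). The computation leading to \eqref{eq:app-anxn-1} goes through verbatim with \(\vect{\lambda}\) in place of \(\vect{\lambda}_{n+1}\), because it uses only that \((\vect{x}_{n+1}, m_{n+1})\) is \(\mathcal{F}_n\)-measurable and that conditioning on the hypothesised observation at the chosen point reproduces \(\mu_{n+1}^s\); neither fact depends on which weight vector scalarises the objective. Hence \(c_{m_{n+1}}\,\alpha_{\text{C-MOKG}}^n(\vect{x}_{n+1}, m_{n+1};\, \vect{\lambda}, \vect{c}) = H_n(\vect{\lambda}) - \mathbb{E}_n[H_{n+1}(\vect{\lambda})]\). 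By \cref{thm:app-Hn-supmart}, \((H_n(\vect{\lambda}))_n\) is a non-negative supermartingale, so it converges almost surely to a finite limit and its non-negative predictable increments \(H_n(\vect{\lambda}) - \mathbb{E}_n[H_{n+1}(\vect{\lambda})]\) are almost surely summable, hence tend to zero. Since the costs are bounded below by \(c_{\min} := \min_m c_m > 0\), this yields \(\alpha_{\text{C-MOKG}}^n(\vect{x}_{n+1}, m_{n+1};\, \vect{\lambda}, \vect{c}) \to 0\) almost surely.

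The difficulty is that the algorithm maximises \(\alpha_{\text{C-MOKG}}^n(\,\cdot\,;\, \vect{\lambda}_{n+1}, \vect{c})\), not \(\alpha_{\text{C-MOKG}}^n(\,\cdot\,;\, \vect{\lambda}, \vect{c})\), so the chosen-point convergence above does not immediately bound \(S_n\); bridging this gap is the crux. I would combine the \(\eta\)-quasi-SUR property \eqref{eq:app-quasi-sur-1} with the Lipschitz estimate of \cref{thm:app-cmokg-and-h-lipschitz} to obtain, uniformly over \((\vect{x}, m)\), the bound \(|\alpha_{\text{C-MOKG}}^n(\vect{x}, m; \vect{\lambda}_{n+1}, \vect{c}) - \alpha_{\text{C-MOKG}}^n(\vect{x}, m; \vect{\lambda}, \vect{c})| \le \beta_n\) with \(\beta_n := (2/c_{\min})\,\|\vect{\lambda}_{n+1} - \vect{\lambda}\|_2\, L_n\) and \(L_n := \mathbb{E}_n[\max_{\vect{x}'} \|\vect{f}(\vect{x}')\|_2]\). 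Taking suprema and then applying \eqref{eq:app-quasi-sur-1} gives \(S_n - \beta_n \le \sup_{\vect{x},m}\alpha_{\text{C-MOKG}}^n(\vect{x}, m; \vect{\lambda}_{n+1}, \vect{c}) < \alpha_{\text{C-MOKG}}^n(\vect{x}_{n+1}, m_{n+1}; \vect{\lambda}_{n+1}, \vect{c}) + \eta_n \le \alpha_{\text{C-MOKG}}^n(\vect{x}_{n+1}, m_{n+1}; \vect{\lambda}, \vect{c}) + \beta_n + \eta_n\), whence \(S_n < \alpha_{\text{C-MOKG}}^n(\vect{x}_{n+1}, m_{n+1}; \vect{\lambda}, \vect{c}) + 2\beta_n + \eta_n\).

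It remains to control the three terms on the right along a suitable subsequence. The first tends to zero along the whole sequence by the second step, and \(\eta_n \to 0\) by assumption. For \(\beta_n\), I would note that \(\max_{\vect{x}'}\|\vect{f}(\vect{x}')\|_2 \le \sum_m \max_{\vect{x}'} |f_m(\vect{x}')| =: Z\) is integrable (continuous paths on compact \(\mathcal{X}\), as in the well-definedness lemma), so \(L_n \le \mathbb{E}_n[Z]\) is dominated by the non-negative martingale \((\mathbb{E}_n[Z])_n\), which converges almost surely and is therefore almost surely bounded. Finally, since \(\vect{\lambda}_1, \vect{\lambda}_2, \dots\) are i.i.d.\ with density \(p\) strictly positive on \(\Lambda\), for each \(\epsilon > 0\) the independent events \(\{\|\vect{\lambda}_{n+1} - \vect{\lambda}\|_2 < \epsilon\}\) each have positive probability, so by the second Borel--Cantelli lemma they occur infinitely often; a diagonal argument over \(\epsilon = 1/k\) produces a subsequence \((n_k)\) with \(\|\vect{\lambda}_{n_k+1} - \vect{\lambda}\|_2 \to 0\) almost surely. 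Along this subsequence \(\beta_{n_k} \to 0\) because \(L_n\) is bounded, so \(S_{n_k} \to 0\), establishing \(\liminf_{n} S_n = 0\) on an event of probability one. The main obstacle, as signalled, is exactly this transfer from the random sampling weight \(\vect{\lambda}_{n+1}\) to the fixed target weight \(\vect{\lambda}\): it forces the subsequential (liminf) conclusion and relies essentially on the Lipschitz regularity in \(\vect{\lambda}\) together with the almost-sure boundedness of \(L_n\).
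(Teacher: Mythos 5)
Your proof is correct, and it takes a genuinely different route from the paper's. The paper extracts the random subsequence \((n_j)\) with \(\vect{\lambda}_{1+n_j} \to \vect{\lambda}\) \emph{first} and then telescopes the residual uncertainty at the drifting weights \(H_{n_j}(\vect{\lambda}_{1+n_j})\), which forces it to introduce correction terms \(\Delta^{(2)}_{j+1} = H_{n_{j+1}}(\vect{\lambda}_{1+n_{j+1}}) - H_{n_{j+1}}(\vect{\lambda}_{1+n_j})\), handled by the Lipschitz lemma, before transferring to the fixed weight at the very end. Your pivotal observation --- which the paper never makes --- is that the chosen-point identity \eqref{eq:app-anxn-1} holds for \emph{every} weight vector, not just the sampling weight \(\vect{\lambda}_{n+1}\), because \(\mu_{n+}^s(\,\cdot\,;\vect{x}_{n+1},m_{n+1},\vect{\lambda}) = \mu_{n+1}^s(\,\cdot\,;\vect{\lambda})\) uses only the \(\mathcal{F}_n\)-measurability of \((\vect{x}_{n+1},m_{n+1})\) and the linearity of the scalarization in \(\vect{\lambda}\). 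This lets you telescope \(H_n(\vect{\lambda})\) at the fixed target weight along the \emph{whole} sequence: the increments \(H_n(\vect{\lambda}) - \mathbb{E}_n[H_{n+1}(\vect{\lambda})]\) are genuinely non-negative by \cref{thm:app-Hn-supmart}, and since the increasing compensator has expectation at most \(\mathbb{E}[H_0(\vect{\lambda})] < \infty\) (finite by the bound in the well-definedness lemma, which you should cite explicitly at this step), they are a.s.\ summable, giving \(\alpha_{\text{C-MOKG}}^n(\vect{x}_{n+1},m_{n+1};\vect{\lambda},\vect{c}) \to 0\) a.s.\ with no subsequence needed. The randomness of the weights then enters only through your Lipschitz/quasi-SUR sandwich \(S_n < \alpha_{\text{C-MOKG}}^n(\vect{x}_{n+1},m_{n+1};\vect{\lambda},\vect{c}) + 2\beta_n + \eta_n\) and the Borel--Cantelli subsequence along which \(\beta_{n_k} \to 0\) (your use of \cref{thm:app-cmokg-and-h-lipschitz} with the random \(\vect{\lambda}_{n+1}\) is legitimate since \(\mathbb{E}_n\) conditions on the entire sequence \(\{\vect{\lambda}_j\}_{j=1}^\infty\); the paper uses the lemma the same way). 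Comparing what each approach buys: your decomposition keeps every telescoped term signed, whereas the paper's mixed-weight terms \(\overline{\Delta}^{(1)}_j\) have no obvious sign (the supermartingale property does not connect \(H\) at two different weights), so its inference ``bounded expected sum \(\Rightarrow\) terms \(\to 0\) a.s.'' is more delicate there; you also avoid evaluating the supermartingale at random times \(n_j\), and you actually prove, via the second Borel--Cantelli lemma, the existence of the convergent weight subsequence that the paper merely asserts. The paper's ordering, in exchange, yields the intermediate fact \(\sup_{\vect{x} \in \mathcal{X},\, m \in [M]}\alpha_{\text{C-MOKG}}^{n_j}(\vect{x},m;\vect{\lambda}_{1+n_j},\vect{c}) \to 0\) --- convergence at the weights actually used for sampling --- which your route does not produce, but which the theorem does not need.
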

\begin{remark}
    In fact, it is true that for all \(\vect{\lambda} \in \Lambda\),
    \begin{equation*}
        \sup_{\vect{x} \in \mathcal{X},\, m \in [M]} \alpha_{\text{C-MOKG}}^n(\vect{x}, m;\, \vect{\lambda}, \vect{c}) \to 0
    \end{equation*}
    as \(n \to \infty\), almost surely. However, the proof of this will have to wait until \cref{thm:app-cmokg-r-unif-zero}.
\end{remark}

\begin{proof}
    Let \(\vect{\lambda} \in \Lambda\).
    The proof in the single objective case found in \cite{bect2009sur} rests on the fact that we use the same acquisition function at each step of the optimization. However, for the multi-objective case with random scalarizations \((\vect{\lambda}_n)_{n=1}^\infty\), we are effectively changing the acquisition function at each step. The key observation which lets us proceed is that, while we will not use the exact weights \(\vect{\lambda}\) infinitely often, we will use weights which are arbitrarily close, infinitely often, and these will be similar enough.
    Formally, we observe that there exists a (random) subsequence \((\vect{\lambda}_{1+n_j})_{j=0}^\infty\) with \(\vect{\lambda}_{1+n_j} \to \vect{\lambda}\) as \(j \to \infty\) almost surely. Without loss of generality and for the purpose of slightly easing notation later, we will assume \(n_0 = 0\).
    We will first show that \(\sup_{\vect{x} \in \mathcal{X},\, m \in [M]} \alpha_{\text{C-MOKG}}^{n_j}(\vect{x}, m;\, \vect{\lambda}_{1+n_j}, \vect{c}) \to 0\) almost surely, then use the convergence of the subsequence to assert the same when we replace \(\vect{\lambda}_{1+n_j}\) with \(\vect{\lambda}\).

    Inspired by the notation from \cite{bect2009sur}, for each \(j \in \mathbb{N}_0\) let
    \begin{alignat*}{2}
        \Delta^{(1)}_{j+1} &= H_{n_j}(\vect{\lambda}_{1+n_j}) - H_{n_{(j+1)}}(\vect{\lambda}_{1+n_{(j+1)}}),
        \qquad&
        \overline{\Delta}^{(1)}_{j+1} &= \mathbb{E}_{n_j}[\Delta^{(1)}_{j+1}], \\
        \Delta^{(2)}_{j+1} &= H_{n_{(j+1)}}(\vect{\lambda}_{1+n_{(j+1)}}) - H_{n_{(j+1)}}(\vect{\lambda}_{1+n_j}),
        \qquad&
        \overline{\Delta}^{(2)}_{j+1} &= \mathbb{E}_{n_j}[\Delta^{(2)}_{j+1}].
    \end{alignat*}
    The first pair here give the difference between successive terms in the sequence \((H_{n_j}(\vect{\lambda}_{1+n_j}))_{j=0}^\infty\) while the second is a correction to `undo' changing \(\vect{\lambda}_{1+n_j}\).
    % By \cref{thm:app-Hn-supmart}, we have that \((H_n(\vect{\lambda}))_{n=0}^\infty\) is a supermartingale for all \(\vect{\lambda}\).
    % Also, as remarked earlier in Equation~\eqref{eq:app-anxn-1}, \(\forall n\; c_{m_{1+n}} \alpha_{\text{C-MOKG}}^n(\vect{x}_{1+n}, m_{1+n};\, \vect{\lambda}_{1+n}, \vect{c}) = H_n(\vect{\lambda}_{1+n}) - \mathbb{E}_n[H_{1+n}(\vect{\lambda}_{1+n})]\).
    % Therefore, using that \(\vect{x}_{1+n_j}\) and \(m_{1+n_j}\) were chosen to maximise \(\alpha_{\text{C-MOKG}}^{n_j}(\,\cdot\,,\cdot\,;\, \vect{\lambda}_{1+n_j}, \vect{c})\) according to Equation~\eqref{eq:app-quasi-sur-1},
    Then, since \(H_n(\vect{\lambda})\) is a supermartingale (\cref{thm:app-Hn-supmart}), and applying Equations~\eqref{eq:app-anxn-1} and \eqref{eq:app-quasi-sur-1}, we have
    \begin{align*}
        \overline{\Delta}^{(1)}_{j+1} + \overline{\Delta}^{(2)}_{j+1}
        &= H_{n_j}(\vect{\lambda}_{1+n_j}) - \mathbb{E}_{n_j}[H_{n_{(j+1)}}(\vect{\lambda}_{1+n_j})] \\
        &\geq H_{n_j}(\vect{\lambda}_{1+n_j}) - \mathbb{E}_{n_j}[H_{1+n_j}(\vect{\lambda}_{1+n_j})] \\
        &= c_{m_{1+n_j}} \alpha_{\text{C-MOKG}}^{n_j}(\vect{x}_{1+n_j}, m_{1+n_j};\, \vect{\lambda}_{1+n_j}, \vect{c}) \\
        &> c_{m_{1+n_j}} \left( \sup_{\vect{x} \in \mathcal{X},\, m \in [M]} \alpha_{\text{C-MOKG}}^{n_j}(\vect{x}, m;\, \vect{\lambda}_{1+n_j}, \vect{c}) - \eta_{n_j} \right).
    \end{align*}
    Therefore, if we can show that almost surely \(\overline{\Delta}^{(1)}_j \to 0\) and \(\overline{\Delta}^{(2)}_j \to 0\) as \(j \to \infty\), then we will have almost surely \(\sup_{\vect{x} \in \mathcal{X},\, m \in [M]} \alpha_{\text{C-MOKG}}^{n_j}(\vect{x}, m;\, \vect{\lambda}_{1+n_j}, \vect{c}) \to 0\).

    For any \(j \in \mathbb{N}\) we have \(\sum_{i=1}^j \Delta^{(1)}_i = H_0(\vect{\lambda}_1) - H_{n_j}(\vect{\lambda}_{1+n_j})\).
    Therefore, since each \(H_{n_j}(\vect{\lambda}_{1+n_j})\) is almost surely non-negative (\cref{thm:app-Hn-supmart}),
    \begin{multline*}
        \forall j \in \mathbb{N}\quad
        \mathbb{E}\left[ \sum_{i=1}^j \overline{\Delta}^{(1)}_i \right]
        = \mathbb{E}\left[ \sum_{i=1}^j \Delta^{(1)}_i \right] \\
        = \mathbb{E}[H_0(\vect{\lambda}_1) - H_{n_j}(\vect{\lambda}_{1+n_j})]
        \leq \mathbb{E}[H_0(\vect{\lambda}_1)]
        < \infty.
    \end{multline*}
    Thus \(\overline{\Delta}^{(1)}_j \to 0\) as \(j \to \infty\) almost surely.

    To show that \(\Delta^{(2)}_j \to 0\) almost surely, we will use \cref{thm:app-cmokg-and-h-lipschitz}.
    Indeed, this gives that, for all \(j\),
    \begin{align*}
        \bigl|\Delta^{(2)}_{j+1}\bigr| &= \bigl| H_{n_{(j+1)}}(\vect{\lambda}_{1+n_{(j+1)}}) - H_{n_{(j+1)}}(\vect{\lambda}_{1+n_j}) \bigr| \\
        &\leq 2 \, \|\vect{\lambda}_{1+n_{(j+1)}} - \vect{\lambda}_{1+n_j}\|_2 \; \mathbb{E}_{n_{(j+1)}}\!\left[ \max_{\vect{x}' \in \mathcal{X}} \|\vect{f}(\vect{x}')\|_2 \right].
    \end{align*}
    But \(\mathbb{E}_{n_{(j+1)}}[\max_{\vect{x}' \in \mathcal{X}} \|\vect{f}(\vect{x}')\|_2] \to \mathbb{E}_\infty[\max_{\vect{x}' \in \mathcal{X}} \|\vect{f}(\vect{x}')\|_2]\) as \(j \to \infty\) almost surely, by L\'evy's zero-one law. Also, by construction, \(\|\vect{\lambda}_{1 + n_{(j+1)}} - \vect{\lambda}_{1 + n_j}\|_2 \to 0\) as \(j \to \infty\) almost surely. Therefore, \(\Delta^{(2)}_j \to 0\) almost surely.

    Whence,
    \begin{equation*}
        0 \leq \sup_{\vect{x} \in \mathcal{X},\, m \in [M]} \alpha_{\text{C-MOKG}}^{n_j}(\vect{x}, m;\, \vect{\lambda}_{1+n_j}, \vect{c})
        < \frac{1}{c_{m_{1+n_j}}} (\Delta^{(1)}_{j+1} + \Delta^{(2)}_{j+1}) + \eta_{n_j} \to 0
    \end{equation*}
    as \(j \to \infty\) almost surely.

    To show that we can replace the \(\vect{\lambda}_{1 + n_j}\) with their limit \(\vect{\lambda}\), observe that for all \(j \geq 0\),
    \begin{align*}
        \sup_{\vect{x} \in \mathcal{X},\, m \in [M]} &\alpha_{\text{C-MOKG}}^{n_j}(\vect{x}, m;\, \vect{\lambda}, \vect{c}) \\
        &\leq \sup_{\vect{x} \in \mathcal{X},\, m \in [M]} \alpha_{\text{C-MOKG}}^{n_j}(\vect{x}, m;\, \vect{\lambda}_{1+n_j}, \vect{c}) \\
        &\quad + \sup_{\vect{x} \in \mathcal{X},\, m \in [M]} \Bigl( \alpha_{\text{C-MOKG}}^{n_j}(\vect{x}, m;\, \vect{\lambda}, \vect{c}) - \alpha_{\text{C-MOKG}}^{n_j}(\vect{x}, m;\, \vect{\lambda}_{1+n_j}, \vect{c}) \Bigr) \\
        &\leq \underbrace{\sup_{\vect{x} \in \mathcal{X},\, m \in [M]} \alpha_{\text{C-MOKG}}^{n_j}(\vect{x}, m;\, \vect{\lambda}_{1+n_j}, \vect{c})}_{\to 0} \\
        &\quad + 2\,\underbrace{\|\vect{\lambda} - \vect{\lambda}_{1+n_j}\|_2}_{\to 0} \, \underbrace{\mathbb{E}_{n_j}\!\left[\max_{\vect{x}' \in \mathcal{X}} \|\vect{f}(\vect{x}')\|_2 \right]}_{\to \mathbb{E}_\infty[\max_{\vect{x}' \in \mathcal{X}} \|\vect{f}(\vect{x}')\|_2] < \infty } \\
        &\to 0 \quad\text{as } j \to \infty.
    \end{align*}
    On the penultimate line, we have used that \(\max_{\vect{x}' \in \mathcal{X}} \|\vect{f}(\vect{x}')\|_2\) is \(L^1\)-integrable and thus the conditional expectations, \(\mathbb{E}_{n_j}[\max_{\vect{x}' \in \mathcal{X}} \|\vect{f}(\vect{x}')\|_2]\), converge almost surely to \(\mathbb{E}_\infty[\max_{\vect{x}' \in \mathcal{X}} \|\vect{f}(\vect{x}')\|_2] < \infty\) by L\'evy's zero-one law.
    
    Therefore, the whole upper bound converges to zero and we conclude that
    \[\sup_{\vect{x} \in \mathcal{X},\, m \in [M]} \alpha_{\text{C-MOKG}}^{n_j}(\vect{x}, m;\, \vect{\lambda}, \vect{c}) \to 0 \quad\text{as}\quad j \to \infty \quad\text{a.s.}.\]
    Hence, \(\liminf_{n \to \infty} \sup_{\vect{x} \in \mathcal{X},\, m \in [M]} \alpha_{\text{C-MOKG}}^n(\vect{x}, m;\, \vect{\lambda}, \vect{c}) = 0\) almost surely, completing the proof.
\end{proof}

We also make a similar statement for expectation over scalarization, except for this one we can already replace the limit inferior with a true limit.
\begin{theorem}\label{thm:app-cmokg-e-unif-zero}
    Suppose we select samples using C-MOKG with expectation over scalarizations.
    Then, for all preference vectors \(\vect{\lambda} \in \Lambda\), \(\alpha_{\text{C-MOKG}}^n(\vect{x}, m;\, \vect{\lambda}, \vect{c}) \to 0\) as \(n \to \infty\) uniformly in \(\vect{x} \in \mathcal{X}\) and \(m \in \{1, \dots, M\}\), almost surely. That is,
    \[\forall \vect{\lambda} \in \Lambda\quad \mathbb{P}\left(\sup_{\vect{x} \in \mathcal{X},\, m \in [M]} \alpha_{\text{C-MOKG}}^n(\vect{x}, m;\, \vect{\lambda}, \vect{c}) \to 0 \quad\text{as}\quad n \to \infty \right) = 1.\]
\end{theorem}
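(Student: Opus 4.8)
The plan is to first establish that the \emph{averaged} acquisition function vanishes uniformly, namely $\sup_{\vect{x}\in\mathcal{X},\, m\in[M]} \overline{\alpha}_{\text{C-MOKG}}^n(\vect{x}, m; \vect{c}) \to 0$ almost surely, and then to transfer this bound to the single-$\vect{\lambda}$ acquisition function using the Lipschitz estimate of \cref{thm:app-cmokg-and-h-lipschitz} together with the strict positivity of $p(\vect{\lambda})$ on $\Lambda$.

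For the first step I would exploit that $(\overline{H}_n)_{n\in\mathbb{N}_0}$ is a non-negative supermartingale (\cref{thm:app-Hn-supmart}) with $\mathbb{E}[\overline{H}_0] < \infty$ (the integrability used to prove the finiteness lemma gives this). Its increments $\overline{H}_n - \mathbb{E}_n[\overline{H}_{n+1}]$ are non-negative, and summing the telescoping identity $\mathbb{E}[\overline{H}_n - \mathbb{E}_n[\overline{H}_{n+1}]] = \mathbb{E}[\overline{H}_n] - \mathbb{E}[\overline{H}_{n+1}]$ shows $\sum_n (\overline{H}_n - \mathbb{E}_n[\overline{H}_{n+1}]) $ has expectation at most $\mathbb{E}[\overline{H}_0]$, hence converges a.s.; in particular $\overline{H}_n - \mathbb{E}_n[\overline{H}_{n+1}] \to 0$ a.s. Combining the quasi-SUR property \eqref{eq:app-quasi-sur-2} with the identity \eqref{eq:app-anxn-2} and the fact that $c_{m_{n+1}} \geq c_{\min} := \min_m c_m > 0$ yields
\[
\sup_{\vect{x}\in\mathcal{X},\, m\in[M]} \overline{\alpha}_{\text{C-MOKG}}^n(\vect{x}, m; \vect{c}) < \frac{1}{c_{\min}}\bigl(\overline{H}_n - \mathbb{E}_n[\overline{H}_{n+1}]\bigr) + \eta_n,
\]
whose right-hand side tends to zero a.s. since $\eta_n \to 0$.

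For the transfer step — which I expect to be the main obstacle — fix $\vect{\lambda} \in \Lambda$ and a small $\delta > 0$, and set $B = B_\delta(\vect{\lambda}) \cap \Lambda$, which has strictly positive mass $P(B) = \int_B p(\vect{\lambda}')\,\mathrm{d}\vect{\lambda}' > 0$ since $p$ is strictly positive. The Lipschitz bound of \cref{thm:app-cmokg-and-h-lipschitz} gives, for every $\vect{\lambda}' \in B$,
\[
\alpha_{\text{C-MOKG}}^n(\vect{x}, m; \vect{\lambda}, \vect{c}) \leq \alpha_{\text{C-MOKG}}^n(\vect{x}, m; \vect{\lambda}', \vect{c}) + \frac{2\delta}{c_{\min}}\, \mathbb{E}_n\!\Bigl[\max_{\vect{x}'\in\mathcal{X}}\|\vect{f}(\vect{x}')\|_2\Bigr].
\]
Integrating over $\vect{\lambda}' \in B$ against $p$, dividing by $P(B)$, and using the non-negativity of $\alpha_{\text{C-MOKG}}^n$ (\cref{thm:app-cmokg-nonneg}) to bound $\int_B \alpha_{\text{C-MOKG}}^n(\vect{x}, m; \vect{\lambda}', \vect{c})\,p(\vect{\lambda}')\,\mathrm{d}\vect{\lambda}' \leq \overline{\alpha}_{\text{C-MOKG}}^n(\vect{x}, m; \vect{c})$ gives the uniform-in-$(\vect{x},m)$ estimate
\[
\sup_{\vect{x},m}\alpha_{\text{C-MOKG}}^n(\vect{x}, m; \vect{\lambda}, \vect{c}) \leq \frac{1}{P(B)}\sup_{\vect{x},m}\overline{\alpha}_{\text{C-MOKG}}^n(\vect{x}, m; \vect{c}) + \frac{2\delta}{c_{\min}}\, \mathbb{E}_n\!\Bigl[\max_{\vect{x}'\in\mathcal{X}}\|\vect{f}(\vect{x}')\|_2\Bigr].
\]

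Finally, I would let $n \to \infty$. The first term vanishes a.s. by the first step, $P(B)$ being a deterministic constant, while $\mathbb{E}_n[\max_{\vect{x}'}\|\vect{f}(\vect{x}')\|_2] \to \mathbb{E}_\infty[\max_{\vect{x}'}\|\vect{f}(\vect{x}')\|_2] < \infty$ a.s. by L\'evy's zero-one law (the maximum is $L^1$-integrable since $\mathcal{X}$ is compact and $\vect{f}$ has continuous paths). Hence $\limsup_n \sup_{\vect{x},m}\alpha_{\text{C-MOKG}}^n(\vect{x}, m; \vect{\lambda}, \vect{c}) \leq \frac{2\delta}{c_{\min}}\mathbb{E}_\infty[\max_{\vect{x}'}\|\vect{f}(\vect{x}')\|_2]$ a.s. Since $\delta > 0$ was arbitrary and the left side is non-negative, the limit is zero a.s., giving the claim. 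The crux is this transfer: because samples are chosen to maximize the \emph{averaged} criterion, a fixed $\vect{\lambda}$ is essentially never targeted directly, so one must use both the Lipschitz regularity in $\vect{\lambda}$ and the fact that $p$ charges every neighborhood of $\vect{\lambda}$ to force $\alpha_{\text{C-MOKG}}^n(\cdot\,;\vect{\lambda})$ down.
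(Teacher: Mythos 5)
Your proposal is correct, and its first half is essentially the paper's argument verbatim: both use the identity \eqref{eq:app-anxn-2}, the quasi-SUR condition \eqref{eq:app-quasi-sur-2}, and the telescoping sum $\sum_j \overline{\Delta}_j = \overline{H}_0 - \overline{H}_n$ with $\mathbb{E}[\overline{H}_0] < \infty$ to conclude that the non-negative increments $\overline{\Delta}_{n+1}$ vanish almost surely, hence $\sup_{\vect{x},m}\overline{\alpha}_{\text{C-MOKG}}^n \to 0$ a.s. Where you genuinely diverge is the transfer to a fixed $\vect{\lambda}$. The paper argues by contradiction: it assumes $\limsup_n \sup_{\vect{x},m}\alpha_{\text{C-MOKG}}^n(\vect{x},m;\vect{\lambda},\vect{c}) > 0$ with positive probability, uses L\'evy's zero-one law to get an $n$-independent bound on $\mathbb{E}_n[\max_{\vect{x}'}\|\vect{f}(\vect{x}')\|_2]$ for large $n$, and then chooses a ball $B_{\vect{\lambda}}$ of \emph{random} radius (calibrated against the random quantity $A_{\vect{\lambda}}$, with a $\min(A_{\vect{\lambda}},1)$ device to handle $A_{\vect{\lambda}} = +\infty$) so that integrating the Lipschitz bound over $B_{\vect{\lambda}}$ forces $\limsup_n \sup_{\vect{x},m}\overline{\alpha}_{\text{C-MOKG}}^n > 0$, contradicting the first step. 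You instead run the same integration over a ball of \emph{deterministic} radius $\delta$, keep the Lipschitz error explicit, and obtain the quantitative bound $\sup_{\vect{x},m}\alpha_{\text{C-MOKG}}^n(\cdot;\vect{\lambda},\vect{c}) \leq P(B)^{-1}\sup_{\vect{x},m}\overline{\alpha}_{\text{C-MOKG}}^n + \tfrac{2\delta}{c_{\min}}\mathbb{E}_n[\max_{\vect{x}'}\|\vect{f}(\vect{x}')\|_2]$, then send $n \to \infty$ followed by $\delta \to 0$. Both routes rest on the same four ingredients (Lipschitz continuity in $\vect{\lambda}$, non-negativity of the acquisition function, strict positivity of $p$, and L\'evy's zero-one law), but your direct version is cleaner: it avoids the contradiction scaffolding, the random-radius ball, and the $A_{\vect{\lambda}} = +\infty$ edge case, and it yields an explicit rate-style inequality rather than a pure existence argument. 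The only point to tighten is the final ``$\delta$ arbitrary'' step: the almost-sure event depends on $\delta$, so you should take $\delta = 1/k$ over a countable sequence and intersect the resulting full-measure events (using $\mathbb{E}_\infty[\max_{\vect{x}'}\|\vect{f}(\vect{x}')\|_2] < \infty$ a.s.) — a standard fix that your argument accommodates without change.
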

\begin{proof}
The proof in the case of expectation over scalarizations is simpler than that of \cref{thm:app-cmokg-r-unif-zero-liminf} since we have used the same acquisition function at each step of the optimization.

We will first prove that \(\overline{\alpha}_{\text{C-MOKG}}^n(\vect{x}, m; \vect{c}) \to 0\) uniformly in \(\vect{x} \in \mathcal{X}\) and \(m \in \{1, 2, \dots, M\}\), almost surely.
For each \(n \in \mathbb{N}_0\), let us redefine \(\Delta_{n+1} = \overline{H}_n - \overline{H}_{n+1}\) and \(\overline{\Delta}_{n+1} = \mathbb{E}_n[\Delta_{n+1}]\).
As remarked earlier in Equation~\eqref{eq:app-anxn-2}, \(\overline{\alpha}_{\text{C-MOKG}}^n(\vect{x}_{n+1}, m_{n+1};\, \vect{c}) = \frac{1}{c_{m_{n+1}}} (\overline{H}_n - \mathbb{E}_n[\overline{H}_{n+1}])\).
Therefore, using that \(\vect{x}_{n+1}\) and \(m_{n+1}\) were chosen to maximize \(\overline{\alpha}_{\text{C-MOKG}}^n\) according to Equation~\eqref{eq:app-quasi-sur-2},
\begin{multline} \label{eq:app-cmokg-e-unif-zero-delta-bound}
    \frac{1}{c_{m_{n+1}}}\overline{\Delta}_{n+1} = \frac{1}{c_{m_{n+1}}}\left(\overline{H}_n - \mathbb{E}_n\bigl[\,\overline{H}_{n+1}\bigr] \right) \\
    = \overline{\alpha}_{\text{C-MOKG}}^n(\vect{x}_{n+1}, m_{n+1};\, \vect{c})
    > \sup_{\vect{x} \in \mathcal{X},\, m \in [M]} \overline{\alpha}_{\text{C-MOKG}}^n(\vect{x}, m; \vect{c}) - \eta_n.
\end{multline}
Our problem is now reduced to showing that \(\overline{\Delta}_n \to 0\) as \(n \to \infty\).
For any \(n \in \mathbb{N}_0\) we have \(\sum_{j=1}^n \Delta_j = \overline{H}_0 - \overline{H}_n\).
Therefore,
\begin{equation*}
    \forall n \in \mathbb{N}_0\quad
    \mathbb{E}\left[ \sum_{j=1}^n \overline{\Delta}_j \right]
    = \mathbb{E}\left[ \sum_{j=1}^n \Delta_j \right]
    = \mathbb{E}[\overline{H}_0 - \overline{H}_n]
    \leq \mathbb{E}[\overline{H}_0]
    < \infty
\end{equation*}
where we used \cref{thm:app-Hn-supmart} for the penultimate inequality.
Also observe that combining Equation~\eqref{eq:app-cmokg-e-unif-zero-delta-bound} with \cref{thm:app-cmokg-nonneg} implies
\begin{equation*}
    \forall n \in \mathbb{N}_0\quad \overline{\Delta}_{n+1} = c_{m_{n+1}} \overline{\alpha}_{\text{C-MOKG}}^n(\vect{x}_{n+1}, m_{n+1}; \vect{c}) \geq 0
\end{equation*}
so \(\overline{\Delta}_{n+1} \to 0\) as \(n \to \infty\).
Hence,
\[0 \leq \sup_{\vect{x} \in \mathcal{X},\, m \in [M]} \overline{\alpha}_{\text{C-MOKG}}^n(\vect{x}, m; \vect{c}) < \frac{1}{c_{m_{n+1}}}\overline{\Delta}_{n+1} + \eta_n \to 0\]
as \(n \to \infty\) almost surely.

We will now use this to show that for all \(\vect{\lambda} \in \Lambda\), \(\alpha_{\text{C-MOKG}}^n(\vect{x}, m;\, \vect{\lambda}, \vect{c}) \to 0\) uniformly in \(\vect{x} \in \mathcal{X}\) and \(m \in \{1, \dots, M\}\), almost surely.
Suppose for contradiction that this does not hold. Then there exists \(\vect{\lambda} \in \Lambda\) such that with non-zero probability
\[\limsup_{n \to \infty} \sup_{\vect{x}\in\mathcal{X},\, m \in [M]} \alpha_{\text{C-MOKG}}^n(\vect{x}, m;\, \vect{\lambda}, \vect{c}) > 0.\]
By \cref{thm:app-cmokg-and-h-lipschitz}, for all \(\vect{\lambda}' \in \Lambda\), and all \(\vect{x} \in \mathcal{X}\), \(m \in \{1, \dots, M\}\) and \(n \in \mathbb{N}_0\),
\[\alpha_{\text{C-MOKG}}^n(\vect{x}, m;\, \vect{\lambda}', \vect{c}) \geq \alpha_{\text{C-MOKG}}^n(\vect{x}, m;\, \vect{\lambda}, \vect{c}) - \frac{2}{c_m} \|\vect{\lambda} - \vect{\lambda}'\|_2 \,\mathbb{E}_n\!\left[ \max_{\vect{x}' \in \mathcal{X}} \|\vect{f}(\vect{x}')\|_2 \right].\]

Now, \(\max_{\vect{x}' \in \mathcal{X}} \|\vect{f}(\vect{x}')\|_2\) is \(L^1\)-integrable and so \(\mathbb{E}_n\left[ \max_{\vect{x}' \in \mathcal{X}} \|\vect{f}(\vect{x}')\|_2 \right] \to \mathbb{E}_\infty\left[ \max_{\vect{x}' \in \mathcal{X}} \|\vect{f}(\vect{x}')\|_2 \right]\) as \(n \to \infty\) almost surely, by L\'evy's zero-one law.
Therefore, there exists a random \(N \in \mathbb{N}_0\) such that, almost surely, for all \(n \geq N\), \(\mathbb{E}_n\left[ \max_{\vect{x}' \in \mathcal{X}} \|\vect{f}(\vect{x}')\|_2 \right] \leq \frac{3}{2} \mathbb{E}_\infty\left[ \max_{\vect{x}' \in \mathcal{X}} \|\vect{f}(\vect{x}')\|_2 \right]\).
That is, we have an upper bound on \(\mathbb{E}_n[\max_{\vect{x}' \in \mathcal{X}} \|\vect{f}(\vect{x}')\|_2]\) which is independent of \(n\).
Thus there exists a small \(L^2\)-ball \(B_{\vect{\lambda}} \subset \Lambda\) centered on \(\vect{\lambda}\) with a strictly positive, random radius such that for all \(\vect{\lambda}' \in B_{\vect{\lambda}}\) and \(n \geq N\) we have
\[\frac{2}{c_m} \|\vect{\lambda} - \vect{\lambda}'\|_2 \,\mathbb{E}_n\!\left[ \max_{\vect{x}' \in \mathcal{X}} \|\vect{f}(\vect{x}')\|_2 \right] < \min(A_{\vect{\lambda}}, 1)\]
where \(A_{\vect{\lambda}} = \frac{1}{2} \limsup_{n \to \infty} \sup_{\vect{x} \in \mathcal{X},\, m \in [M]} \alpha_{\text{C-MOKG}}^n(\vect{x}, m;\, \vect{\lambda}, \vect{c}) > 0\). We have taken the minimum with 1 here to cover the case where \(A_{\vect{\lambda}} = +\infty\).
Thus, for all \(n \geq N\) and \(\vect{\lambda}' \in B_{\vect{\lambda}}\), and all \(\vect{x} \in \mathcal{X}\) and \(m \in \{1, \dots, M\}\),
\[\alpha_{\text{C-MOKG}}^n(\vect{x}, m;\, \vect{\lambda}', \vect{c}) \geq \alpha_{\text{C-MOKG}}^n(\vect{x}, m;\, \vect{\lambda}, \vect{c}) - \min(A_{\vect{\lambda}}, 1).\]
Since C-MOKG is non-negative (\cref{thm:app-cmokg-nonneg}), restricting to \(B_{\vect{\lambda}}\) and integrating gives a lower bound on the expectation for all \(\vect{x} \in \mathcal{X}\), \(m \in \{1, \dots, M\}\) and all \(n \geq N\),
\[\overline{\alpha}_{\text{C-MOKG}}^n(\vect{x}, m;\, \vect{c}) \geq \mathbb{P}(B_{\vect{\lambda}})(\alpha_{\text{C-MOKG}}^n(\vect{x}, m;\, \vect{\lambda}, \vect{c}) - \min(A_{\vect{\lambda}}, 1)).\]
Then, taking the supremum and limit gives
\begin{align*}
    \limsup_{n \to \infty} &\sup_{\vect{x} \in \mathcal{X},\, m \in [M]} \overline{\alpha}_{\text{C-MOKG}}^n(\vect{x}, m;\, \vect{c}) \\
    &\geq \mathbb{P}(B_{\vect{\lambda}}) \left( \limsup_{n \to \infty} \sup_{\vect{x} \in \mathcal{X},\, m \in [M]} \alpha_{\text{C-MOKG}}^n(\vect{x}, m;\, \vect{\lambda}, \vect{c}) - \min(A_{\vect{\lambda}}, 1) \right) \\
    &\geq \frac{1}{2} \mathbb{P}(B_{\vect{\lambda}}) \; \limsup_{n \to \infty} \sup_{\vect{x} \in \mathcal{X},\, m \in [M]} \alpha_{\text{C-MOKG}}^n(\vect{x}, m;\, \vect{\lambda}, \vect{c}) \\
    &> 0.
\end{align*}
By choice of \(\vect{\lambda}\), and because the distribution over \(\Lambda\) is strictly positive, the final strict inequality holds with non-zero probability. Hence we have a contradiction and so we conclude that for all \(\vect{\lambda} \in \Lambda\), \(\sup_{\vect{x} \in \mathcal{X}, m \in [M]} \alpha_{\text{C-MOKG}}^n(\vect{x}, m;\, \vect{\lambda}, \vect{c}) \to 0\) as \(n \to \infty\) almost surely.
\end{proof}

\subsection{Convergence to Optimal Recommendations}
We have shown that, for each of the two acquisition functions, the limit inferior of \(\alpha_{\text{C-MOKG}}^n(\,\cdot\,, \cdot\,;\, \vect{\lambda}, \vect{c})\), is zero almost surely for each \(\vect{\lambda} \in \Lambda\). In fact, for \(\overline{\alpha}_{\text{C-MOKG}}^n\) we have shown almost sure, uniform convergence to 0. However, our aim is a much stronger result.

Recall Equation~\eqref{eq:kg-recommendations} from the main text, which we restate here as Equation~\eqref{eq:app-cmokg-recommendations}. For each \(n \in \mathbb{N}_0\) and each preference vector \(\vect{\lambda} \in \Lambda\), let
\begin{equation} \label{eq:app-cmokg-recommendations}
    \vect{x}_{n, \vect{\lambda}}^* \in \argmax_{\vect{x} \in \mathcal{X}} \mathbb{E}_n[\vect{\lambda} \cdot \vect{f}(\vect{x})]
\end{equation}
be a random variable which maximizes the posterior mean of the scalarized objective at stage \(n\). Thus, assuming no model mismatch, \(\vect{\lambda} \cdot \vect{f}(\vect{x}_{1, \vect{\lambda}}^*),\, \vect{\lambda} \cdot \vect{f}(\vect{x}_{2, \vect{\lambda}}^*), \dots\) is the sequence of (noiseless) scalarized objective values we would obtain if we were to use the recommended point at each stage of the optimization. Presented for the single-objective case as Proposition 4.9 in \cite{bect2009sur}, the next theorem tells us that, this sequence converges to the true maximum of the scalarized, hidden objective function, \(\vect{\lambda} \cdot \vect{f}\).
% Temporarily set the theorem counter to 1 so this gets labelled "Theorem 2" as in the main text
\setcounter{tmpcounter}{\value{theorem}}
\setcounter{theorem}{1}
\begin{theorem}[Restated from main text]\label{thm:app-consistency}
    Define the \(\vect{x}_{n, \vect{\lambda}}^*\) as in Equation~\eqref{eq:app-cmokg-recommendations}.
    When using C-MOKG with either random scalarizations, or expectation over scalarizations, we have
    \[\forall \vect{\lambda} \in \Lambda \quad \vect{\lambda} \cdot \vect{f}(\vect{x}_{n, \vect{\lambda}}^*) \to \max_{\vect{x} \in \mathcal{X}} \vect{\lambda} \cdot \vect{f}(\vect{x}) \quad\text{as}\quad n \to \infty\]
    almost surely and in mean.
\end{theorem}
\setcounter{theorem}{\value{tmpcounter}}

Before showing this, we will first prove an important result establishing almost sure, uniform convergence of the posterior mean and covariance functions of the GP surrogate model, regardless of the acquisition function used. This will be invaluable in proving \cref{thm:app-consistency}.

Let \(\mathcal{F}_\infty = \sigma(\cup_{n=0}^\infty \mathcal{F}_n)\) be the smallest \(\sigma\)-algebra containing all the \(\mathcal{F}_n\) and let \(\mathbb{E}_\infty[\,\cdot\,] = \mathbb{E}[\,\cdot\, |\, \mathcal{F}_\infty]\) denote expectation conditional on all observations and choices.
Also, for each \(n \in \mathbb{N}_0 \cup \{\infty\}\), define the conditional mean and covariance functions of \(\vect{f}\),
\begin{align*}
    \vect{\mu}_n&: \mathcal{X} \to \mathbb{R}^M, \quad \vect{x} \mapsto \mathbb{E}_n[\vect{f}(\vect{x})], \\
    \matr{K}_n&: \mathcal{X} \times \mathcal{X} \to \mathbb{R}^{M \times M}, \quad (\vect{x}, \vect{x}') \mapsto \mathbb{E}_n[\vect{f}(\vect{x}) \vect{f}(\vect{x}')^T] - \mathbb{E}_n[\vect{f}(\vect{x})] \mathbb{E}_n[\vect{f}(\vect{x}')]^T. 
\end{align*}
Note that these are vector- and matrix-valued stochastic processes.

\begin{proposition} \label{thm:app-unif-convergence-cond-gps}
    For any choice of query locations \((\vect{x}_n, m_n)_{n=1}^\infty\), the sequences of stochastic processes \(\vect{\mu}_n \to \vect{\mu}_\infty\) and \(\matr{K}_n \to \matr{K}_\infty\) converge uniformly as \(n \to \infty\), both almost surely and in \(L^p\) for all \(1 \leq p < \infty\). Furthermore, the limits \(\vect{\mu}_\infty\) and \(\matr{K}_\infty\) are continuous.
\end{proposition}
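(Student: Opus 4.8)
The plan is to establish everything pointwise first via martingale convergence, then upgrade to \emph{uniform} convergence by an equicontinuity argument, and finally deduce $L^p$ convergence by domination. Throughout I will use that, because $\vect{f}$ has continuous sample paths on the compact set $\mathcal{X}$, the quantity $G = \max_{\vect{x} \in \mathcal{X}} \|\vect{f}(\vect{x})\|_2$ lies in $L^p$ for every $1 \le p < \infty$ (the Gaussian-supremum fact already invoked in the excerpt via \cite{azais2009levelsets}).

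First I would fix a point $\vect{x}$ (and a pair $(\vect{x}, \vect{x}')$) and components $i, j$. The sequences $(\mathbb{E}_n[f_i(\vect{x})])_n$ and $(\mathbb{E}_n[f_i(\vect{x}) f_j(\vect{x}')])_n$ are closed martingales for the filtration $(\mathcal{F}_n)$, since $f_i(\vect{x})$ and $f_i(\vect{x})f_j(\vect{x}')$ are integrable (in fact in every $L^p$). By L\'evy's upward theorem they converge a.s.\ and in $L^p$ to $\mathbb{E}_\infty[f_i(\vect{x})]$ and $\mathbb{E}_\infty[f_i(\vect{x})f_j(\vect{x}')]$. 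Combining these with the definitions yields pointwise a.s.\ and $L^p$ convergence $\vect{\mu}_n(\vect{x}) \to \vect{\mu}_\infty(\vect{x})$ and $\matr{K}_n(\vect{x}, \vect{x}') \to \matr{K}_\infty(\vect{x}, \vect{x}')$.

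The main obstacle is upgrading pointwise to uniform convergence; the key is a.s.\ equicontinuity of the families $\{\vect{\mu}_n\}_n$ and $\{(\vect{x}, \vect{x}') \mapsto \mathbb{E}_n[f_i(\vect{x}) f_j(\vect{x}')]\}_n$, \emph{uniform in $n$}. Writing $\omega_i(\delta) = \sup_{\|\vect{x} - \vect{x}'\|_2 \le \delta} |f_i(\vect{x}) - f_i(\vect{x}')|$ for the random modulus of continuity of the continuous path $f_i$, conditional Jensen gives
\[
    \sup_{\|\vect{x} - \vect{x}'\|_2 \le \delta} |\mu_n^i(\vect{x}) - \mu_n^i(\vect{x}')| \le \mathbb{E}_n[\omega_i(\delta)] \le \sup_n \mathbb{E}_n[\omega_i(\delta)].
\]
Since $f_i$ is a.s.\ uniformly continuous on the compact $\mathcal{X}$, we have $\omega_i(\delta) \downarrow 0$ a.s.\ as $\delta \downarrow 0$ with $\omega_i(\delta) \le 2G \in L^2$. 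Doob's $L^2$ maximal inequality then gives $\mathbb{E}[(\sup_n \mathbb{E}_n[\omega_i(\delta)])^2] \le 4\, \mathbb{E}[\omega_i(\delta)^2] \to 0$, so the (monotone in $\delta$) bound $\sup_n \mathbb{E}_n[\omega_i(\delta)] \to 0$ a.s.\ as $\delta \downarrow 0$. This is precisely a.s.\ equicontinuity of $\{\vect{\mu}_n\}_n$ uniform in $n$. The same argument applies to $f_i(\vect{x}) f_j(\vect{x}')$ on the compact product space, whose modulus is bounded by $2G(\omega_i(\delta) + \omega_j(\delta)) \to 0$ and dominated by $4G^2 \in L^1$.

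Finally I would assemble the pieces. Taking a countable dense $\mathcal{D} \subseteq \mathcal{X}$, the pointwise convergence holds simultaneously over $\mathcal{D}$ on a probability-one event; intersecting with the equicontinuity event and applying the Arzel\`a--Ascoli characterisation gives a.s.\ \emph{uniform} convergence of $\vect{\mu}_n$ and of $\mathbb{E}_n[f_i(\cdot)f_j(\cdot)]$ to continuous limits. Since $\vect{\mu}_n \to \vect{\mu}_\infty$ uniformly and uniformly boundedly, the products $\mu_n^i(\vect{x})\mu_n^j(\vect{x}')$ converge uniformly, whence $\matr{K}_n \to \matr{K}_\infty$ uniformly a.s.\ with $\matr{K}_\infty$ continuous (for the diagonal one may alternatively note $(\matr{K}_n)_{ii}(\vect{x}, \vect{x})$ is non-increasing in $n$ and invoke Dini's theorem once the limit is known continuous). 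For $L^p$ convergence, $\sup_{\vect{x}} \|\vect{\mu}_n(\vect{x}) - \vect{\mu}_\infty(\vect{x})\|_2 \le 2 \sup_n \mathbb{E}_n[G]$, and the right-hand side lies in $L^p$ by Doob's maximal inequality (choosing any $p' > p$ with $G \in L^{p'}$); dominated convergence together with the a.s.\ uniform convergence then gives $L^p$ uniform convergence, and likewise for $\matr{K}_n$.
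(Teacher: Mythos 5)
Your proof is correct, but it takes a genuinely different route from the paper's. The paper treats \(\vect{f}\) and the second-moment process \((\vect{x},\vect{x}') \mapsto \vect{f}(\vect{x})\vect{f}(\vect{x}')^T\) as random elements of the Banach spaces \(\mathcal{C}(\mathcal{X},\mathbb{R}^M)\) and \(\mathcal{C}(\mathcal{X}\times\mathcal{X},\mathbb{R}^{M\times M})\), checks Bochner \(L^p\)-integrability via the same Gaussian-supremum fact you invoke, and then applies martingale convergence theorems for Banach-space-valued martingales (Theorems 1.14 and 1.30 of \cite{pisier2016martingalesbanach}); almost-sure and \(L^p\) convergence in the supremum norm -- i.e.\ uniform convergence -- and continuity of the limits then all come out in one stroke, with \(\matr{K}_n = \matr{M}^{(2)}_n - \vect{\mu}_n\vect{\mu}_n^T\) handled exactly as you do. You instead work scalar-by-scalar: L\'evy's upward theorem gives pointwise convergence; conditional Jensen applied to the random modulus of continuity, combined with Doob's maximal inequality and monotonicity in \(\delta\), gives equicontinuity uniform in \(n\); an Arzel\`a--Ascoli/uniform-Cauchy argument upgrades this to almost-sure uniform convergence with continuous limits; and domination by \(\sup_n \mathbb{E}_n[G]\) (which lies in every \(L^p\) by Doob) yields the \(L^p\) statements. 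Your route is longer but entirely elementary -- it needs only scalar martingale theory and classical real analysis rather than vector-valued conditional expectation and the Pisier machinery -- and it makes explicit exactly where sample-path continuity and the moments of the Gaussian supremum enter; the paper's route is shorter and cleaner but leans on heavier theory. Two small points you should tighten: in the second-moment step, the Doob \(L^2\) argument requires the product modulus to be dominated in \(L^2\), i.e.\ \(G^2 \in L^2\), not merely \(4G^2 \in L^1\) (harmless here, since every moment of \(G\) is finite); and the inequalities \(\sup_{\|\vect{x}-\vect{x}'\|_2 \le \delta}|\mu_n^i(\vect{x})-\mu_n^i(\vect{x}')| \le \mathbb{E}_n[\omega_i(\delta)]\), being suprema over uncountably many pairs of statements that each hold only almost surely, should be justified by fixing the continuous versions of \(\vect{\mu}_n\) given by the GP regression formula and passing through a countable dense set of pairs.
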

\begin{proof}
Since \(\mathcal{X}\) is compact, the space \(\mathcal{C}(\mathcal{X}, \mathbb{R}^M)\) of continuous functions \(\mathcal{X} \to \mathbb{R}^M\) forms a Banach space when equipped with the supremum norm \(\| \cdot \|_\infty\). The multi-output Gaussian process \(\vect{f}\) has continuous sample paths and so can be viewed as a random element of this space. It is also \(L^p\)-integrable for any \(1 \leq p < \infty\). Indeed, all moments of \(\sup_{\vect{x} \in \mathcal{X},\, m \in [M]} |f_m(\vect{x})|\) are finite%
\footnote{see for example Equation~2.34 in \cite{azais2009levelsets} and view \(\vect{f}\) as an \(\mathbb{R}\)-valued GP on input space \(\mathcal{X} \times [M]\)} and so
\[\|\vect{f}\|_{L^p(\mathcal{C}(\mathcal{X}, \mathbb{R}^M))} = \mathbb{E}\left[\|\vect{f}\|_\infty^p\right]^{1/p} = \mathbb{E}\left[ \left( \sup_{\vect{x}\in\mathcal{X},\, m \in [M]} |f_m(\vect{x})| \right)^p\, \right]^{1/p} < \infty.\]
Observe that the conditional means \(\vect{\mu}_n = \mathbb{E}_n[\vect{f}]\) and \(\vect{\mu}_\infty = \mathbb{E}_\infty[\vect{f}]\) are continuous since Banach spaces are closed under taking conditional expectation (see e.g. Proposition 1.10 from \cite{pisier2016martingalesbanach}). In fact, we have shown that they are a martingale of the form in Theorems 1.14 and 1.30 in \cite{pisier2016martingalesbanach}, and so \(\vect{\mu}_n \to \vect{\mu}_\infty\) almost surely and in \(L^p\). This is a convergence in the function space \(\mathcal{C}(\mathcal{X}, \mathbb{R}^M)\) using the supremum norm, which is equivalent to saying that the convergence of the processes is uniform. That is, \(\vect{\mu}_n \to \vect{\mu}_\infty\) uniformly, both almost surely and in \(L^p\) for any \(1 \leq p < \infty\).

We may use the same argument on the sequence of second moments of \(\vect{f}\) in the Banach space \(\mathcal{C}(\mathcal{X} \times \mathcal{X},\, \mathbb{R}^{M \times M})\). Let \(\matr{M}_n^{(2)}(\vect{x}, \vect{x}') = \mathbb{E}_n[\vect{f}(\vect{x}) \vect{f}(\vect{x}')^T]\) denote the second moments of \(\vect{f}\). The process \(\matr{F}^{(2)} \in \mathcal{C}(\mathcal{X} \times \mathcal{X},\, \mathbb{R}^{M \times M})\) defined by \(\matr{F}^{(2)}(\vect{x}, \vect{x'}) = \vect{f}(\vect{x})\vect{f}(\vect{x}')^T\) is \(L^p\)-integrable for all \(1 \leq p < \infty\) since
\begin{multline*}
    \|\matr{F}^{(2)}\|_{L^p(\mathcal{C}(\mathcal{X} \times \mathcal{X},\, \mathbb{R}^{M \times M}))}
    = \mathbb{E}\biggl[ \Bigl( \sup_{\substack{\vect{x}, \vect{x}' \in \mathcal{X} \\ m, m' \in [M]}} |f_m(\vect{x})f_{m'}(\vect{x}')| \Bigr)^p\, \biggr]^{1/p} \\
    = \mathbb{E}\biggl[ \Bigl( \sup_{\vect{x} \in \mathcal{X},\, m \in [M]} |f_m(\vect{x})| \Bigr)^{2p} \biggr]^{1/p}
    = \|\vect{f}\|_{L^{2p}(\mathcal{C}(\mathcal{X},\, \mathbb{R}^M))}^2
    < \infty.
\end{multline*}
Therefore, we may again apply martingale convergence Theorems 1.14 and 1.30 from \cite{pisier2016martingalesbanach} to conclude that \(\matr{M}_n^{(2)} \to \matr{M}_\infty^{(2)}\) almost surely and in \(L^p\), where we have written \(\matr{M}_\infty^{(2)}(\vect{x}, \vect{x}') = \mathbb{E}_\infty[\vect{f}(\vect{x}) \vect{f}(\vect{x}')^T]\).
Since
\[\forall n \in \mathbb{N}\cup\{\infty\}\; \forall \vect{x}, \vect{x}' \in \mathcal{X} \quad \matr{K}_n(\vect{x}, \vect{x}') = \matr{M}_n^{(2)}(\vect{x}, \vect{x}') - \vect{\mu}_n(\vect{x})\vect{\mu}_n(\vect{x}')^T,\]
we conclude that \(\matr{K}_n \to \matr{K}_\infty\) uniformly, both almost surely and in \(L^p\) for all \(1 \leq p < \infty\).
By the same argument as before, the \(\matr{M}_n^{(2)}\) and \(\matr{M}_\infty^{(2)}\) are continuous as conditional expectations of random elements in the Banach space of continuous functions. So the \(\matr{K}_n\) and \(\matr{K}_\infty\) are also continuous.
\end{proof}

Our strategy to prove \cref{thm:app-consistency} will be to first show that the limiting estimate, \(\vect{\mu}_\infty\), approximates the objective \(\vect{f}\) up to a constant almost surely. That is, we will show that \(\vect{f} - \vect{\mu}_\infty\) has almost surely constant sample paths.
The intuition then is that knowledge of \(\vect{\mu}_\infty\) is sufficient to determine the location of the maximum of \(\vect{f}\). To show that \(\vect{f} - \vect{\mu}_\infty\) has almost surely constant sample paths, we will first show that the limiting covariance function, \(\matr{K}_\infty\), has almost surely constant sample paths.

\begin{lemma} \label{thm:app-const-sample-paths-k}
The limiting covariance function \(\matr{K}_\infty: \mathcal{X} \times \mathcal{X} \to \mathbb{R}^{M \times M}\) has almost surely constant sample paths. That is,
\[\mathbb{P}\bigl(\forall \vect{x}, \vect{u}, \vect{x}', \vect{u}' \in \mathcal{X}\quad \matr{K}_\infty(\vect{u}, \vect{x}) = \matr{K}_\infty(\vect{u}', \vect{x}')\bigr) = 1.\]
\end{lemma}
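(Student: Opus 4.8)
The plan is to pass to the limit in the knowledge gradient and show that its vanishing forces the limiting posterior covariance to be degenerate. Using the reparameterisation behind Equation~\eqref{eq:mokg-discrete-2} (writing the hypothesised observation as \(y_m^{\vect x} = \mu_{n,m}(\vect x) + \zeta\sqrt{(\matr K_n(\vect x,\vect x))_{mm}+\sigma_m^2}\) with \(\zeta \sim \mathcal N(0,1)\)), the scalarised updated mean \(\mu_{n+}^s(\vect x';\vect x,m,\vect\lambda)\) is the affine-in-\(\zeta\) quantity \(\vect\lambda\cdot\vect\mu_n(\vect x') + \beta_n(\vect x')\zeta\), where \(\beta_n(\vect x') = \frac{\vect\lambda\cdot\matr K_n(\vect x',\vect x)\vect e_m}{\sqrt{(\matr K_n(\vect x,\vect x))_{mm}+\sigma_m^2}}\). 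Hence \(c_m\,\alpha_{\text{C-MOKG}}^n(\vect x,m;\vect\lambda,\vect c)\) equals a functional \(G(\vect\mu_n,\matr K_n) := \mathbb E_\zeta[\max_{\vect x'}(\vect\lambda\cdot\vect\mu_n(\vect x') + \beta_n(\vect x')\zeta)] - \max_{\vect x'}\vect\lambda\cdot\vect\mu_n(\vect x')\) of the current posterior mean and covariance. I would define the limiting knowledge gradient \(G_\infty := G(\vect\mu_\infty,\matr K_\infty)\) using the same formula with the limiting processes from \cref{thm:app-unif-convergence-cond-gps}, and the first goal is to show that \(G_\infty = 0\) for every \(\vect x\), \(m\) and \(\vect\lambda\), almost surely.

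First I would argue that \(G\) is continuous with respect to uniform convergence of its arguments. Since \cref{thm:app-unif-convergence-cond-gps} gives \(\vect\mu_n\to\vect\mu_\infty\) and \(\matr K_n\to\matr K_\infty\) uniformly almost surely (for any query sequence), the envelope \(\max_{\vect x'}(\vect\lambda\cdot\vect\mu_n(\vect x') + \beta_n(\vect x')\zeta)\) converges to its limit pointwise in \(\zeta\); it is dominated by \(C(1+|\zeta|)\) for a random finite constant \(C\) (the convergent sequences \(\vect\mu_n,\matr K_n\) are uniformly bounded in \(n\)), which is integrable against the Gaussian density, so dominated convergence yields \(\alpha_{\text{C-MOKG}}^n(\vect x,m;\vect\lambda,\vect c)\to G_\infty/c_m\) along the full sequence. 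Combining this with the vanishing results gives \(G_\infty=0\): in the expectation-over-scalarisations case \cref{thm:app-cmokg-e-unif-zero} already gives \(\sup_{\vect x,m}\alpha_{\text{C-MOKG}}^n\to 0\), while in the random case \cref{thm:app-cmokg-r-unif-zero-liminf} supplies a subsequence along which the supremum tends to \(0\), and a subsequential limit of a convergent sequence equals its full limit. These hold almost surely for each fixed \(\vect\lambda\); to upgrade to ``almost surely for all \(\vect x,m,\vect\lambda\)'' I would prove them on a countable dense set and extend by continuity, using the Lipschitz bound of \cref{thm:app-cmokg-and-h-lipschitz} in \(\vect\lambda\) together with the continuity of \(\vect\mu_\infty,\matr K_\infty\).

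Next I would characterise when \(G_\infty=0\). The map \(\zeta\mapsto g(\zeta):=\max_{\vect x'}(\vect\lambda\cdot\vect\mu_\infty(\vect x') + \beta_\infty(\vect x')\zeta)\) is convex and piecewise linear, and \(G_\infty = \mathbb E_\zeta[g(\zeta)] - g(0)\). Because \(\zeta\) has a density positive on all of \(\mathbb R\), Jensen's inequality is an equality iff \(g\) is affine; and a supremum of affine functions over the compact set \(\mathcal X\) is affine iff every slope \(\beta_\infty(\vect x')\) coincides, since any line of differing slope would overtake the envelope as \(\zeta\to\pm\infty\). Thus \(G_\infty=0\) forces \(\vect x'\mapsto \vect\lambda\cdot\matr K_\infty(\vect x',\vect x)\vect e_m\) to be constant in \(\vect x'\) whenever the denominator is positive; in the degenerate case \((\matr K_\infty(\vect x,\vect x))_{mm}=0\) the same conclusion is immediate from Cauchy--Schwarz, as the numerator then vanishes identically. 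Taking \(\vect\lambda=\vect e_j\in\Lambda\) shows each entry \((\matr K_\infty(\vect x',\vect x))_{jm}\) is constant in \(\vect x'\); by the symmetry \((\matr K_\infty(\vect x',\vect x))_{jm}=(\matr K_\infty(\vect x,\vect x'))_{mj}\) it is likewise constant in \(\vect x\), hence a single random constant. After the usual reduction to a countable dense subset of \(\mathcal X\times\mathcal X\) and extension by continuity of \(\matr K_\infty\), this is exactly the assertion that \(\matr K_\infty\) has almost surely constant sample paths.

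The main obstacle is the continuity of the knowledge-gradient functional \(G\): the inner maximisation over the continuous set \(\mathcal X\) and the Gaussian expectation must both be interchanged with the limit \(n\to\infty\). This is where I would spend the most care, establishing the uniform-in-\(n\) domination \(C(1+|\zeta|)\) and the pointwise convergence of the piecewise-linear envelopes. Everything downstream---the Jensen equality characterisation and the symmetry argument---is then elementary.
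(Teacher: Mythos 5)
Your proposal is correct, and it reaches the paper's key intermediate identity (Equation~\eqref{eq:app-kinfty-const-interim}, the constancy of \(\vect{x}'\mapsto\vect{\lambda}\cdot\vect{k}_{\infty,:,m}(\vect{x}',\vect{x})\)) by a genuinely different route. The paper's proof fixes the posterior maximizer \(\vect{x}_{n,\vect{\lambda}}^*\), forms the conditionally Gaussian variable \(W_{n,\vect{u}}\), bounds \(\mathbb{E}_n[\max(W_{n,\vect{u}},0)]\) by the acquisition value, and converts its vanishing (along a subsequence) into vanishing conditional probabilities \(\mathbb{P}_n(W_{n,\vect{u}}>\delta)\); since the conditional mean of \(W_{n,\vect{u}}\) is eventually bounded below by \cref{thm:app-unif-convergence-cond-gps}, the explicit Gaussian tail formula forces the covariance difference to vanish along that subsequence, and the full limit is then identified by the martingale convergence of \(\matr{K}_n\). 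You instead exploit the exact affine-in-\(\zeta\) reparameterization of the knowledge gradient, prove stability of the resulting functional \(G\) under uniform convergence of \((\vect{\mu}_n,\matr{K}_n)\) so that \(\alpha_{\text{C-MOKG}}^n \to G_\infty/c_m\) along the \emph{whole} sequence, deduce \(G_\infty=0\) from \cref{thm:app-cmokg-r-unif-zero-liminf} or \cref{thm:app-cmokg-e-unif-zero} (your observation that a subsequential limit of a convergent sequence equals its limit neatly absorbs the fact that only a liminf statement is available in the random-scalarization case), and then invoke the equality case of Jensen's inequality: the Gaussian mean of a piecewise-linear convex envelope equals its value at zero iff the envelope is affine iff all slopes coincide. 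Your degenerate-denominator fallback via Cauchy--Schwarz plays exactly the role of the paper's separate \(k_{n,m,m}(\vect{x},\vect{x})=0\) branch, and both proofs finish identically (take \(\vect{\lambda}=\vect{e}_j\), use symmetry of \(\matr{K}_\infty\), extend from a countable dense subset by continuity). The trade-off: your route yields a clean, reusable deterministic characterization of ``zero knowledge gradient'' but must carry the continuity-of-\(G\) lemma you rightly flag as the main technical burden, whereas the paper's route needs no stability analysis of the acquisition functional at all, only monotonicity of posterior variances and uniform convergence of posterior means.
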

\begin{proof}
    This proof makes use of the fact that the scalarizations are linear and thus commute with the expectation operator.

    Let \(\vect{x} \in \mathcal{X}\), \(m \in \{1, \dots, M\}\) and \(\vect{\lambda} \in \Lambda\).
    For each \(n \in \mathbb{N}_0\) and \(\vect{u} \in \mathcal{X}\) let
    \begin{align*}
        W_{n, \vect{u}} &= \mathbb{E}_n[\vect{\lambda} \cdot \vect{f}(\vect{u}) \,|\, y_m^{\vect{x}}] - \mathbb{E}_n[\vect{\lambda} \cdot \vect{f}(\vect{x}_{n, \vect{\lambda}}^*) \,|\, y_m^{\vect{x}}] \\
        &= \vect{\lambda} \cdot  \mathbb{E}_n[\vect{f}(\vect{u}) \,|\, y_m^{\vect{x}}] - \vect{\lambda} \cdot \mathbb{E}_n[\vect{f}(\vect{x}_{n, \vect{\lambda}}^*) \,|\, y_m^{\vect{x}}].
    \end{align*}
    Observe that \(\alpha_{\text{C-MOKG}}^n(\vect{x}, m;\, \vect{\lambda}, \vect{c}) = \frac{1}{c_m} \mathbb{E}_n[\sup_{\vect{x}' \in \mathcal{X}} W_{n, \vect{x}'}]\).
    
    Let \(n \in \mathbb{N}_0\) and \(\vect{u} \in \mathcal{X}\).
    Since \(\sup_{\vect{x}' \in \mathcal{X}} W_{n, \vect{x}'} \geq 0\), we have
    \begin{gather*}
        \sup_{\vect{x}' \in \mathcal{X}} W_{n, \vect{x}'} = \sup_{\vect{x}' \in \mathcal{X}} \max(W_{n, \vect{x}'}, 0) \\
        \begin{split}
            \Rightarrow\qquad
            0 \leq \mathbb{E}_n[\max(W_{n, \vect{u}}, 0)]
            \leq \mathbb{E}_n\!\left[ \sup_{\vect{x}' \in \mathcal{X}} \max(W_{n, \vect{x}'}, 0) \right] \\
            = c_m \alpha_{\text{C-MOKG}}^n(\vect{x}, m;\, \vect{\lambda}, \vect{c}).
        \end{split}
    \end{gather*}
    Applying \cref{thm:app-cmokg-r-unif-zero-liminf} or \cref{thm:app-cmokg-e-unif-zero} depending on the acquisition strategy, we have almost surely \(\liminf_{n \to \infty} \sup_{\vect{x} \in \mathcal{X},\, m \in [M]} \alpha_{\text{C-MOKG}}^n(\vect{x}, m;\, \vect{\lambda}, \vect{c}) = 0\). Therefore, \(\liminf_{n \to \infty} \mathbb{E}_n[\max(W_{n, \vect{u}}, 0)] = 0\) almost surely.
    This implies that, almost surely,
    \begin{equation}\label{eq:app-w-converges-in prob}
        \forall \delta > 0 \quad \liminf_{n \to \infty} \mathbb{P}_n(W_{n, \vect{u}} > \delta) = 0,
    \end{equation}
    a statement closely resembling the definition of convergence in probability.
    Here, the (random) distribution \(\mathbb{P}_n\) is defined as the conditional expectation of the indicator variables. That is, for every event \(A \in \mathcal{F}\), \(\mathbb{P}_n(A) = \mathbb{E}_n[\mathbb{I}_A]\).
    Writing \(\vect{k}_{\infty, :, m}\) for the \(m\)\textsuperscript{th} column of \(\matr{K}_\infty\), we will use Equation~\eqref{eq:app-w-converges-in prob} to show that
    \begin{equation} \label{eq:app-kinfty-const-interim}
        \vect{\lambda} \cdot \vect{k}_{\infty,:,m}(\vect{u}, \vect{x}) = \vect{\lambda} \cdot \vect{k}_{\infty,:,m}(\vect{x}, \vect{x})
    \end{equation}
    almost surely -- something made significantly easier by noting that \(W_{n, \vect{u}}\) is a Gaussian variable.

    Write \(k_{n,m,m}\) for the \((m, m)\)\textsuperscript{th} element of \(\matr{K}_n\).
    If \(k_{n,m,m}(\vect{x}, \vect{x}) = 0\) then \(k_{n',m,m}(\vect{x}, \vect{x}) = 0\) for all \(n' \geq n\) and, by Cauchy-Schwarz, \(k_{n',m',m}(\vect{u}, \vect{x}) = 0\) for all \(m' \in \{1, \dots, M\}\) and all \(n' \geq n\) as well. Hence, \(\vect{k}_{n,:,m}(\vect{u}, \vect{x}) - \vect{k}_{n,:,m}(\vect{x}, \vect{x}) = 0\) for all sufficiently large \(n\) and so applying \cref{thm:app-unif-convergence-cond-gps} we have established Equation~\eqref{eq:app-kinfty-const-interim}.
    
    So we focus on the event where \(k_{n,m,m}(\vect{x}, \vect{x}) > 0\) for all \(n\).
    Using the formula for the conditional mean of a Gaussian process in Equation~\eqref{eq:gp-posterior-mean}, we can express \(W_{n, \vect{u}}\) as
    \begin{multline*}
        W_{n, \vect{u}} = \vect{\lambda} \cdot \Biggl( \vect{\mu}_n(\vect{u}) - \vect{\mu}_n(\vect{x}_{n, \vect{\lambda}}^*) \\
        + \frac{\vect{k}_{n, :, m}(\vect{u}, \vect{x}) - \vect{k}_{n, :, m}(\vect{x}_{n, \vect{\lambda}}^*,\, \vect{x})}{k_{n, m, m}(\vect{x}, \vect{x}) + \sigma_m^2} (y_m^{\vect{x}} - \mu_{n,m}(\vect{x})) \Biggr)
    \end{multline*}
    where \(y_m^{\vect{x}} \sim \mathcal{N}(\mu_{n, m}(\vect{x}),\, k_{n,m,m}(\vect{x}, \vect{x}) + \sigma_m^2)\) is the (random) value observed when we sample the \(m\)\textsuperscript{th} objective at \(\vect{x}\).
    Hence, conditional on \(\mathcal{F}_n\), the variable \(W_{n, \vect{u}}\) itself follows a normal distribution with mean and covariance given by
    \begin{align*}
        \mathbb{E}_n[W_{n, \vect{u}}] &= \vect{\lambda} \cdot (\vect{\mu}_n(\vect{u}) - \vect{\mu}_n(\vect{x}_{n, \vect{\lambda}}^*)), \\
        \mathrm{Var}_n[W_{n, \vect{u}}] &= \frac{( \vect{\lambda} \cdot [\vect{k}_{n, :, m}(\vect{u}, \vect{x}) - \vect{k}_{n, :, m}(\vect{x}_{n, \vect{\lambda}}^*, \vect{x})])^2}{k_{n, m, m}(\vect{x}, \vect{x}) + \sigma_m^2}.
    \end{align*}
    Whence, for any \(\delta > 0\),
    \begin{multline*}
        \mathbb{P}_n(W_{n, \vect{u}} > \delta) \\
        = 1 - \Phi\left( \frac{\sqrt{k_{n, m, m}(\vect{x}, \vect{x}) + \sigma_m^2}}{|\vect{\lambda} \cdot (\vect{k}_{n, :, m}(\vect{u}, \vect{x}) - \vect{k}_{n, :, m}(\vect{x}_{n, \vect{\lambda}}^*, \vect{x}))|} [\delta - \vect{\lambda} \cdot (\vect{\mu}_{n}(\vect{u}) - \vect{\mu}_n(\vect{x}_{n, \vect{\lambda}}^*))]\right)
    \end{multline*}
    where \(\Phi\) is the cumulative density function of a standard normal variable.
    
    We have from \cref{thm:app-unif-convergence-cond-gps} that \(\vect{\mu}_n \to \vect{\mu}_\infty\) uniformly as \(n \to \infty\), almost surely.
    Further, \(\vect{\mu}_\infty\) is continuous and \(\mathcal{X}\) is compact. Therefore, there exists a negative, random variable \(a\) (not depending on \(n\)) such that, for sufficiently large \(n\), \(\mathbb{E}_n[W_{n, \vect{u}}] = \vect{\lambda} \cdot (\vect{\mu}_n(\vect{u}) - \vect{\mu}_n(\vect{x}_{n, \vect{\lambda}}^*)) > a\) almost surely. For example. take any \(a < \min \vect{\lambda} \cdot \vect{\mu}_\infty - \max \vect{\lambda} \cdot \vect{\mu}_\infty\).
    Thus,
    \[\mathbb{P}_n(W_{n, \vect{u}} > \delta) \geq 1 - \Phi\left( \frac{\sqrt{k_{n, m, m}(\vect{x}, \vect{x}) + \sigma_m^2}}{|\vect{\lambda} \cdot (\vect{k}_{n, :, m}(\vect{u}, \vect{x}) - \vect{k}_{n, :, m}(\vect{x}_{n, \vect{\lambda}}^*, \vect{x}))|} (\delta - a) \right).\]
    By Equation~\eqref{eq:app-w-converges-in prob}, there exists a (random) subsequence \((n_j)_{j=0}^\infty\) such that the probabilities \(\mathbb{P}_{n_j}(W_{n_j, \vect{u}} > \delta) \to 0\) as \(j \to \infty\).
    Therefore, on this subsequence, \(\frac{\sqrt{k_{n_j, m, m}(\vect{x}, \vect{x}) + \sigma_m^2}}{|\vect{\lambda} \cdot (\vect{k}_{n_j, :, m}(\vect{u}, \vect{x}) - \vect{k}_{n_j, :, m}(\vect{x}_{n_j, \vect{\lambda}}^*, \vect{x}))|} \to \infty\) as \(j \to \infty\).
    Since \((k_{n,m,m}(\vect{x}, \vect{x}))_{n=1}^\infty\) is a decreasing sequence, we must then have \(\vect{\lambda} \cdot (\vect{k}_{n_j, :, m}(\vect{u}, \vect{x}) - \vect{k}_{n_j, :, m}(\vect{x}_{n_j, \vect{\lambda}}^*, \vect{x})) \to 0\) almost surely.
    This holds for all \(\vect{u}\), including the case \(\vect{u} = \vect{x}\), and so for all \(\vect{u} \in \mathcal{X}\),
    \begin{multline*}
        \vect{\lambda} \cdot (\vect{k}_{n_j, :, m}(\vect{u}, \vect{x}) - \vect{k}_{n_j, :, m}(\vect{x}, \vect{x})) = \\
        \biggl(\vect{\lambda} \cdot (\vect{k}_{n_j, :, m}(\vect{u}, \vect{x}) - \vect{k}_{n_j, :, m}(\vect{x}_{n_j, \vect{\lambda}}^*, \vect{x})) \\
        - \vect{\lambda} \cdot (\vect{k}_{n_j, :, m}(\vect{x}, \vect{x}) - \vect{k}_{n_j, :, m}(\vect{x}_{n_j, \vect{\lambda}}^*, \vect{x})) \biggr)
        \to  0
    \end{multline*}
    as \(j \to \infty\) almost surely.
    By \cref{thm:app-unif-convergence-cond-gps}, the sequence \(\vect{\lambda} \cdot (\vect{k}_{n, :, m}(\vect{u}, \vect{x}) - \vect{k}_{n, :, m}(\vect{x}, \vect{x})) \to \vect{\lambda} \cdot (\vect{k}_{\infty, :, m}(\vect{u}, \vect{x}) - \vect{k}_{\infty, :, m}(\vect{x}, \vect{x}))\) as \(n \to \infty\).
    We have shown that this limit is zero, since the limit of a subsequence is zero.
    This is Equation~\eqref{eq:app-kinfty-const-interim}.
    
    Hence, for all \(\vect{u} \in \mathcal{X}\), \(\vect{\lambda} \cdot \vect{k}_{\infty, :, m}(\vect{u}, \vect{x}) = \vect{\lambda} \cdot \vect{k}_{\infty, :, m}(\vect{x}, \vect{x})\).
    This also holds for all \(\vect{\lambda} \in \Lambda\) and \(m \in \{1, \dots, M\}\), so \(\matr{K}_\infty(\vect{u}, \vect{x}) = \matr{K}_\infty(\vect{x}, \vect{x})\). Furthermore, it holds for all \(\vect{x} \in \mathcal{X}\) and so, using the symmetry of \(\matr{K}_\infty\) in its arguments, we have,
    \[\forall \vect{x}, \vect{u}, \vect{x}', \vect{u}' \in \mathcal{X} \quad \mathbb{P}\bigl(\matr{K}_\infty(\vect{u}, \vect{x}) = \matr{K}_\infty(\vect{u}', \vect{x}')\bigr) = 1.\]
    
    To extend this to a statement about all choices of \(\vect{x}, \vect{u} \in \mathcal{X}\) simultaneously, observe that \(\mathcal{X}\) is a compact metric space and thus is separable. That is, it has a countable, dense subset \(\mathcal{X}' \subset \mathcal{X}\).
    By countable subadditivity of \(\mathbb{P}\), we have
    \[\mathbb{P}\bigl(\forall \vect{x}, \vect{u}, \vect{x}', \vect{u}' \in \mathcal{X}' \quad \matr{K}_\infty(\vect{u}, \vect{x}) = \matr{K}_\infty(\vect{u}', \vect{x}')\bigr) = 1.\]
    Then, by continuity of the sample paths of \(\matr{K}_\infty\), this extends to all of \(\mathcal{X}\),
    \[\mathbb{P}\bigl(\forall \vect{x}, \vect{u}, \vect{x}', \vect{u}' \in \mathcal{X} \quad \matr{K}_\infty(\vect{u}, \vect{x}) = \matr{K}_\infty(\vect{u}', \vect{x}')\bigr) = 1.\]
    That is, the sample paths of \(\matr{K}_\infty\) are almost surely constant.
\end{proof}

\begin{lemma} \label{thm:app-const-sample-paths-f}
    If \(\matr{K}_\infty\) has almost surely constant sample paths then \(\vect{f} - \vect{\mu}_\infty\) has almost surely constant sample paths.
\end{lemma}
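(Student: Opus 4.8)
The plan is to exploit that $\matr{K}_\infty$ is, by definition, the conditional covariance function of $\vect{f}$ given $\mathcal{F}_\infty$. If its sample paths are constant, then every increment of $\vect{f} - \vect{\mu}_\infty$ has vanishing conditional covariance; since such an increment also has zero conditional mean, it must be almost surely zero. I would carry this out pointwise first and then upgrade to a simultaneous statement over all of $\mathcal{X}$ using separability and continuity, exactly as at the end of the proof of \cref{thm:app-const-sample-paths-k}.

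First I would fix $\vect{x}, \vect{x}' \in \mathcal{X}$ and set $\vect{Z} = \bigl(\vect{f}(\vect{x}) - \vect{\mu}_\infty(\vect{x})\bigr) - \bigl(\vect{f}(\vect{x}') - \vect{\mu}_\infty(\vect{x}')\bigr)$. Because $\vect{\mu}_\infty = \mathbb{E}_\infty[\vect{f}]$ is $\mathcal{F}_\infty$-measurable, it behaves as a constant under $\mathbb{E}_\infty$, so $\mathbb{E}_\infty[\vect{Z}] = \vect{0}$ and the conditional second moment of $\vect{Z}$ coincides with the conditional covariance of $\vect{f}(\vect{x}) - \vect{f}(\vect{x}')$. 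Expanding the latter directly from the definition of $\matr{K}_\infty$ gives
\[
    \mathbb{E}_\infty\!\bigl[\vect{Z}\vect{Z}^T\bigr] = \matr{K}_\infty(\vect{x}, \vect{x}) - \matr{K}_\infty(\vect{x}, \vect{x}') - \matr{K}_\infty(\vect{x}', \vect{x}) + \matr{K}_\infty(\vect{x}', \vect{x}').
\]
By \cref{thm:app-const-sample-paths-k}, the four matrices on the right are almost surely equal, so the right-hand side vanishes almost surely. Summing the diagonal (equivalently, taking the trace) yields $\mathbb{E}_\infty\bigl[\|\vect{Z}\|_2^2\bigr] = 0$, and since $\|\vect{Z}\|_2^2 \geq 0$ this forces $\vect{Z} = \vect{0}$ almost surely. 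Hence $\vect{f}(\vect{x}) - \vect{\mu}_\infty(\vect{x}) = \vect{f}(\vect{x}') - \vect{\mu}_\infty(\vect{x}')$ almost surely, for the fixed pair. I would note that this step uses only the existence of conditional second moments (available since $\vect{f}$ is $L^2$-integrable) and not Gaussianity of the conditional law.

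Finally I would pass from this pointwise conclusion to one holding simultaneously for all pairs. As $\mathcal{X}$ is a compact metric space it is separable, so it admits a countable dense subset $\mathcal{X}' \subset \mathcal{X}$; applying the pointwise result to each of the countably many pairs in $\mathcal{X}' \times \mathcal{X}'$ and invoking countable subadditivity of $\mathbb{P}$ produces a single almost-sure event on which $\vect{f} - \vect{\mu}_\infty$ is constant across $\mathcal{X}'$. Since $\vect{f}$ has continuous sample paths and $\vect{\mu}_\infty$ is continuous by \cref{thm:app-unif-convergence-cond-gps}, the difference $\vect{f} - \vect{\mu}_\infty$ is continuous, and its constancy extends from the dense set $\mathcal{X}'$ to all of $\mathcal{X}$.

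I expect the only genuinely delicate point to be this last passage from \emph{``for each fixed pair, almost surely''} to \emph{``almost surely, for all pairs''}: an uncountable union of null events need not be null, so the separability-plus-continuity argument is essential rather than cosmetic. The covariance identity is a routine expansion, and the vanishing of the conditional second moment is immediate once $\matr{K}_\infty$ is known to be constant, so the bulk of the work is organizational.
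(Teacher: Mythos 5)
Your proposal is correct and follows essentially the same route as the paper's proof: fix a pair of points, show the increment of \(\vect{f} - \vect{\mu}_\infty\) has zero conditional mean and zero conditional covariance (expanded via \(\matr{K}_\infty\) and \cref{thm:app-const-sample-paths-k}), conclude it vanishes almost surely, and then upgrade to all pairs via separability and sample-path continuity. The only cosmetic difference is that the paper passes to the unconditional variance via the law of total variance, while you take the trace of the conditional second moment and invoke non-negativity; these are interchangeable.
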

\begin{proof}
Let \(\vect{u}, \vect{x} \in \mathcal{X}\). Then
\begin{multline*}
    \mathrm{Var}_\infty[(\vect{f}(\vect{u}) - \vect{\mu}_\infty(\vect{u})) - (\vect{f}(\vect{x}) - \vect{\mu}_\infty(\vect{x}))] \\
    = \mathrm{Var}_\infty[\vect{f}(\vect{u}) - \vect{f}(\vect{x})]
    = \underbrace{\matr{K}_\infty(\vect{u}, \vect{u}) - \matr{K}_\infty(\vect{x}, \vect{u})}_{0 \;\text{a.s.}} + \underbrace{\matr{K}_\infty(\vect{x}, \vect{x}) - \matr{K}_\infty(\vect{u}, \vect{x})}_{0 \;\text{a.s.}}
    = 0.
\end{multline*}
Noting that \(\mathbb{E}_\infty[(\vect{f}(\vect{u}) - \vect{\mu}_\infty(\vect{u})) - (\vect{f}(\vect{x}) - \vect{\mu}_\infty(\vect{x}))] = 0\) almost surely, we can extend this result to the unconditional variance using the law of total variance
\begin{align*}
    \mathrm{Var}[(\vect{f}(\vect{u}) - \vect{\mu}_\infty(\vect{u})&) - (\vect{f}(\vect{x}) - \vect{\mu}_\infty(\vect{x}))] \\
    &= \mathbb{E}\left[ \underbrace{\mathrm{Var}_\infty[(\vect{f}(\vect{u}) - \vect{\mu}_\infty(\vect{u})) - (\vect{f}(\vect{x}) - \vect{\mu}_\infty(\vect{x}))]}_{ = 0 \;\text{a.s.} } \right] \\
    &\quad+ \mathrm{Var}\left[ \underbrace{\mathbb{E}_\infty[(\vect{f}(\vect{u}) - \vect{\mu}_\infty(\vect{u})) - (\vect{f}(\vect{x}) - \vect{\mu}_\infty(\vect{x}))]}_{= 0 \;\text{a.s.}} \right] \\
    &= 0.
\end{align*}
Therefore, \((\vect{f}(\vect{u}) - \vect{\mu}_\infty(\vect{u})) - (\vect{f}(\vect{x}) - \vect{\mu}_\infty(\vect{x}))\) is a random vector with zero mean and a zero covariance matrix, and so \(\vect{f}(\vect{u}) - \vect{\mu}_\infty(\vect{u}) = \vect{f}(\vect{x}) - \vect{\mu}_\infty(\vect{x})\) almost surely.

For each pair \(\vect{x}, \vect{u} \in \mathcal{X}\), this holds with probability 1. To conclude that it holds with probability 1 for all \(\vect{x}, \vect{u} \in \mathcal{X}\) simultaneously, we use the same argument as at the end of \cref{thm:app-const-sample-paths-k}. For this, we need only observe that \(\mathcal{X}\) is separable and that the sample paths of \(\vect{f}\) and \(\vect{\mu}_\infty\) are continuous.
\end{proof}

Having shown that \(\vect{\mu}_\infty\) approximates \(\vect{f}\) up to a constant, we may now strengthen \cref{thm:app-cmokg-r-unif-zero-liminf}. This is not necessary for the proof of \cref{thm:app-consistency}, but does improve our understanding of the behavior of the MOKG.
\begin{corollary} \label{thm:app-cmokg-r-unif-zero}
    Suppose we select samples using C-MOKG with random scalarization weights \(\vect{\lambda}_1, \vect{\lambda}_2, \dots\) chosen independently according to distribution \(p(\vect{\lambda})\).
    Then,
    \[\forall \vect{\lambda} \in \Lambda \quad \mathbb{P}\left( \sup_{\vect{x} \in \mathcal{X},\, m \in [M]} \alpha_{\text{C-MOKG}}^n(\vect{x}, m;\, \vect{\lambda}, \vect{c}) \to 0 \quad\text{as}\quad n \to \infty \right) = 1.\]
\end{corollary}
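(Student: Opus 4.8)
The plan is to bypass the subsequence machinery of \cref{thm:app-cmokg-r-unif-zero-liminf} entirely and instead exploit the fact that the acquisition function is a fixed, explicit functional of the posterior mean and covariance \((\vect{\mu}_n, \matr{K}_n)\). These are now known to converge uniformly almost surely (\cref{thm:app-unif-convergence-cond-gps}) to a limit whose covariance \(\matr{K}_\infty\) has almost surely constant sample paths (\cref{thm:app-const-sample-paths-k}). Concretely, I would derive a deterministic upper bound on \(c_m\,\alpha_{\text{C-MOKG}}^n(\vect{x}, m;\, \vect{\lambda}, \vect{c})\) that is uniform in \(\vect{x}\), \(m\) and \(\vect{\lambda}\) and is controlled purely by \(\|\matr{K}_n - \matr{K}_\infty\|_\infty\), and then let \(n \to \infty\). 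Note that this argument does not rely on the random/expectation distinction at all; it upgrades the \(\liminf\) to a genuine limit simply because the covariance can no longer contract.

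To obtain the bound, I would reparameterize the hypothesised observation as \(y_m^{\vect{x}} = \mu_{n,m}(\vect{x}) + \zeta\sqrt{k_{n,m,m}(\vect{x},\vect{x}) + \sigma_m^2}\) with \(\zeta \sim \mathcal{N}(0,1)\), so that \(\mu_{n+}^s(\vect{x}'; \vect{x}, m, \vect{\lambda}) = \vect{\lambda}\cdot\vect{\mu}_n(\vect{x}') + \zeta\, b_n(\vect{x}')\) is affine in \(\zeta\), where \(b_n(\vect{x}') = \vect{\lambda}\cdot\vect{k}_{n,:,m}(\vect{x}',\vect{x}) / \sqrt{k_{n,m,m}(\vect{x},\vect{x}) + \sigma_m^2}\). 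Then \(c_m\,\alpha_{\text{C-MOKG}}^n\) is the expectation over \(\zeta\) of the pointwise maximum over \(\vect{x}'\) of these lines, minus the maximum of their intercepts. Because \(\mathbb{E}[\zeta] = 0\), the value is unchanged when every slope is shifted by a common constant; subtracting \(b_n(\vect{x})\) and bounding the maximum of a sum by the sum of maxima gives
\[
    c_m\,\alpha_{\text{C-MOKG}}^n(\vect{x}, m;\, \vect{\lambda}, \vect{c}) \leq \mathbb{E}|\zeta| \, \sup_{\vect{x}' \in \mathcal{X}} |b_n(\vect{x}') - b_n(\vect{x})| = \sqrt{2/\pi}\,\sup_{\vect{x}' \in \mathcal{X}} |b_n(\vect{x}') - b_n(\vect{x})|.
\]

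It then remains to show \(\sup_{\vect{x}, \vect{x}', m} |b_n(\vect{x}') - b_n(\vect{x})| \to 0\) almost surely. Writing the numerator difference as a posterior covariance, \(\vect{\lambda}\cdot(\vect{k}_{n,:,m}(\vect{x}',\vect{x}) - \vect{k}_{n,:,m}(\vect{x},\vect{x})) = \sum_{m'}\lambda_{m'}\,\mathrm{Cov}_n(f_{m'}(\vect{x}') - f_{m'}(\vect{x}),\, f_m(\vect{x}))\), the Cauchy--Schwarz inequality bounds each term by \(\sqrt{\mathrm{Var}_n(f_{m'}(\vect{x}') - f_{m'}(\vect{x}))}\,\sqrt{k_{n,m,m}(\vect{x},\vect{x})}\). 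Dividing by \(\sqrt{k_{n,m,m}(\vect{x},\vect{x}) + \sigma_m^2} \geq \sqrt{k_{n,m,m}(\vect{x},\vect{x})}\) cancels the second factor and leaves \(|b_n(\vect{x}') - b_n(\vect{x})| \leq \sum_{m'}\lambda_{m'}\sqrt{\mathrm{Var}_n(f_{m'}(\vect{x}') - f_{m'}(\vect{x}))}\), a bound that conveniently avoids any lower bound on the denominator. Since \(\mathrm{Var}_n(f_{m'}(\vect{x}') - f_{m'}(\vect{x}))\) expands into four entries of \(\matr{K}_n\) whose limits coincide (as \(\matr{K}_\infty\) is constant by \cref{thm:app-const-sample-paths-k}), the uniform convergence \(\matr{K}_n \to \matr{K}_\infty\) from \cref{thm:app-unif-convergence-cond-gps} yields \(\sup_{\vect{x},\vect{x}',m'}\mathrm{Var}_n(f_{m'}(\vect{x}') - f_{m'}(\vect{x})) \leq 4\|\matr{K}_n - \matr{K}_\infty\|_\infty \to 0\) almost surely. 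As \(\|\vect{\lambda}\|_1 = 1\), this forces the slope spread, and hence the entire upper bound, to zero uniformly in \(\vect{x}\), \(m\) and \(\vect{\lambda}\).

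Combining this with non-negativity (\cref{thm:app-cmokg-nonneg}) sandwiches \(\sup_{\vect{x}, m}\alpha_{\text{C-MOKG}}^n(\vect{x}, m;\, \vect{\lambda}, \vect{c})\) between \(0\) and \((2\sqrt{2/\pi}/\min_m c_m)\,\sqrt{\|\matr{K}_n - \matr{K}_\infty\|_\infty}\), which converges to \(0\) almost surely on the single full-measure event where both \cref{thm:app-unif-convergence-cond-gps,thm:app-const-sample-paths-k} hold; since that event does not depend on \(\vect{\lambda}\), the conclusion holds for all \(\vect{\lambda} \in \Lambda\) simultaneously. I expect the main subtlety to be book-keeping: ensuring every bound is taken uniformly over \(\vect{x}\), \(m\) and \(\vect{x}'\) at once so that the suprema---not merely the pointwise values---converge, which is exactly why the sup-norm (rather than pointwise) convergence furnished by \cref{thm:app-unif-convergence-cond-gps} is essential. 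The Cauchy--Schwarz step is the one piece of real care, as it is what makes the argument robust to noiseless objectives (\(\sigma_m = 0\)), where a naive bound dividing by \(\sigma_m\) would break down.
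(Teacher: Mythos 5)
Your proof is correct, but it takes a genuinely different route from the paper's. The paper's proof works at the level of the residual uncertainty \(H_n(\vect{\lambda})\): it combines L\'evy's zero-one law with the uniform convergence \(\vect{\mu}_n \to \vect{\mu}_\infty\) from \cref{thm:app-unif-convergence-cond-gps} to get \(H_n(\vect{\lambda}) \to H_\infty(\vect{\lambda})\), identifies \(H_\infty(\vect{\lambda}) = 0\) using the almost surely constant sample paths of \(\vect{f} - \vect{\mu}_\infty\) (\cref{thm:app-const-sample-paths-f}), and then sandwiches \(\sup_{\vect{x}, m} \alpha_{\text{C-MOKG}}^n\) between \(0\) and \(H_n(\vect{\lambda})\) via \cref{thm:app-hn-bigger-than-cmokg}. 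You instead bound the acquisition function directly: the reparameterization turns \(c_m \alpha_{\text{C-MOKG}}^n\) into the expected maximum of affine lines in \(\zeta\); the slope-shift invariance (valid because \(\mathbb{E}[\zeta] = 0\) and the shift \(b_n(\vect{x})\) is constant in \(\vect{x}'\), so it factors out of the maximum) reduces everything to the slope spread \(\sup_{\vect{x}'} |b_n(\vect{x}') - b_n(\vect{x})|\); and the Cauchy--Schwarz cancellation against the denominator leaves increment variances \(\mathrm{Var}_n\bigl(f_{m'}(\vect{x}') - f_{m'}(\vect{x})\bigr)\), which vanish uniformly because \(\matr{K}_n \to \matr{K}_\infty\) uniformly (\cref{thm:app-unif-convergence-cond-gps}) and \(\matr{K}_\infty\) has constant sample paths (\cref{thm:app-const-sample-paths-k}). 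Your route needs only the covariance-level facts---it never touches \cref{thm:app-const-sample-paths-f} or L\'evy's law at this step---and it buys something slightly stronger than the stated corollary: an explicit quantitative bound \((2\sqrt{2/\pi}/\min_m c_m)\sqrt{\|\matr{K}_n - \matr{K}_\infty\|_\infty}\) that is uniform in \(\vect{\lambda}\) and holds on a single full-measure event, whereas the paper's formulation allows the null set to depend on \(\vect{\lambda}\). The paper's proof, in exchange, is shorter given the lemmas already in place, since it simply reuses the supermartingale object \(H_n\) that organises the whole appendix.

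Two small caveats. First, your opening claim to ``bypass the subsequence machinery of \cref{thm:app-cmokg-r-unif-zero-liminf} entirely'' overstates the independence: \cref{thm:app-const-sample-paths-k}, on which your argument rests, is itself proved by invoking that liminf theorem, so the logical dependence on the subsequence argument remains (the same is true of the paper's proof, which depends on it through \cref{thm:app-const-sample-paths-f}). Second, your reparameterization implicitly assumes \(k_{n,m,m}(\vect{x}, \vect{x}) + \sigma_m^2 > 0\); in the degenerate case of a noiseless objective with zero posterior variance at \(\vect{x}\), the hypothesized observation is deterministic given \(\mathcal{F}_n\), so \(\mu_{n+}^s = \mu_n^s\) and \(\alpha_{\text{C-MOKG}}^n(\vect{x}, m;\, \vect{\lambda}, \vect{c}) = 0\), meaning your bound holds trivially---a one-line remark would close this.
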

\begin{proof}
    Let \(\vect{\lambda} \in \Lambda\).
    By L\'evy's zero-one law,
    \[\mathbb{E}_n \Bigl[\max_{\vect{x} \in \mathcal{X}} \vect{\lambda} \cdot \vect{f}(\vect{x}) \Bigr] \to \mathbb{E}_\infty \Bigl[\max_{\vect{x} \in \mathcal{X}} \vect{\lambda} \cdot \vect{f}(\vect{x}) \Bigr]\]
    as \(n \to \infty\) almost surely.
    Further, we showed in \cref{thm:app-unif-convergence-cond-gps} that \(\vect{\mu}_n \to \vect{\mu}_\infty\) uniformly as \(n \to \infty\) almost surely.
    Therefore, the residual uncertainty also converges,
    \begin{multline*}
        H_n(\vect{\lambda}) = \mathbb{E}_n \!\left[\max_{\vect{x} \in \mathcal{X}} \vect{\lambda} \cdot \vect{f}(\vect{x})\right] - \max_{\vect{x} \in \mathcal{X}} \vect{\lambda} \cdot \vect{\mu}_n(\vect{x}) \\
        \to \mathbb{E}_\infty \!\left[\max _{\vect{x} \in \mathcal{X}}\vect{\lambda} \cdot \vect{f}(\vect{x})\right] - \max_{\vect{x} \in \mathcal{X}} \vect{\lambda} \cdot \vect{\mu}_\infty(\vect{x}) = H_\infty(\vect{\lambda})
    \end{multline*}
    as \(n \to \infty\) almost surely.
    However, from \cref{thm:app-const-sample-paths-f}, the sample paths of \(\vect{f} - \vect{\mu}_\infty\) are almost surely constant and so \(\max_{\vect{x} \in \mathcal{X}} \vect{\lambda} \cdot \vect{f}(\vect{x}) = \vect{\lambda} \cdot \vect{f}(\vect{x}_{\infty, \vect{\lambda}}^*)\) almost surely.
    Hence,
    \[H_n(\vect{\lambda}) \to H_\infty(\vect{\lambda}) = \mathbb{E}_\infty[\vect{\lambda} \cdot \vect{f}(\vect{x}_{\infty, \vect{\lambda}}^*)] - \vect{\lambda} \cdot \vect{\mu}_\infty (\vect{x}_{\infty, \vect{\lambda}}^*) = 0 \quad\text{(a.s.)}.\]
    Finally, we note that by \cref{thm:app-hn-bigger-than-cmokg},
    \[0 \leq \sup_{\vect{x} \in \mathcal{X},\, m \in [M]} \alpha_{\text{C-MOKG}}^n(\vect{x}, m;\, \vect{\lambda}, \vect{c}) \leq H_n(\vect{\lambda}) \to 0\]
    and so \(\sup_{\vect{x} \in \mathcal{X},\, m \in [M]} \alpha_{\text{C-MOKG}}^n(\vect{x}, m;\, \vect{\lambda}, \vect{c}) \to 0\) as \(n \to \infty\) almost surely.
\end{proof}

Following \cref{thm:app-const-sample-paths-f}, we can now also prove \cref{thm:app-consistency}.
\begin{proof}[Proof of \cref{thm:app-consistency}]
Let \(\vect{\lambda} \in \Lambda\).
Recall that in Equation~\eqref{eq:app-cmokg-recommendations}, for each \(n\), we have defined \(\vect{x}_{n, \vect{\lambda}}^* \in \argmax \mathbb{E}_n[\vect{\lambda} \cdot \vect{f}]\) to be a random variable which maximizes the posterior mean of the scalarized objective at stage \(n\). Similarly, let \(\vect{x}_{\vect{\lambda}}^* \in \argmax \vect{\lambda} \cdot \vect{f}\) be a random variable maximizing the scalarized objective \(\vect{\lambda} \cdot \vect{f}\). Note that these choices are not necessarily unique.

We wish to first show that \(\vect{\lambda} \cdot \vect{f}(\vect{x}_{n, \vect{\lambda}}^*) \to \vect{\lambda} \cdot \vect{f}(\vect{x}_{\vect{\lambda}}^*)\) almost surely.
By definition of \(\vect{x}_{\vect{\lambda}}^*\), the quantity \(\vect{\lambda} \cdot \vect{f}(\vect{x}_{\vect{\lambda}}^*) - \vect{\lambda} \cdot \vect{f}(\vect{x}_{n, \vect{\lambda}}^*)\) is non-negative for each \(n\).
Therefore, it suffices to prove that, almost surely, \(\limsup_{n \to \infty} \vect{\lambda} \cdot \vect{f}(\vect{x}_{\vect{\lambda}}^*) - \vect{\lambda} \cdot \vect{f}(\vect{x}_{n, \vect{\lambda}}^*) \leq 0\).

Observe that we may split the limit superior to give
\begin{align*}
    \limsup_{n \to \infty} &\; \vect{\lambda} \cdot \vect{f}(\vect{x}_{\vect{\lambda}}^*) - \vect{\lambda} \cdot \vect{f}(\vect{x}_{n, \vect{\lambda}}^*) \\
    \leq& \phantom{+}\limsup_{n \to \infty}\;
        \Bigl(\vect{\lambda} \cdot \vect{f}(\vect{x}_{\vect{\lambda}}^*) - \vect{\lambda} \cdot \vect{\mu}_\infty(\vect{x}_{\vect{\lambda}}^*)\Bigr)
        - \Bigl(\vect{\lambda} \cdot \vect{f}(\vect{x}_{n, \vect{\lambda}}^*) - \vect{\lambda} \cdot \vect{\mu}_\infty(\vect{x}_{n, \vect{\lambda}}^*)\Bigr) \\
    & + \limsup_{n \to \infty}\;
        \Bigl(\vect{\lambda} \cdot \vect{\mu}_\infty(\vect{x}_{\vect{\lambda}}^*) - \vect{\lambda} \cdot \vect{\mu}_n(\vect{x}_{\vect{\lambda}}^*)\Bigl)
        - \Bigl(\vect{\lambda} \cdot \vect{\mu}_\infty(\vect{x}_{n, \vect{\lambda}}^*) - \vect{\lambda} \cdot \vect{\mu}_n(\vect{x}_{n, \vect{\lambda}}^*)\Bigr) \\
    & + \limsup_{n \to \infty}\; \vect{\lambda} \cdot \vect{\mu}_n(\vect{x}_{\vect{\lambda}}^*) - \vect{\lambda} \cdot \vect{\mu}_n(\vect{x}_{n, \vect{\lambda}}^*).
\end{align*}
By \cref{thm:app-const-sample-paths-k,thm:app-const-sample-paths-f}, the sample paths of \(\vect{\lambda} \cdot \vect{f} - \vect{\lambda} \cdot \vect{\mu}_\infty\) are almost surely constant and so the first line above is almost surely zero.
The middle line is also almost surely zero since \(\vect{\mu}_n \to \vect{\mu}_\infty\) uniformly almost surely by \cref{thm:app-unif-convergence-cond-gps}.
Finally, the bottom line is at most zero by definition of the \(\vect{x}_{n, \vect{\lambda}}^*\).
Therefore
\[\limsup_{n \to \infty} \vect{\lambda} \cdot \vect{f}(\vect{x}_{\vect{\lambda}}^*) - \vect{\lambda} \cdot \vect{f}(\vect{x}_{n, \vect{\lambda}}^*) \leq 0\]
as desired and we conclude that \(\vect{\lambda} \cdot \vect{f}(\vect{x}_{n, \vect{\lambda}}^*) \to \vect{\lambda} \cdot \vect{f}(\vect{x}_{\vect{\lambda}}^*)\) almost surely.

Convergence in mean follows from the Dominated Convergence Theorem with dominating variable \(\sup_{\vect{x}} |\vect{\lambda} \cdot \vect{f}(\vect{x})|\).
\end{proof}

\newpage
\section{Further Experimental Details}
\label{sec:experiment-details}
This section contains further technical details of the experiments described in the main text.

\subsection{Test Problems}
Two families of test problem are presented, \(\vect{f}^*: [0, 1]^2 \to \mathbb{R}^2\). They are generated by sampling a GP for each objective for an example). In the first family, the objectives differ in their length scale, while in the second family they differ in the presence of additive, Gaussian observation noise. \cref{tab:app-test-problems} shows the hyper-parameters used for both GPs. \cref{fig:test-problem-lengthscales} and \cref{fig:test-problem-observationnoise} below show examples of a test problem from the first and second families respectively.

When we sample the GP, we do not know the future locations at which the BO algorithm will try to sample it. To get around this, we sample each GP at 100 points, distributed over \([0, 1]^2\) using a scrambled, 2-dimensional Sobol' sequence. We then use the posterior mean of the GP after conditioning on these points, as the test function.

\begin{table}[h]
    \caption{Hyper-parameters for the Gaussian processes used to generate the two families of test problem.}
    \label{tab:app-test-problems}
    \vskip 0.15in
    \begin{center}
    \begin{small}
    \begin{sc}
    \begin{tabular}{lcccc}
    \toprule
        & \multicolumn{2}{c}{Family 1} & \multicolumn{2}{c}{Family 2} \\
        \cmidrule(lr){2-3}
        \cmidrule(lr){4-5}
        & Objective 1 & Objective 2 & Objective 1 & Objective 2 \\
    \midrule
        Kernel & Mat\'ern-\(5/2\) & Mat\'ern-\(5/2\) & Mat\'ern-\(5/2\) & Mat\'ern-\(5/2\) \\
        Isotropic length scale & 0.2 & 1.8 & 0.4 & 0.4 \\
        Output scale & 1 & 50 & 1 & 1 \\
        Constant mean & 0 & 0 & 0 & 0 \\
        Noise standard deviation & 0 & 0 & 1 & 0 \\
    \bottomrule
    \end{tabular}
    \end{sc}
    \end{small}
    \end{center}
    \vskip -0.1in
\end{table}

\begin{figure*}[ht]


    \centering
    \begin{subfigure}[b]{0.66\textwidth}
        \centering
        \includegraphics[width=\textwidth]{images/test-problem-contour--lengthscales.pdf}
        \caption{Contours}
        \label{fig:test-problem-lengthscales-contour}
    \end{subfigure}
    \begin{subfigure}[b]{0.33\textwidth}
        \centering
        \includegraphics[width=\textwidth]{images/test-problem-pfront--lengthscales.pdf}
        \caption{Pareto front}
        \label{fig:test-problem-lengthscales-pfront}
    \end{subfigure}
    \caption{An example from the first set of test problems generated as samples of a GP. Panel~(\subref*{fig:test-problem-lengthscales-contour}) shows how the first objective is made harder to learn by giving the GP a shorter length scale.
    Panel~(\subref*{fig:test-problem-lengthscales-pfront}) shows the Pareto front of the test problem. 
    %Also shown is a tangent to the Pareto front corresponding to a particular preference vector, \(\vect{\lambda}\) (dashed orange). A decision maker whose preferences can be modelled using linear scalarizations will only choose points which intersect with the convex hull of the Pareto front, shown by the thicker blue line.
    }
    \label{fig:test-problem-lengthscales}
\end{figure*}

\begin{figure*}[ht]
    \vskip 0.2in
    \begin{center}
    \begin{subfigure}[b]{0.66\textwidth}
        \centering
        \includegraphics[width=\textwidth]{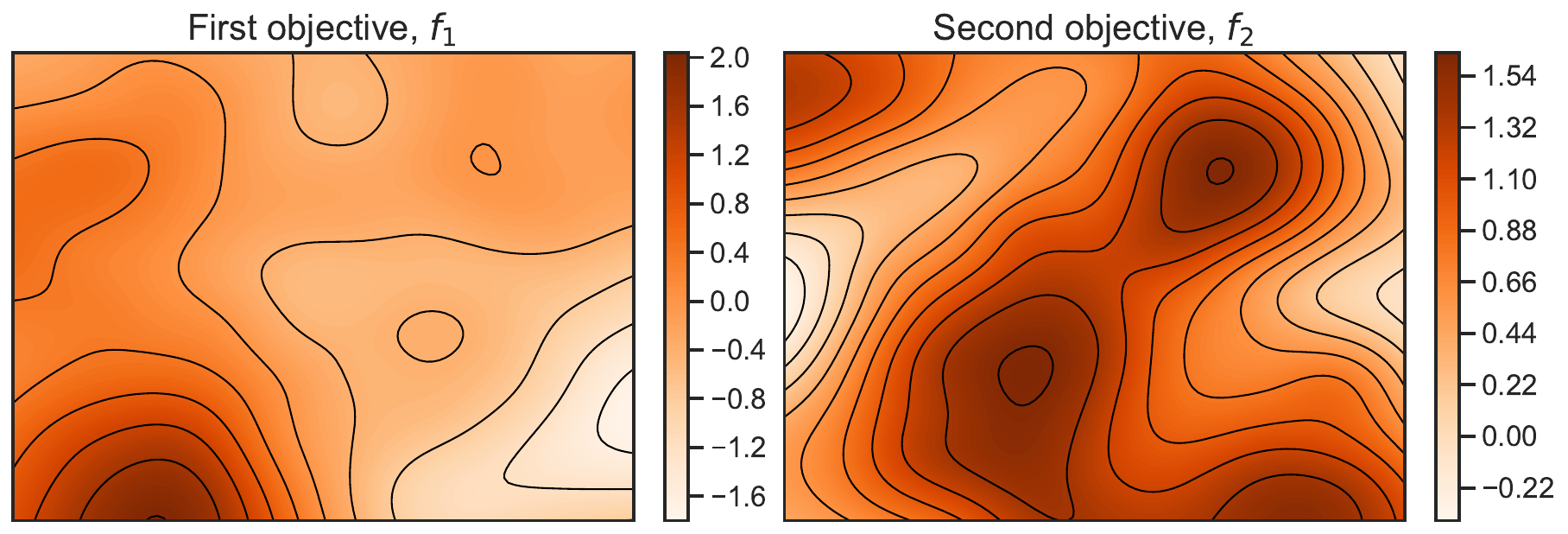}
        \caption{Contours}
        \label{fig:test-problem-observationnoise-contour}
    \end{subfigure}
    \begin{subfigure}[b]{0.33\textwidth}
        \centering
        \includegraphics[width=\textwidth]{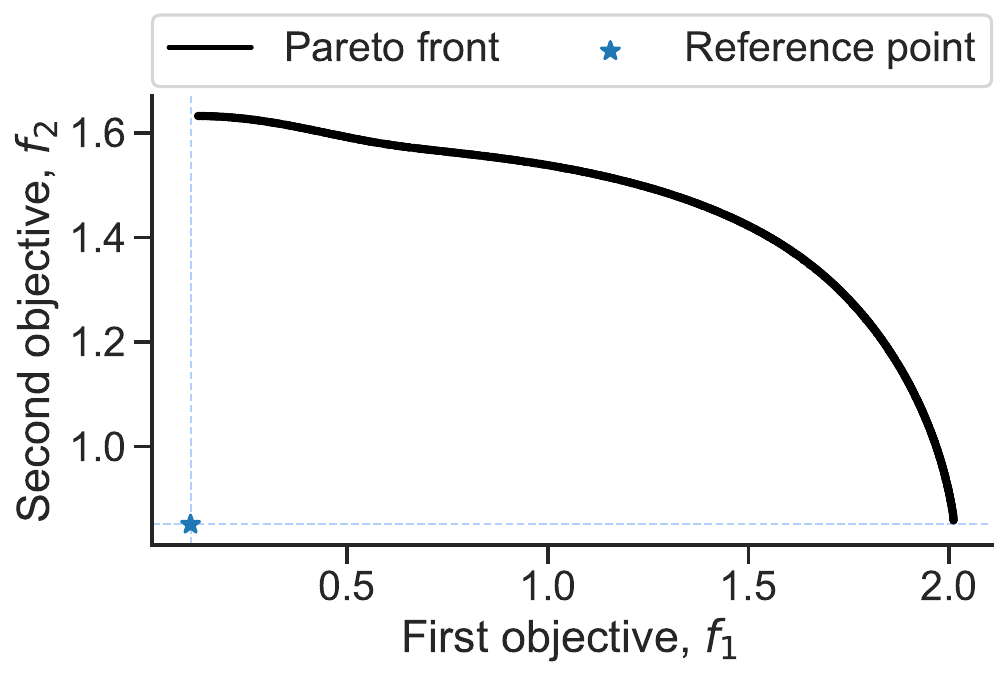}
        \caption{Pareto front}
        \label{fig:test-problem-observationnoise-pfront}
    \end{subfigure}
    \caption{An example from the second set of test problems generated as samples of a GP. Panel~(\subref*{fig:test-problem-observationnoise-contour}) shows that both objectives have the same length scale. Instead, the first objective is made harder-to-learn by adding observation noise when sampling.
    Panel~(\subref*{fig:test-problem-observationnoise-pfront}) shows the Pareto front of the test problem.
    }
    \label{fig:test-problem-observationnoise}
    \end{center}
    \vskip -0.2in
\end{figure*}

\subsection{Surrogate Model}
To model the objectives during the Bayesian optimization, we use independent GPs for the objectives. We use a Mat\'ern-\(5/2\) kernel and at each step we fit the hyper-parameters to the data observed so far using maximum a posteriori (MAP) estimates.
Before fitting the GP, the observations are standardized to have zero mean and unit variance. This is a common trick used for real-world problems where the prior distributions must be specified based on the data. The reverse transformation is applied when making predictions with the GP.

\paragraph{Prior mean} The prior mean of the GP is modeled as a constant function. Bayesian optimization seeks to sample points with large objective values and so fitting the prior mean to this biased sample would introduce a bias to the inferred value. Therefore, the mean was fitted as part of the MAP estimate on the initial six data points, but then held fixed at this value for all future iterations while the other hyper-parameters were fitted.

\paragraph{Observation noise} The observation model in Equation~\eqref{eq:obs-model-decoupled} contains additive Gaussian observation noise. The variance for this observation noise is only fitted for the first objective in the second family of test problems, since this is the only objective where noise is added. In the first family of test problems and for the second objective of the second family, the variance of the noise was fixed at a negligible value of \(10^{-4}\). It was not set to zero since this leads to numerical instability. Usually experimenters know whether their problem is stochastic or deterministic, so we view this as the most natural way to model the objectives.

\paragraph{Prior distributions}
For the length scales, output scales and observation noise variance (where it is fitted), a Gamma prior distribution is used. The Gamma distribution is parameterized by a shape parameter \(\alpha\) and a rate parameter \(\beta\). It is supported on \((0, \infty)\) and has probability density function given by
\begin{equation} \label{eq:app-gamma-pdf}
    p(z) = \frac{\beta^\alpha}{\Gamma(\alpha)}z^{\alpha - 1} e^{-\beta z}
\end{equation}
where \(\Gamma\) is the gamma function.

The GP prior mean is modeled as a constant function. A uniform distribution across the whole real line is used as an improper prior for this constant. This is implemented by simply not adding any contribution from the prior distribution on the GP prior mean to the marginal log-likelihood.

As explained in the main text, information on which objective has the shorter length scale is included in the prior distributions used for the first family of test problems. Conversely, the priors on the length scales of the objectives in the second problem are identical. \cref{fig:app-lengthscale-priors--family-1,fig:app-lengthscale-priors--family-2} show the prior distributions on the length scales used to model the two families of test problems.

Prior distributions for all the hyper-parameters are summarized in \cref{tab:app-surrogate-hyperparameters--family-1,tab:app-surrogate-hyperparameters--family-2}.

\begin{figure}[h]
    \centering
    \includegraphics[width=0.7\textwidth]{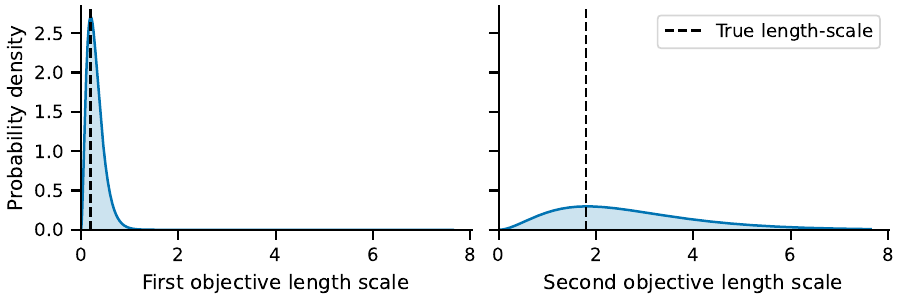}
    \caption{Prior distributions on the length scales for the surrogate model used for the first family of test problems. These encode prior information that the first objective has a shorter length scale.}
    \label{fig:app-lengthscale-priors--family-1}
\end{figure}
\begin{figure}[h]
    \centering
    \includegraphics[width=0.7\textwidth]{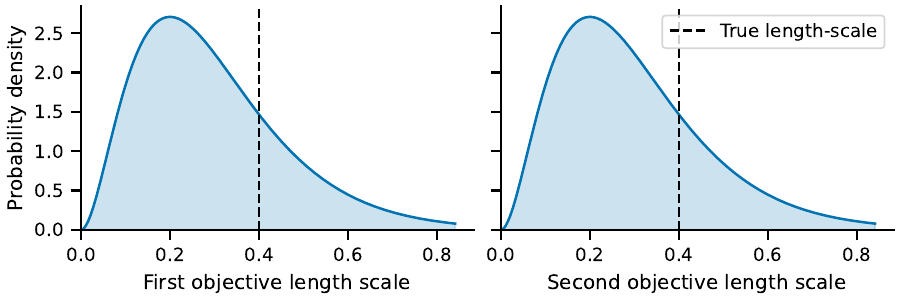}
    \caption{Prior distributions on the length scales for the surrogate model used for the second family of test problems. The same prior is used for both objectives.}
    \label{fig:app-lengthscale-priors--family-2}
\end{figure}

\begin{table}[h!]
    \caption{Prior distributions used for the hyper-parameters of the surrogate model for the standardized data in the first family of test problems.}
    \label{tab:app-surrogate-hyperparameters--family-1}
    \vskip 0.15in
    \begin{center}
    \begin{small}
    \begin{sc}
    \begin{tabular}{lcc}
    \toprule
        & Objective 1 & Objective 2 \\
    \midrule
        Kernel & Mat\'ern-\(5/2\) & Mat\'ern-\(5/2\) \\
        Isotropic length scale & \(\mathrm{Gamma}(\alpha=3, \beta=10)\) & \(\mathrm{Gamma}(\alpha=3, \beta=1.1)\) \\
        Output scale & \(\mathrm{Gamma}(\alpha=2, \beta=0.15)\) & \(\mathrm{Gamma}(\alpha=2, \beta=0.15)\) \\
        Constant mean & No prior & No prior \\
        Noise variance & Fixed at \(10^{-4}\) & Fixed at \(10^{-4}\) \\
    \bottomrule
    \end{tabular}
    \end{sc}
    \end{small}
    \end{center}
    \vskip -0.1in
\end{table}

\begin{table}[h!]
    \caption{Prior distributions used for the hyper-parameters of the surrogate model for the standardized data in the second family of test problems.}
    \label{tab:app-surrogate-hyperparameters--family-2}
    \vskip 0.15in
    \begin{center}
    \begin{small}
    \begin{sc}
    \begin{tabular}{lcc}
    \toprule
        & Objective 1 & Objective 2 \\
    \midrule
        Kernel & Mat\'ern-\(5/2\) & Mat\'ern-\(5/2\) \\
        Isotropic length scale & \(\mathrm{Gamma}(\alpha=3, \beta=10)\) & \(\mathrm{Gamma}(\alpha=3, \beta=10)\) \\
        Output scale & \(\mathrm{Gamma}(\alpha=2, \beta=0.15)\) & \(\mathrm{Gamma}(\alpha=2, \beta=0.15)\) \\
        Constant mean & No prior & No prior \\
        Noise variance & \(\mathrm{Gamma}(\alpha=1.1, \beta=0.05)\) & Fixed at \(10^{-4}\) \\
    \bottomrule
    \end{tabular}
    \end{sc}
    \end{small}
    \end{center}
    \vskip -0.1in
\end{table}

\subsection{Random Numbers}
When comparing C-MOKG to the benchmark algorithm, we tried to keep as much as possible the same throughout the experiment. Consequently,
\begin{itemize}
    \item the algorithms were tested on the same 100 test problems in each family;
    \item each test problem was assigned a different set of six initial points, which were used for both algorithms;
    \item the same sequence of points were used in the Monte-Carlo approximations to approximate the expectation over \(\vect{\lambda}\) in both algorithms.
\end{itemize}

\end{document}